\definecolor{BuyerBlue}{RGB}{49,130,189}
\definecolor{SellerOrange}{RGB}{255,255,0}
\definecolor{PlatformGreen}{RGB}{49,163,84}
\definecolor{NegRed}{RGB}{203,24,29}
\newacronym{ml}{ML}{machine learning}
\newacronym{sif}{SIF}{standard interference function}
\newtheorem{definition}{Definition}
\newtheorem{theorem}{Theorem}
\newtheorem{lemma}{Lemma}
\newtheorem{proposition}{Proposition}
\newtheorem{corollary}{Corollary}
\newcommand{\linebreakand}{%
  \end{@IEEEauthorhalign}
  \hfill\mbox{}\par
  \mbox{}\hfill\begin{@IEEEauthorhalign}
}
\def\BibTeX{{\rm B\kern-.05em{\sc i\kern-.025em b}\kern-.08em
    T\kern-.1667em\lower.7ex\hbox{E}\kern-.125emX}}
\begin{document}

\title{TripleWin: Fixed-Point Equilibrium Pricing for Data-Model Coupled Markets
\thanks{Correspondence to Yun Xiong (yunx@fudan.edu.cn) and Lei You (leiyo@dtu.dk)}
}

\author{
\IEEEauthorblockN{1\textsuperscript{st} Hongrun Ren}
\IEEEauthorblockA{
\textit{Fudan University}\\
Shanghai, China \\
renhr20@fudan.edu.cn}
\and
\IEEEauthorblockN{2\textsuperscript{nd} Yun Xiong  \textnormal{\textsuperscript\Letter}}
\IEEEauthorblockA{
\textit{Fudan University}\\
Shanghai, China \\
yunx@fudan.edu.cn}
\and
\IEEEauthorblockN{3\textsuperscript{rd} Lei You \textnormal{\textsuperscript\Letter}} 
\IEEEauthorblockA{
\textit{Technical University of Denmark}\\
Ballerup, Denmark \\
leiyo@dtu.dk} 
  \linebreakand 
\IEEEauthorblockN{4\textsuperscript{th} Yingying Wang}
\IEEEauthorblockA{
\textit{Fudan University}\\
Shanghai, China \\
22210240301@m.fudan.edu.cn}
\and
\IEEEauthorblockN{5\textsuperscript{th} Haixu Xiong}
\IEEEauthorblockA{
\textit{Fudan University}\\
Shanghai, China \\
22110240113@m.fudan.edu.cn}
\and
\IEEEauthorblockN{6\textsuperscript{th} Hao Niu}
\IEEEauthorblockA{
\textit{Fudan University}\\
Shanghai, China \\
hniu@fudan.edu.cn}
\and
\IEEEauthorblockN{7\textsuperscript{th} Xin Wang
}
\IEEEauthorblockA{
\textit{Tianjin University}\\
Tianjin, China \\
wangx@tju.edu.cn}
\and
\IEEEauthorblockN{8\textsuperscript{th} Yangyong Zhu}
\IEEEauthorblockA{
\textit{Fudan University}\\
Shanghai, China \\
yyzhu@fudan.edu.cn}
}

\maketitle

\begin{abstract}
The rise of the machine learning (ML) model economy has intertwined markets for training datasets and pre-trained models. 
However, most pricing approaches still separate data and model transactions or rely on broker-centric pipelines that favor one side. 
Recent studies of data markets with externalities capture buyer interactions but do not yield a simultaneous and symmetric mechanism across data sellers, model producers, and model buyers. 
We propose a unified \emph{data–model coupled market} that treats dataset and model trading as a single system. 
A supply side mapping transforms dataset payments into buyer visible model quotations, while a demand side mapping propagates buyer prices back to datasets through Shapley-based allocation. 
Together, they form a closed loop that links four interactions: supply–demand propagation in both directions and mutual coupling among buyers and among sellers. 
We prove that the joint operator is a \emph{standard interference function (SIF)}, guaranteeing existence, uniqueness, and global convergence of equilibrium prices. 
Experiments demonstrate efficient convergence and improved fairness compared with broker-centric and one-sided baselines. The code is available on \faGithub~\textit{\footnotesize \url{https://github.com/HongrunRen1109/Triple-Win-Pricing}}.
\end{abstract}

\begin{IEEEkeywords}
Data–model coupled market, Triple win pricing, Standard interference function, Fixed point equilibrium, Bidirectional price formation.
\end{IEEEkeywords}

\section{Introduction}
\label{sec:intro}

Training modern \gls{ml} models requires expertise, large scale datasets, and substantial computational resources~\cite{DBLP:journals/jsac/PanSWLLB25,DBLP:journals/pvldb/LiuLL0PS21,DBLP:journals/kais/CongLPZZ22,DBLP:journals/tkde/Pei22}. 
Many organizations therefore outsource model development or purchase pre-trained models from specialized providers, which has fostered a growing model economy and online marketplaces~\cite{modelplace,datarade}. 
Within this economy there have historically been two separate venues. 
A data market connects data sellers with data buyers and studies how datasets are priced and valued~\cite{DBLP:conf/icde/PengMFZDY25,DBLP:journals/tifs/LuHCYJL25,DBLP:conf/ec/AgarwalDS19,DBLP:conf/icml/ChenLX22}. 
A model market connects model producers with model buyers and studies how trained models are transacted~\cite{DBLP:conf/sigmod/ChenK019,DBLP:journals/tdsc/WengWCHW22,DBLP:journals/fi/QianMBGLY25}. 
An important integration effort relies on intermediaries. 
In broker dominated approaches the platform sets prices and redistributes revenue, either by first fixing data side payments and then marking up model prices or by first setting model prices and then allocating revenue back to data sellers~\cite{DBLP:journals/pvldb/LiuLL0PS21,DBLP:conf/infocom/SunCLH22}. 
Parallel to this line, recent theory models fixed price data markets with buyer to buyer externalities and studies platform transaction fees that internalize externalities and deliver equilibria with welfare guarantees~\cite{DBLP:journals/iotj/WangT24}. 
Despite important insights, these approaches do not give a symmetric and simultaneous mechanism that forms prices across the two layers of data and models.

When the two venues are decoupled or when a single intermediary dictates prices, feedback between training costs and market demand is either delayed or imposed unilaterally~\cite{DBLP:conf/cscwd/GaoWGZJYY25}. 
In practice higher dataset payments raise a producer’s training cost and should increase the model price observed by buyers. 
Stronger buyer bids reveal demand and should increase the compensation of high contribution datasets. 
If the feedback does not circulate inside the mechanism, fairness becomes fragile because one side can capture surplus at the expense of the other sides, and efficiency suffers because price signals do not propagate cleanly through the system. 
A simultaneous mechanism is needed, one that treats the two layers as parts of a single market with symmetric roles for data sellers, model producers, and model buyers.

We address this gap by developing a \emph{data–model coupled market} in which training data transactions and model transactions are two interdependent layers of one market. 
Our key modeling decision is to work at the level of per-use payments. 
For each dataset and model pair we introduce a per-use payment that a model pays to a dataset when the dataset is used in training. 
For each model and buyer pair we introduce a buyer specific model price. 
Prices in the two layers are connected through two quotation mappings that formalize how economic information flows through the market. 
A supply side mapping transforms the vector of dataset payments and the producer’s target margin into model quotations seen by buyers. 
A demand side mapping transforms the vector of buyer prices into dataset payments by allocating an effective training time revenue according to Shapley contributions of datasets to models. We also consider producer's margin in the entire market. This design removes the need for a privileged intermediary and lets prices communicate costs and values across the network.

\begin{figure}[t]
  \centering
  \includegraphics[width=\linewidth]{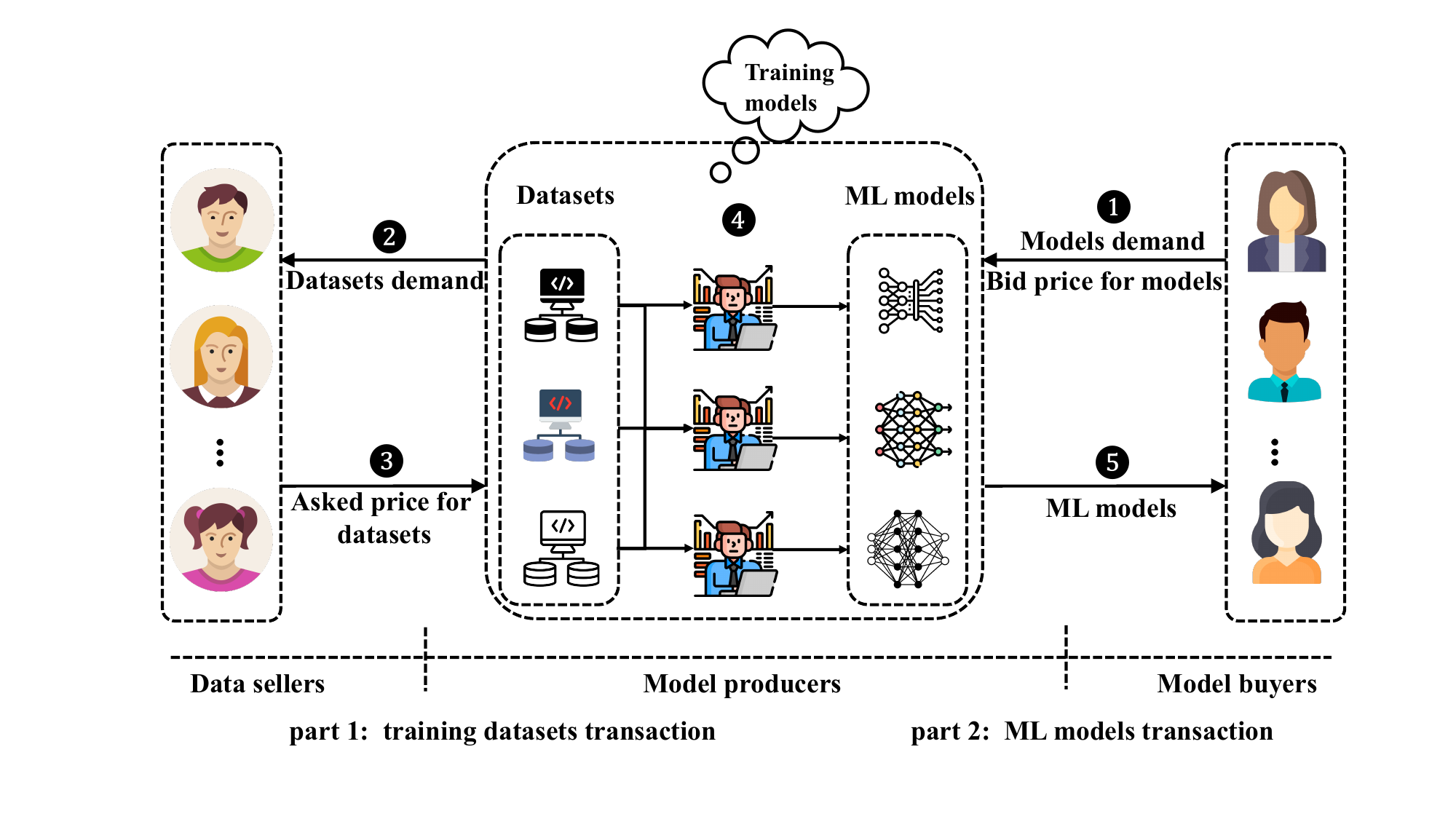} 
  \caption{Workflow of the data–model coupled market cleared by \textsc{TripleWin}.
  (1) Buyers submit bids prices. 
  (2) Producers select datasets and request licenses. 
  (3) Data sellers post asks prices. 
  (4) Models are trained on licensed data. 
  (5) Trained models are delivered to buyers.
  \textsc{TripleWin} clears dataset licensing and model sales simultaneously by aligning buyer-side and data-side quotations.}
  \label{fig:procedure}
\end{figure}

The central problem we solve is the \emph{triple win pricing problem} in a multi-buyer and multi-seller environment, as illustrated in \figurename~\ref{fig:procedure}.
The goal is to reach prices that the \emph{three parties accept simultaneously}. 
Data sellers must receive per-use compensation consistent with their contribution and their operational costs. 
Model producers (referred to also as ``broker'') must recover training costs and achieve target margins in a sustainable way. 
Model buyers must purchase at prices aligned with their willingness to pay and the realized utility of models. 
We formalize this objective as an alignment between individual posted prices and market quotations. 
When the discrepancy vanishes, posted prices coincide with market quotations and the market clears. 
This alignment is equivalent to a fixed point condition for the joint operator that aggregates the two quotation mappings.

Beyond proposing a simultaneous pricing view, we contribute a theory that guarantees the bidirectional mappings yield a well-defined and computable equilibrium. 
We prove that the joint operator is a \emph{\gls{sif}}~\cite{yates2002framework}. 
Because the mappings are affine with nonnegative coefficients and strictly positive offsets, the operator is positive, monotone, and scalable. 
These properties imply the existence and uniqueness of a fixed point for the simultaneous update of all prices and imply global convergence of a simple fixed-point iteration, including block alternating and asynchronous updates that mirror distributed implementations. 
This foundation provides a principled counterpart to earlier heuristic bargaining loops and complements recent work on platform side fee mechanisms for externality control in data markets~\cite{DBLP:journals/pvldb/ZhuZZLR24,DBLP:journals/tmc/ZhengMXW24,DBLP:journals/ton/SunLCH24}. 
In contrast to broker-centric systems~\cite{DBLP:journals/pvldb/LiuLL0PS21,DBLP:conf/infocom/SunCLH22}, our mechanism is symmetric across the three parties, operating directly with per-use prices at the dataset to model level and with buyer-specific model prices.

The significance is both conceptual and operational. 
Conceptually, we treat data and model pricing as a single, simultaneous process in which price signals propagate across layers in both directions, with buyers influencing one another through revenue aggregation for a model, and sellers influencing one another through the total training expenditure of a model. 
Operationally, the mechanism yields a single self-consistent price vector under broad conditions and supports transparent and repeatable transactions. 
Within the lines of work represented by~\cite{DBLP:journals/tdsc/WengWCHW22,DBLP:conf/infocom/SunCLH22}, a multi-buyer and multi-seller mechanism that operates with per-use payments at the dataset to model level, that closes the price formation loop in both directions, and that comes with a fixed point analysis ensuring a unique equilibrium has not been formalized in this way. 
Our framework therefore complements data(supply)-first and buyer(demand)-first pipelines by offering a unified and theory-driven alternative that is simple to implement yet provides convergence guarantees.

We summarize our contributions as follows.
\begin{itemize}
\item We introduce a data–model coupled market that integrates dataset and model trading into one simultaneous pricing environment with symmetric roles for data sellers, model producers, and model buyers.
\item We define the triple win objective as agreement between posted prices and market quotations and we instantiate bidirectional quotation mappings that propagate price information across the two layers using Shapley-based allocation on the data side.
\item We prove that the joint pricing operator is an SIF and that a unique fixed point exists; the fixed-point iteration converges globally, including in alternating block and asynchronous forms.
\item We implement the mechanism with per-use prices at the dataset to model level and we show experimentally that it converges efficiently and improves transaction success and fairness relative to broker-centric and one-sided baselines.
\end{itemize}

\noindent \emph{Organization.}
Section~\ref{related} introduces the related work. Section~\ref{sec:pricing} formulates the triple win pricing problem and defines the bidirectional mappings. 
Section~\ref{sec:theory} develops the theoretical guarantees for existence, uniqueness, and convergence. 
Section~\ref{sec:experiments} reports the empirical evaluation. 
Section~\ref{sec:conclusion} concludes.

\section{Related Work}
\label{related}

The data trading market, particularly in the context of \gls{ml} and model transactions, has become a hot research topic in recent years. Traditional studies have primarily focused on independent pricing mechanisms for data markets~\cite{DBLP:journals/tifs/ChristensenPP23,DBLP:journals/tkde/MiaoGCPYL22,DBLP:journals/compsec/ShenGSDDZZJ22,DBLP:conf/icde/FuMPNDY24,DBLP:journals/tdsc/LuZR24,DBLP:journals/jcst/LiuZWC24} and model markets~\cite{DBLP:conf/ant/BatainehMBB16,DBLP:conf/aciids/StahlV16,DBLP:journals/pvldb/PeiF023,DBLP:conf/apsipa/HuangCSWCYCL23,DBLP:journals/corr/abs-2504-04794}. For instance, in the realm of data pricing, Wen et al.~\cite{DBLP:conf/kdd/WenFWHX0HWJ25} proposed a privacy-preserving pre-training data pricing mechanism specifically designed to address the data pricing issues in federated learning. This mechanism ensures data privacy while providing appropriate pricing for training data used by models. Bi et al.~\cite{DBLP:conf/icde/BiLZZ0024} applied Stackelberg game theory and Nash equilibrium to propose a new data market pricing model, describing a data market composed of buyers, brokers, and multiple sellers, where buyers purchase data products through queries and sellers provide privacy-protected data, with all participants aiming to maximize their own profits.

In terms of model pricing, Chen et al.~\cite{DBLP:conf/icml/ChenLX22} introduced a model-based pricing framework that formalizes the pricing mechanisms for \gls{ml} models, advancing the field of model pricing research. However, as the trade of data and models becomes increasingly intertwined, more research is exploring how to integrate the pricing mechanisms of both to achieve fairer and more efficient market outcomes. Liu et al.~\cite{DBLP:journals/pvldb/LiuLL0PS21} proposed an end-to-end differential privacy \gls{ml} model marketplace, where intermediaries first determine a set of model prices to maximize revenue, though they do not guarantee avoidance of arbitrage. Subsequently, Shapley values are used to simulate the contribution of data owners to the models and compensate them accordingly. Sun et al.~\cite{DBLP:conf/infocom/SunCLH22} designed a federated learning-based \gls{ml} model marketplace with differential privacy, which first calculates the privacy costs of data owners, optimizes model versions, and then sells the generated model versions to model buyers. However, these markets primarily focus on the interests of intermediaries, neglecting the interests of other market participants. The PIECE mechanism proposed by Lu et al.~\cite{DBLP:journals/tifs/LuHCYJL25} sets model prices through brokers, organizes data owners to participate in training, and generates model versions based on buyer demands and privacy budgets. It ultimately allocates rewards based on the contributions of data owners to maximize market revenue. Pan et al.~\cite{DBLP:journals/jsac/PanSWLLB25} optimized this process by optimizing worker recruitment based on data quality and bidding, ensuring maximum custom model quality while ensuring reasonable cost distribution between the platform and workers.

However, existing integrated research typically encounters a problem: model prices are usually determined first, and then compensation for data sellers is calculated based on those prices~\cite{DBLP:journals/tifs/LuHCYJL25,DBLP:conf/infocom/SunCLH22}; alternatively, data prices are determined first, and model prices are then calculated accordingly. This sequential pricing approach fails to adequately account for the interdependencies and dynamic feedback between data and models~\cite{DBLP:journals/jsac/PanSWLLB25}.

In contrast, \textsc{TripleWin} provides a tri-sided, data–model–buyer clearing mechanism that forms prices simultaneously by coupling buyer-side and data-side quotations, rather than relying on broker-centric or one-sided pipelines. Its pricing operator is an \gls{sif}, ensuring a unique equilibrium and global convergence. Aggregated buyer willingness to pay is translated into dataset compensation via Shapley allocations, producer margins enter transparently, and buyer reserves bound feasibility, so incentives are aligned across all parties. This coupling removes systematic bias, delivers fair incidence of value, and measurably improves transaction success, efficiency, and stability. Practically, it yields a simple, transparent rule that scales to multi-buyer and multi-seller markets with provable guarantees.

\section{Bidirectional TripleWin Pricing} 
\label{sec:pricing}

In this section we define the entities, their interactions, and the notations used in our multi-buyer and multi-seller market.

\begin{table}[t]
\centering
\caption{Notation.}
\label{tab:notation}
\begin{tabular}{ll}
\toprule
Symbol & Meaning \\
\midrule
$D_i$, $M_j$, $B_k$ & Dataset $i$, model (producer) $j$, buyer $B_k$ \\
$\mathcal{D}_{M_j}$, $\mathcal{B}_{M_j}$ & Datasets used by $M_j$; buyers of $M_j$ \\
$p_{D_i\to M_j}$ & Posted price from data seller $i$ to model $j$ \\
$p_{B_k\to M_j}$ & Bid price from buyer $k$ to model $j$ \\
$C_{D_i\to M_j}$ & Seller cap price \\
$R_{B_k\to M_j}$ & Buyer reserve (max willingness to pay) \\
$v_{D_i\to M_j}$, $v_{B_k\to M_j}$ & Data-side / buyer-side quotations \\
$\kappa_{D_i}$, $\kappa_{M_j}$ & Positive offsets (data-side, model-side) \\
$\delta_{M_j}$ & Producer margin for model $j$ \\
$\alpha_{\kappa_D},\alpha_{\kappa_M}$, $\alpha_{\delta}$ & Global scaling of $\kappa_{D_i}$, $\kappa_{M_j}$, and $\delta_{M_j}$ \\
$\mathrm{SV}_{i\mid j}$ & Shapley share of $D_i$ for $M_j$; $\sum_{i\in \mathcal{D}_{M_j}}\!\!\!\!\mathrm{SV}_{i\mid j}\!\!=\! 1$ \\
$\omega_{jk}$, $\rho_j$ & Buyer weights, with $\sum_{k}\omega_{jk}=\rho_j\in[0,1)$ \\
$W_j$ & Effective aggregation $W_j=\sum_k\omega_{jk}p_{B_k\to M_j}$ \\
$\mathcal A_B,\mathcal A_D$ & Acceptance sets: $p_{B\to M}\le R$, $p_{D\to M}\ge \kappa_D$ \\
\bottomrule
\end{tabular}
\end{table}

\subsection{Market Entities and Interactions}

Game-theoretic models are widely used to study strategic behavior in data trading~\cite{DBLP:journals/iet-com/HaddadiG16,DBLP:journals/iotj/XiaoHD21,DBLP:journals/dga/HelmesS15,DBLP:journals/tmc/TianWZDLZ25}. 
In our data--model coupled market, pricing decisions by \emph{data sellers}, \emph{model producers}, and \emph{model buyers} influence one another. 
A triple win outcome occurs when all settled edge prices lie within the three sides' acceptance ranges. 
\figurename~\ref{fig:data-model-pricing} sketches the two quoting directions (demand-driven and supply-driven), the posted/bidding prices, and the propagation/mutual-influence effects across agents.

We collect notation in \tablename~\ref{tab:notation}. 
Each seller controls one dataset \(D_i\). 
Each producer trains one model \(M_j\) using a set of datasets \(\mathcal{D}_{M_j}\subseteq\{D_1,\dots,D_\ell\}\) and sells that model to a set of potential buyers \(\mathcal{B}_{M_j}\).
We adopt per-use licensing with \emph{edge prices}: for every data--model edge \((i,j)\) there is a price \(p_{D_i\to M_j}\ge 0\); for every buyer--model edge \((k,j)\) there is a buyer-specific price \(p_{B_k\to M_j}\ge 0\). Denote $d=\sum_{j}|\mathcal{B}_{M_j}|+\sum_{j}|\mathcal{D}_{M_j}|$ the total number of scalar edge prices provided by buyers and sellers.
Producer \(j\) targets a profit margin \(\delta_{M_j}\ge 0\). The lowest acceptable price of $D_i$'s seller is $\kappa_{D_i}$ for each dataset $D_i$, which denotes the basic cost of storing the dataset $D_i$. Buyer $B_k$'s reservation (budget) for model $M_j$ is denoted by $R_{B_k\to M_j}$. These ranges are not revealed to the other parties. 

\subsection{Model and Data Valuation}

To fold multi-sale revenue back to a single training episode, define the \emph{effective training revenue}
\[
W_j\big(\{p_{B_k\to M_j}\}_{k\in\mathcal{B}_{M_j}}\big)=\sum_{k\in\mathcal{B}_{M_j}}\omega_{jk}\,p_{B_k\to M_j},\ \sum_{k}\omega_{jk}\leq \rho_j,
\]
with $\omega_{jk},\rho_j\in[0,1)$.
This normalization serves three purposes. 
\emph{(1) Economic meaning.} 
$W_j$ is an \emph{expected one-sale price}: if \(\omega_{jk}\) encodes the beforehand probability that the first realized sale is to buyer $B_k$ (or, more generally, the share of one unit of training-time cost recovery attributed to $B_k$), then \(W_j=\mathbb{E}[p_{M_j\rightarrow B}]\).
Thus, \(W_j\) has the same units and scale as a single model price, making it directly comparable to per-use data compensation on the training episode. 
\emph{(2) Identifiability and fairness.} 
Requiring \(\sum_k \omega_{jk}\leq \rho_j < 1\) prevents double counting of revenue across multiple buyers and removes spurious dependence on how the buyer set is partitioned or expanded; without normalization, \(W_j\) would inflate with the number of buyers, mechanically overpaying data and distorting incentives. 
\emph{(3) Compatibility and theory.} 
The convex combination reduces to the unified-price special case (\(\omega_{jk^\star}=\rho_j \Rightarrow W_j=p_{M_j\rightarrow B_{k^\star}}\)). 
In practice, \(\omega_{jk}\) can be chosen from demand forecasts (probabilities of the first sale), service-level quotas, or policy weights for allocating exactly one unit of effective revenue back to the training episode.

 To fairly evaluate the contribution of each dataset in the collection $\mathcal{D}_{M_{j}}$ to the model $M_{j}$, we employ the Shapley value~\cite{DBLP:conf/aistats/JiaDWHHGLZSS19}. The Shapley value is a solution concept in game theory, used to assess the fair distribution of gains or benefits when participants cooperate and contribute unequally~\cite{DBLP:conf/infocom/XuZWC22}. It satisfies fundamental requirements for market fairness, including balance, symmetry, zero element, and additivity. In this paper, the contribution degree of dataset $D_{i}$ for model $M_{j}$ is represented as $\mathrm{SV}_{i\mid j}$.

\begin{equation}
    \label{eq7}
    \mathrm{SV}_{i\mid j}=\frac{1}{\left|\mathcal{D}_{M_{j}}\right|} \sum_{S \subseteq \mathcal{D}_{M_{j}} \backslash D_{i}} \frac{\mathcal{U}\left(S \cup\left\{D_{i}\right\}\right)-\mathcal{U}(S)}{\left(\begin{array}{c}\left|\mathcal{D}_{M_{j}}\right|-1 \\|S|\end{array}\right)}    
\end{equation}
where $\mathcal{U}$ is the utility function, $\mathcal{D}_{M_{j}}$ is the complete collection of datasets for training the model $M_{j}$, and $S\subseteq \mathcal{D}_{M_{j}}$ represents a subset of the collection.

\subsection{Quotation Functions}
\label{subsec:pricing_mappings}

\begin{figure}[t]
    \centering
    \scalebox{0.6}{\input{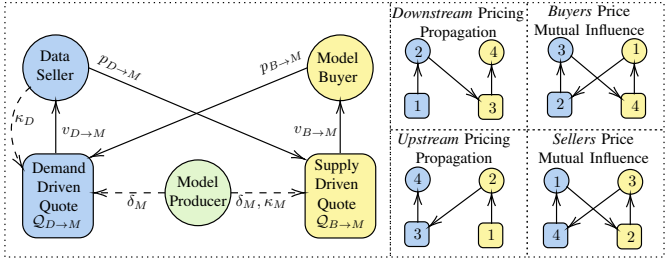}}
\caption{Bidirectional pricing in the coupled data--model market. 
Left: data sellers post $p_{D\to M}$ and buyers bid $p_{B\to M}$; the market maps these to a \emph{supply quote} $\mathcal{Q}_{B\to M}$ and a \emph{demand quote} $\mathcal{Q}_{D\to M}$, moderated by producer margins $\delta_M$ and overheads $(\kappa_D,\kappa_M)$ (dashed arrows indicate internal cost transmission). 
At equilibrium, posted prices match quotes ($p=v$). 
Right: price signals propagate \emph{downstream} (data $\rightarrow$ model $\rightarrow$ buyer) and \emph{upstream} (buyer $\rightarrow$ model $\rightarrow$ data).}
\label{fig:data-model-pricing}
\end{figure}

For model $M_j$ trained on datasets $\mathcal{D}_{M_j}$ and traded with buyers $B_k$,
the posted prices are $p_{D_i\to M_j}$ (data seller $\to$ model producer) and
$p_{B_k\to M_j}$ (buyer $\to$ model producer). The buyer‑side and data‑side quotations are
\begin{subequations}
\begin{empheq}[box=\fbox]{align}
\textbf{Buyer: }v_{B_k\to M_j} &= \kappa_{M_j} + (1+\delta_{M_j})\sum_{i\in \mathcal{D}_{M_j}} p_{D_i\to M_j},
\label{eq:buyer_quote}\\[4pt]
\textbf{Seller: }v_{D_i\to M_j} &= \kappa_{D_i} + \frac{\mathrm{SV}_{i\mid j}}{1+\delta_{M_j}}\, W_j
\label{eq:data_quote}
\end{empheq}
\end{subequations}
\noindent where
\begin{equation}
W_j=\sum_{k\in\mathcal{B}_{M_j}}\omega_{jk}\,p_{B_k\to M_j},\qquad 
\sum_{k\in\mathcal{B}_{M_j}}\omega_{jk}=\rho_j\in[0,1). \label{eq:Q_def}
\end{equation}

Buyers accept if $p_{B_k\to M_j}\le R_{B_k\to M_j}$ and data sellers accept if $p_{D_i\to M_j}\ge \kappa_{D_i}$. Denote $\mathbf p=[\mathbf p_{B\to M};\,\mathbf p_{D\to M}]$.
We evaluate feasibility as $\mathbf p\in\mathcal A_B\cap\mathcal A_D$ with
$\mathcal A_B=\{p_{B\to M}\le R\}$ and $\mathcal A_D=\{p_{D\to M}\ge \kappa_D\}$.

In short, this quotation mechanism propagates individual posted prices upstream from buyers to data sellers and downstream from data sellers to buyers. Two layers of prices are then jointly determined (see \figurename~\ref{fig:data-model-pricing}):
\begin{itemize}
  \item \emph{Downstream propagation:} data expenditures \(\{p_{D_i\to M_j}\}_{i\in \mathcal{D}_{M_j}}\) feed into buyer-specific model prices through the producer's margin, so higher training costs push prices to buyers.
  \item \emph{Upstream propagation:} buyers’ bidding pressure \(\{p_{B_k\to M_j}\}_{k\in \mathcal{B}_{M_j}}\) for \(M_j\) affects the model's overall quote and in turn influences the feasible range of data expenditures.
  \item \emph{Mutual influence within each side:} buyers connected to the same model can affect one another's effective quotes; similarly, sellers of datasets used by the same model interact through the model’s training budget.
\end{itemize}
Crucially, dataset and model prices are solved for \emph{simultaneously}; we avoid pipelines that fix one layer while leaving the other unspecified.

\subsection{TripleWin Pricing Problem Formulation}
\label{subsec:problem_formulation}

The essence of the data–model coupled market is to achieve a triple win outcome in which the pricing expectations of all three parties, data sellers, model producers, and model buyers, are satisfied simultaneously. 
Each party enters the market with its own valuation: data sellers post their asking prices for per-use access to datasets, model producers determine profit margins and cost pass-through to models, and model buyers submit their bids reflecting demand and willingness to pay. 
The market, in turn, responds with quotations that reconcile these competing valuations based on the joint influence of data and model layers. 
A triple win equilibrium occurs when these private prices align with the market’s quotations so that all entities are content with the transaction outcome. 

Mathematically, this interaction can be cast as a least-squares alignment between individual prices and market quotations:
\begin{subequations}
\begin{align}
& \min\ 
\|\mathbf{v}-\mathbf{p}\|_2^2 \label{eq:triplewin_obj} \\[3pt]
\text{s.t.}\quad &
\underbrace{\begin{bmatrix}
\mathbf{v}_{B\to M}\\[4pt]
\mathbf{v}_{D\to M}
\end{bmatrix}}_{\text{quoted prices}}
=
\underbrace{\begin{bmatrix}
\mathcal{Q}_{B\to M}(\mathbf{p}_{D\to M}) \\[4pt]
\mathcal{Q}_{D\to M}(\mathbf{p}_{B\to M})
\end{bmatrix}}_{\text{market mechanism}}.
\label{eq:triplewin_constraint}
\end{align}
\label{eq:problem}
\end{subequations}
The vector $\mathbf{p}$ collects all individual posted prices: $\mathbf{p}_{D\to M}$ from data sellers, representing their requested compensation for each dataset–model pair; $\mathbf{p}_{B\to M}$ from model buyers, representing their bids for each model purchase; and the implicit margins $\delta_{M_j}$ set by model producers that connect the two layers. 
The vector $\mathbf{v}$ denotes the quotations computed by the market itself after processing these inputs through the bidirectional mappings $\mathcal{Q}_{D\to M}$ and $\mathcal{Q}_{B\to M}$. 
The upper mapping $\mathcal{Q}_{B\to M}$ maps data‑side prices to buyer‑facing quotations, while the lower mapping $\mathcal{Q}_{D\to M}$ maps buyer‑side prices to dataset‑facing quotations according to their Shapley-based contribution shares. 
Through these two intertwined processes, information and value flow in opposite directions, linking supply, production, and demand into a single valuation system.

The objective in~\eqref{eq:triplewin_obj} quantifies total market inconsistency: it measures how far each entity’s posted price deviates from the system-level quotation implied by everyone’s actions. 
Minimizing this discrepancy is equivalent to driving the market toward mutual acceptance, where no participant perceives a mismatch between individual valuation and collective outcome. 
When the minimum is attained at zero, all prices satisfy $\mathbf{v}=\mathbf{p}$, implying that individual expectations coincide with the market quotations. 
At that point, the constraint~\eqref{eq:triplewin_constraint} reduces to the fixed-point condition $\mathbf{p}=\mathcal{Q}(\mathbf{p})$. This formulation therefore expresses the triple win condition as an optimization problem: the market seeks a set of prices that jointly minimize tension among sellers, producers, and buyers.

\subsection{TripleWin Pricing Algorithm}

\begin{figure}[t]
    \centering
    \includegraphics[width=0.7\linewidth]{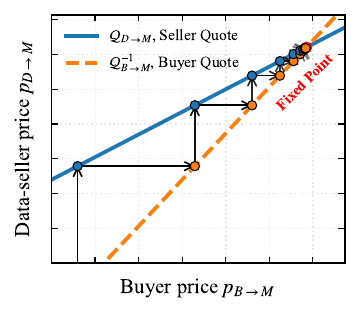}
\caption{Cobweb diagram for the \textsc{TripleWin} fixed-point update in the $(x,y)$ plane with $x=p_{B\to M}$ (buyer price) and $y=p_{D\to M}$ (seller price). The solid curve is the data-side quotation $y=\mathcal{Q}_{D\to M}(x)$. The dashed curve is the buyer-side quotation drawn as a locus in the plane, $x=\mathcal{Q}_{B\to M}(y)$. Starting from an initial $x_0$ on the horizontal axis, the iteration alternates the vertical update $y_{t+1}=\mathcal{Q}_{D\to M}(x_t)$ and the horizontal update $x_{t+1}=\mathcal{Q}_{B\to M}(y_{t+1})$, producing the staircase arrows that converge to $(x^\star,y^\star)$, where $x^\star=\mathcal{Q}_{B\to M}(y^\star)$ and $y^\star=\mathcal{Q}_{D\to M}(x^\star)$.}
    \label{fig:triplewin-fixed-point}
\end{figure}

The procedure in Algorithm~\ref{alg:triplewin} implements \textsc{TripleWin}. At iteration \(t\) it first aggregates buyer bids into an effective quantity \(W_j^{(t)}=\sum_{k\in\mathcal B_{M_j}}\omega_{jk}\,p^{(t)}_{B_k\to M_j}\) for each model \(M_j\) with \(\sum_k\omega_{jk}=\rho_j\in[0,1)\). It then computes the buyer‑side quotation $v^{(t)}_{B_k\to M_j}$
which passes the current total data expenditure and the producer margin to every buyer of model \(M_j\). In parallel it computes the data‑side quotation $v^{(t)}_{D_i\to M_j}$
which allocates the aggregated buyer signal back to each training dataset according to its Shapley share. The state is updated by setting the posted prices to the quotations on both sides, \(\mathbf p^{(t+1)}\gets[\mathbf v^{(t)}_{B\to M};\,\mathbf v^{(t)}_{D\to M}]\). The iteration stops when the normalized fixed‑point residual \(\|\mathcal Q(\mathbf p^{(t)})-\mathbf p^{(t)}\|_2/\sqrt{d}\) is below the tolerance \(\varepsilon\). Because the mapping has strictly positive offsets and nonnegative coefficients, it is an \gls{sif} (established formally in Section~\ref{sec:theory}); existence and uniqueness of the fixed point and global convergence from any nonnegative initialization follow. The offsets guarantee that all iterates remain strictly positive, and the normalization \(\sum_{i\in \mathcal{D}_{M_j}}\mathrm{SV}_{i\mid j}=1\) ensures that quotations are well scaled on the data side.

\begin{algorithm}[htbp]
\caption{\textsc{TripleWin Pricing}}
\label{alg:triplewin}
\begin{algorithmic}
\REQUIRE $\{\mathrm{SV}_{i\mid j}\}$ with $\sum_{i\in \mathcal{D}_{M_j}}\mathrm{SV}_{i\mid j}=1$;
  $\{\omega_{jk}\}$ with $\sum_{k\in\mathcal B_{M_j}}\omega_{jk}=\rho_j\in[0,1)$;
  offsets $\{\kappa_{D_i}\},\{\kappa_{M_j}\}>0$; margins $\{\delta_{M_j}\}\ge 0$;
  initialization $\mathbf p^{(0)}\ge 0$; tolerance $\varepsilon$.
\ENSURE $\mathbf{p}^\star$
\FOR{$t=0,1,2,\ldots$}
  \STATE Compute $W_j^{(t)}=\sum_{k}\omega_{jk}p^{(t)}_{B_k\to M_j}$ for all $j$.
  \STATE Buyer-side quotation:
     $v^{(t)}_{B_k\to M_j}=\kappa_{M_j}+(1+\delta_{M_j})\sum_{i\in \mathcal{D}_{M_j}}p^{(t)}_{D_i\to M_j}$.
  \STATE Data-side quotation:
     $v^{(t)}_{D_i\to M_j}=\kappa_{D_i}+\frac{\mathrm{SV}_{i\mid j}}{1+\delta_{M_j}}\,W_j^{(t)}$.
  \STATE Update prices: $\mathbf p^{(t+1)}\gets [\mathbf v^{(t)}_{B\to M};\,\mathbf v^{(t)}_{D\to M}]$.
  \STATE Stop if $\|\mathcal Q(\mathbf p^{(t)})-\mathbf p^{(t)}\|_2/\sqrt{d}\le \varepsilon$.
\ENDFOR
\RETURN $\mathbf p^\star=\mathbf p^{(t+1)}$.
\end{algorithmic}
\end{algorithm}

The algorithm is lightweight and amenable to vectorization and parallelization. One iteration requires only model‑wise reductions: computing \(W_j^{(t)}\) costs \(\mathcal O(\sum_j|\mathcal{B}_{M_j}|)\), computing \(\sum_{i\in \mathcal{D}_{M_j}}p^{(t)}_{D_i\to M_j}\) costs \(\mathcal O(\sum_j|\mathcal{D}_{M_j}|)\), and forming the quotations is linear in the number of edges. Memory is linear in the number of prices and weights. In practice any nonnegative initialization works; common choices are \(p^{(0)}_{D_i\to M_j}=\kappa_{D_i}\) or policy caps on data edges and \(p^{(0)}_{B_k\to M_j}=\kappa_{M_j}\) on buyer edges. The iteration can be run synchronously as written, or in block‑alternating and asynchronous forms that update a subset of models or edges at a time; the SIF property implies that such variants also converge to the same fixed point provided the updates are fair. Feasibility with respect to buyer reserves and dataset floors is evaluated after convergence; a constrained variant that clips to \(p_{B_k\to M_j}\le R_{B_k\to M_j}\) and \(p_{D_i\to M_j}\ge \kappa_{D_i}\) can be implemented as a post‑update policy upon need. The parameters \(\rho_j\), \(\kappa_{D_i}\), \(\kappa_{M_j}\), and \(\delta_{M_j}\) act as interpretable economic knobs; their comparative‑statics effects are monotone by construction, which makes the iteration robust and its outcomes easy to diagnose. See \figurename~\ref{fig:triplewin-fixed-point}
for a simple illustration of how the algorithm works for a scenario of one buyer and one seller.

\section{Theoretical Aspects}
\label{sec:theory}

All proofs are written for the price vector $ \mathbf{p}=[\mathbf{p}_{B\to M},\,\mathbf{p}_{D\to M}]$
with quoted prices defined through the quotation functions in Section~\ref{subsec:pricing_mappings}. Throughout this section the following standing conditions hold: for every model $M_j$ the profit rate $\delta_{M_j}\ge 0$; for every model $M_j$ $\kappa_{M_j}>0$ and for every dataset $D_i$ $\kappa_{D_i}>0$; for every model $j$ the Shapley weights satisfy $\mathrm{SV}_{i\mid j}\ge 0$ and $\sum_{i\in \mathcal{D}_{M_j}}\mathrm{SV}_{i\mid j}=1$; for every model $j$ the revenue weights satisfy $\omega_{jk}\ge 0$ and $\sum_{k\in\mathcal{B}_{M_j}}\omega_{jk}=\rho_j$. Let 
\begin{equation}
\mathcal{Q}(\mathbf{p}) = \begin{bmatrix}
\mathcal{Q}_{B\to M}(\mathbf{p}_{D\to M}) \\[4pt]
\mathcal{Q}_{D\to M}(\mathbf{p}_{B\to M})
\end{bmatrix}.
\label{eq:I_p}
\end{equation}

\subsection{SIF Properties}
\begin{definition}[SIF]
A mapping $T:\mathbb{R}_{+}^{d}\to \mathbb{R}_{+}^{d}$ is called an SIF if it satisfies the following three properties for all $\mathbf{x},\mathbf{y}\in\mathbb{R}_{+}^{d}$ and all $\alpha>1$:
\begin{itemize}
    \item \emph{Positivity}: $T(\mathbf{x})\in\mathbb{R}_{++}^{d}$;
    \item \emph{Monotonicity}: $\mathbf{x}\ge \mathbf{y}$ implies $T(\mathbf{x})\ge T(\mathbf{y})$;
    \item \emph{Scalability}: $\alpha T(\mathbf{x})>T(\alpha\mathbf{x})$.
\end{itemize}
\end{definition}

\begin{theorem}[Main Result]\label{thm:sif}
The mapping $\mathcal Q$ defined by \eqref{eq:I_p}
is an \gls{sif}: it is positive, monotone, and scalable.
Consequently, if there exists $\bar{\mathbf p}\ge \mathcal Q(\bar{\mathbf p})$,
then the fixed point $\mathbf p^\star=\mathcal Q(\mathbf p^\star)$ exists,
is unique, and the iteration in Algorithm~\ref{alg:triplewin} converges to $\mathbf p^\star$ from any $\mathbf p^{(0)}\ge 0$.
\end{theorem}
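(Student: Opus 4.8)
The plan is to exploit the fact that $\mathcal{Q}$ is an affine map on $\mathbb{R}_+^d$ with a nonnegative coefficient matrix and a strictly positive constant offset, verify the three defining properties of a \gls{sif} by direct inspection, and then invoke the fixed-point theory of Yates~\cite{yates2002framework} for the existence, uniqueness, and global convergence claims.

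First I would put $\mathcal{Q}$ into canonical form $\mathcal{Q}(\mathbf p)=A\mathbf p+\mathbf c$. Reading off \eqref{eq:buyer_quote}--\eqref{eq:data_quote} together with \eqref{eq:Q_def}: in the row indexed by a buyer edge $(k,j)$ the matrix $A$ carries the entry $(1+\delta_{M_j})$ in each column indexed by a data edge $(i,j)$ with $i\in\mathcal{D}_{M_j}$ and zeros elsewhere; in the row indexed by a data edge $(i,j)$ it carries the entry $\frac{\mathrm{SV}_{i\mid j}}{1+\delta_{M_j}}\,\omega_{jk}$ in each column indexed by a buyer edge $(k,j)$ with $k\in\mathcal{B}_{M_j}$ and zeros elsewhere, so $A$ is block off-diagonal with respect to the $B\!\to\! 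M$ / $D\!\to\! M$ partition. Under the standing conditions ($\delta_{M_j}\ge 0$, $\mathrm{SV}_{i\mid j}\ge 0$, $\omega_{jk}\ge 0$) every entry of $A$ is nonnegative, and the offset $\mathbf c$, whose entries are the $\kappa_{M_j}$ on buyer rows and the $\kappa_{D_i}$ on data rows, is strictly positive in every coordinate because $\kappa_{M_j}>0$ and $\kappa_{D_i}>0$.

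The three \gls{sif} axioms then follow immediately. \emph{Positivity}: for any $\mathbf p\ge 0$ we have $A\mathbf p\ge 0$, hence $\mathcal{Q}(\mathbf p)=A\mathbf p+\mathbf c\ge\mathbf c>0$ componentwise, so $\mathcal{Q}(\mathbf p)\in\mathbb{R}_{++}^d$. \emph{Monotonicity}: if $\mathbf x\ge\mathbf y$ then $A\mathbf x\ge A\mathbf y$ because $A\ge 0$, whence $\mathcal{Q}(\mathbf x)\ge\mathcal{Q}(\mathbf y)$. \emph{Scalability}: for $\alpha>1$, $\alpha\mathcal{Q}(\mathbf x)-\mathcal{Q}(\alpha\mathbf x)=\alpha(A\mathbf x+\mathbf c)-(\alpha A\mathbf x+\mathbf c)=(\alpha-1)\mathbf c>0$ since $\alpha-1>0$ and $\mathbf c$ is entrywise positive, so $\alpha\mathcal{Q}(\mathbf x)>\mathcal{Q}(\alpha\mathbf x)$. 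This establishes that $\mathcal{Q}$ is a \gls{sif}.

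Finally I would apply Yates' framework: a \gls{sif} has a fixed point if and only if it is feasible, i.e.\ some $\bar{\mathbf p}$ satisfies $\bar{\mathbf p}\ge\mathcal{Q}(\bar{\mathbf p})$ — exactly the hypothesis of the theorem — and when this holds the fixed point $\mathbf p^\star$ is unique and the standard iteration $\mathbf p^{(t+1)}=\mathcal{Q}(\mathbf p^{(t)})$ converges to it from every $\mathbf p^{(0)}\ge 0$; the same framework also gives convergence of fair totally asynchronous and block-coordinate variants, covering the alternating/asynchronous forms of Algorithm~\ref{alg:triplewin}. I do not expect a serious obstacle: the only points needing care are (i) that the bipartite coupling — buyer quotes seeing only seller prices and conversely — does not spoil monotonicity or scalability, which it does not since those properties depend only on $A\ge 0$ and $\mathbf c>0$ and not on the sparsity pattern of $A$; and (ii) that \emph{every} coordinate of $\mathbf c$ is strictly positive, which is guaranteed by the assumptions on $\kappa_{M_j},\kappa_{D_i}$ and is precisely what makes scalability strict. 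One may additionally remark, although it goes beyond the stated hypothesis, that the normalizations $\sum_{i\in\mathcal{D}_{M_j}}\mathrm{SV}_{i\mid j}=1$ and $\rho_j<1$ force the spectral radius $\rho(A)<1$, so a feasible $\bar{\mathbf p}$ always exists and in fact $\mathbf p^\star=(I-A)^{-1}\mathbf c$.
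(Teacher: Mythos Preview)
Your proposal is correct and mirrors the paper's own proof: both verify positivity, monotonicity, and scalability directly from the affine structure with nonnegative coefficients and strictly positive offsets (your $(\alpha-1)\mathbf c>0$ is exactly the paper's componentwise $(\alpha-1)\kappa_{M_j}>0$ and $(\alpha-1)\kappa_{D_i}>0$), and then invoke Yates~\cite{yates2002framework} for existence, uniqueness, and convergence. Your compact matrix presentation $\mathcal{Q}(\mathbf p)=A\mathbf p+\mathbf c$ is a tidy repackaging of the same argument, and your closing spectral-radius remark correctly anticipates what the paper establishes separately as Theorem~\ref{thm:feas}.
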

\begin{proof}
Positivity holds because all offsets $\kappa_{M_j}$ and $\kappa_{D_i}$ are strictly positive and all coefficients multiplying the prices are nonnegative. Therefore every component of $\mathcal{Q}(\mathbf{p})$ is strictly positive for any $\mathbf{p}\in\mathbb{R}_{+}^{d}$.

Monotonicity holds because each component of $\mathcal{Q}$ is an affine function with nonnegative coefficients of the relevant coordinates of $\mathbf{p}$. If $\mathbf{x}\ge \mathbf{y}$, then for every $j$ and $k$ we have
\begin{multline*}
\kappa_{M_j} + (1+\delta_{M_j})\sum_{i\in \mathcal{D}_{M_j}} x_{D_i\to M_j} \\
\ \ge\ 
\kappa_{M_j} + (1+\delta_{M_j})\sum_{i\in \mathcal{D}_{M_j}} y_{D_i\to M_j},
\end{multline*}
and for every $i$ and $j$ we have
\begin{multline*}
\kappa_{D_i} + \frac{\mathrm{SV}_{i\mid j}}{1+\delta_{M_j}} \sum_{k\in\mathcal{B}_{M_j}}\omega_{jk}\, x_{B_k\to M_j} \\
\ \ge\ 
\kappa_{D_i} + \frac{\mathrm{SV}_{i\mid j}}{1+\delta_{M_j}} \sum_{k\in\mathcal{B}_{M_j}}\omega_{jk}\, y_{B_k\to M_j}.
\end{multline*}
Hence $\mathcal{Q}(\mathbf{x})\ge \mathcal{Q}(\mathbf{y})$.

Scalability holds because the operator is affine with strictly positive offsets. Fix $\alpha>1$ and any $\mathbf{p}$. For every $j$ and $k$,
\[
\alpha \big(\mathcal{Q}(\mathbf{p})\big)_{B_k\to M_j} - \big(\mathcal{Q}(\alpha \mathbf{p})\big)_{B_k\to M_j}
= (\alpha-1)\kappa_{M_j} > 0.
\]
For every $i$ and $j$,
\[
\alpha \big(\mathcal{Q}(\mathbf{p})\big)_{D_i\to M_j} - \big(\mathcal{Q}(\alpha \mathbf{p})\big)_{D_i\to M_j}
= (\alpha-1)\kappa_{D_i} > 0,
\]
since $W_j$ is linear in $\{p_{B_k\to M_j}\}$ and the factor $(1+\delta_{M_j})^{-1}$ is constant. Therefore $\alpha \mathcal{Q}(\mathbf{p})>\mathcal{Q}(\alpha \mathbf{p})$ componentwise and scalability holds. All three properties are satisfied, which proves that $\mathcal{Q}$ is a standard mapping.
\end{proof}

Once the joint operator $\mathcal{Q}$ has been established as an SIF, its structural properties directly entail the main fixed-point results of the classical framework by Yates in \cite{yates2002framework}. Specifically, positivity, monotonicity, and scalability together guarantee that a feasible standard mapping admits a unique and globally attractive fixed point. We restate these consequences in the present context for completeness.

We analyze the fixed point of the simultaneous update \eqref{eq:I_p}. The following feasibility lemma is used.

\begin{theorem}[Existence of Fixed Point Equilibrium Price]
\label{thm:feas}
With the standing conditions $0\leq \rho_j<1$, $\kappa_{M_j}>0$, $\kappa_{D_i}>0$, $\delta_{M_j}\ge 0$, and $\sum_{i\in \mathcal{D}_{M_j}}\mathrm{SV}_{i\mid j}=1$, it is guaranteed that the joint quotation operator $\mathcal{Q}$ admits a unique finite fixed point.
\end{theorem}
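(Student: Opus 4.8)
\noindent The statement is a feasibility certificate that plugs into Theorem~\ref{thm:sif}: that result already establishes that $\mathcal Q$ is an SIF and that the existence of any $\bar{\mathbf p}\ge\mathcal Q(\bar{\mathbf p})$ forces a \emph{unique} fixed point together with global convergence. The plan is therefore to exhibit one such $\bar{\mathbf p}$ that is moreover finite, the only structural input being $0\le\rho_j<1$. First I would record two easy facts: $\mathcal Q$ is affine, $\mathcal Q(\mathbf p)=A\mathbf p+\mathbf b$ with $A\ge 0$ entrywise and $\mathbf b>0$ (the offsets $\kappa_{M_j},\kappa_{D_i}$); and it \emph{decouples across models}, since every edge incident to $M_j$ enters $\mathcal Q$ only through the total data payment $T_j=\sum_{i\in\mathcal D_{M_j}}p_{D_i\to M_j}$ and the aggregated bid $W_j=\sum_{k\in\mathcal B_{M_j}}\omega_{jk}\,p_{B_k\to M_j}$. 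It thus suffices to build $\bar{\mathbf p}$ one model at a time.

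The main obstacle is that a flat super-solution does not exist: the one-way gains $(1+\delta_{M_j})$ (data prices into a single buyer quote) and the implicit $|\mathcal D_{M_j}|$ inside $T_j$ can each exceed $1$, so $\mathcal Q(c\mathbf 1)\le c\mathbf 1$ is in general impossible. The fix is to close the loop at the level of aggregates: substituting $p_{B_k\to M_j}=\kappa_{M_j}+(1+\delta_{M_j})T_j$ into the data quote $v_{D_i\to M_j}=\kappa_{D_i}+\tfrac{\mathrm{SV}_{i\mid j}}{1+\delta_{M_j}}W_j$ and using $\sum_i\mathrm{SV}_{i\mid j}=1$ and $\sum_k\omega_{jk}=\rho_j$, the round trip maps $T_j$ to $\sum_{i\in\mathcal D_{M_j}}\kappa_{D_i}+\tfrac{\rho_j\kappa_{M_j}}{1+\delta_{M_j}}+\rho_j T_j$, a scalar affine contraction of slope $\rho_j<1$ (the Shapley normalization collapses the rank-one coupling matrix to spectral radius exactly $\rho_j$). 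Guided by this, with $K_j:=\sum_{i\in\mathcal D_{M_j}}\kappa_{D_i}$ I would set
\begin{align*}
\bar p_{D_i\to M_j}&=\kappa_{D_i}+\mathrm{SV}_{i\mid j}\,S_j,\\
\bar p_{B_k\to M_j}&=\kappa_{M_j}+(1+\delta_{M_j})(K_j+S_j),
\end{align*}
with $S_j:=\tfrac{\rho_j}{1-\rho_j}\big(\tfrac{\kappa_{M_j}}{1+\delta_{M_j}}+K_j\big)$, which is finite and nonnegative precisely because $\rho_j<1$.

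It remains to check $\mathcal Q(\bar{\mathbf p})\le\bar{\mathbf p}$ by one substitution: with this ansatz $\bar T_j=K_j+S_j$ (using $\sum_i\mathrm{SV}_{i\mid j}=1$) and $\bar W_j=\rho_j\big(\kappa_{M_j}+(1+\delta_{M_j})(K_j+S_j)\big)$, so the buyer quote equals $\bar p_{B_k\to M_j}$ exactly and, by the defining identity for $S_j$, the data quote equals $\bar p_{D_i\to M_j}$ exactly; hence in fact $\mathcal Q(\bar{\mathbf p})=\bar{\mathbf p}$ and the construction itself produces the equilibrium. Supplying this $\bar{\mathbf p}$ to Theorem~\ref{thm:sif} yields uniqueness and global convergence, and finiteness is read off the closed form. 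As an independent cross-check one can bypass the SIF machinery: $A^2$ is block-diagonal over models with each block of spectral radius $\rho_j$, so $\rho(A)=\sqrt{\max_j\rho_j}<1$ and $\mathbf p^\star=(I-A)^{-1}\mathbf b=\sum_{n\ge 0}A^n\mathbf b$ converges to a nonnegative, finite, unique limit. I would close by observing that the same $\bar{\mathbf p}$ uniformly bounds every iterate of Algorithm~\ref{alg:triplewin} initialized with $\mathbf 0\le\mathbf p^{(0)}\le\bar{\mathbf p}$.
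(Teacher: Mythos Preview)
Your proposal is correct and follows the same high-level strategy as the paper: build, one model at a time, a finite vector $\bar{\mathbf p}\ge\mathcal Q(\bar{\mathbf p})$ using the crucial slack $\rho_j<1$, and then invoke the SIF machinery (Theorem~\ref{thm:sif} / Yates) for existence, uniqueness, and convergence. The execution differs in a useful way. The paper sets all buyer prices for $M_j$ to a common level $L_j=\big(\kappa_{M_j}^{\max}+(1+\delta_{M_j})\sum_i\kappa_{D_i}\big)/(1-\rho_j)$ and then back-fills the data prices so that the data-side quotation holds with equality while the buyer-side quotation holds only as an inequality, yielding a genuine super-solution and hence only an upper bound on $\mathbf p^\star$. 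You instead solve the two-aggregate recursion $T_j\mapsto K_j+\tfrac{\rho_j\kappa_{M_j}}{1+\delta_{M_j}}+\rho_j T_j$ to its fixed point and write down the equilibrium in closed form, so your $\bar{\mathbf p}$ satisfies $\mathcal Q(\bar{\mathbf p})=\bar{\mathbf p}$ exactly; this is strictly more informative (you exhibit $\mathbf p^\star$ itself, and uniqueness/finiteness can be read off directly). Your additional spectral-radius argument---observing $\mathcal Q(\mathbf p)=A\mathbf p+\mathbf b$ with $\rho(A)=\sqrt{\max_j\rho_j}<1$ via the rank-one structure and Shapley normalization---is an independent proof that the paper does not give, and it makes the role of the condition $\rho_j<1$ especially transparent. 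One small caveat: the paper's proof is written with a buyer-specific offset $\kappa_{M_j\to B_k}$ and uses $\kappa_{M_j}^{\max}$; your construction assumes a single $\kappa_{M_j}$, which matches the theorem statement and the quotation formulas, but would need the obvious $\max$ if generalized.
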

\begin{proof}
Fix a model $M_j$ and denote
\[
S_j\;\triangleq\;\sum_{i\in \mathcal{D}_{M_j}}\kappa_{D_i},~
\kappa_{M_j}^{\max}\;\triangleq\;\max_{k\in\mathcal{B}_j}\kappa_{M_j\to B_k}
\]
Define a constant
\[
L_j \;\triangleq\; \frac{\kappa_{M_j}^{\max} + (1+\delta_{M_j})\,S_j}{1-\rho_j}\; \in (0,\infty),
\]
which is finite because $\rho_j<1$ and $\kappa_{M_j}>0$, $\kappa_{D_i}>0$, $\delta_{M_j}\ge 0$.
Construct a vector $\bar{\mathbf{p}}$ by setting, for each $j$,
\[
\bar p_{B_k\to M_j}\;=\;L_j\quad\text{for all }k\in\mathcal{B}_j,
\]
and
\[
\bar p_{D_i\to M_j}\;=\;\kappa_{D_i}+\frac{\mathrm{SV}_{i\mid j}}{1+\delta_{M_j}}\;\rho_j L_j
\quad\text{for all } i\in \mathcal{D}_{M_j}.
\]
Let us verify that $\bar{\mathbf{p}}\ge \mathcal{Q}(\bar{\mathbf{p}})$ componentwise.
First, the seller-side quote at $(i,j)$ is
\[
\mathcal{Q}_{D_i\to M_j}(\bar{\mathbf{p}}_{B\to M})
= \kappa_{D_i} + \frac{\mathrm{SV}_{i\mid j}}{1+\delta_{M_j}}\,Q_j,
\]
with $Q_j=\sum_{k}\omega_{jk}\,\bar p_{B_k\to M_j}=\rho_j L_j$, 
hence $\mathcal{Q}_{D_i\to M_j}(\bar{\mathbf{p}})=\bar p_{D_i\to M_j}$ by construction.

Second, the buyer-side quote at $(k,j)$ is
\begin{multline*}
\mathcal{Q}_{B_k\to M_j}(\bar{\mathbf{p}}_{D\to M})
= \kappa_{M_j\to B_k} + (1+\delta_{M_j})\sum_{i\in \mathcal{D}_{M_j}}\bar p_{D_i\to M_j} \\
= \kappa_{M_j\to B_k} + (1+\delta_{M_j})\Big(S_j+\frac{\rho_j L_j}{1+\delta_{M_j}}\Big) \\
= \kappa_{M_j\to B_k} + (1+\delta_{M_j})S_j + \rho_j L_j.
\end{multline*}
By the choice of $L_j$,
\begin{multline*}
\kappa_{M_j\to B_k} + (1+\delta_{M_j})S_j + \rho_j L_j
\;\le\; \kappa_{M_j}^{\max} + (1+\delta_{M_j})S_j + \rho_j L_j \\
\;=\; (1-\rho_j)L_j + \rho_j L_j \;=\; L_j
\;=\; \bar p_{B_k\to M_j}.
\end{multline*}
Therefore $\mathcal{Q}(\bar{\mathbf{p}})\le \bar{\mathbf{p}}$ componentwise. 

Since $\mathcal{Q}$ is a SIF and there exists $\bar{\mathbf{p}}\in\mathbb{R}_+^d$ with $\bar{\mathbf{p}}\ge \mathcal{Q}(\bar{\mathbf{p}})$, Theorem~1 of Yates~\cite{yates2002framework} (fixed-point theorem for standard interference mappings) implies that $\mathcal{Q}$ admits a \emph{unique} fixed point $\mathbf{p}^\star\in\mathbb{R}_+^d$, and that the iteration $\mathbf{p}^{(t+1)}=\mathcal{Q}(\mathbf{p}^{(t)})$ converges to $\mathbf{p}^\star$ from any nonnegative initialization. Moreover, starting the iteration at $\bar{\mathbf{p}}$ yields a monotone decreasing sequence bounded below by $\mathbf{0}$, so $\mathbf{p}^\star\le \bar{\mathbf{p}}$ and is therefore finite. This proves the lemma.
\end{proof}

\begin{corollary}[Existence of a fixed point]
\label{thm:exist}
Under Theorem~\ref{thm:feas} the operator $\mathcal{Q}$ admits at least one fixed point $\mathbf{p}^{\star}\in\mathbb{R}_{+}^{d}$ with $\mathbf{p}^{\star}=\mathcal{Q}(\mathbf{p}^{\star})$.
\end{corollary}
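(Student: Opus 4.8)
The statement to prove is Corollary~\ref{thm:exist}, which asserts the existence of at least one fixed point of $\mathcal{Q}$ under the hypotheses of Theorem~\ref{thm:feas}. This is an immediate consequence of the preceding results, so the proof will be very short.

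\medskip

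\noindent\textbf{Proof plan.} The plan is to simply cite Theorem~\ref{thm:feas} (and behind it Theorem~\ref{thm:sif}), which already does all the work. First I would observe that Theorem~\ref{thm:feas} establishes something strictly stronger than what the corollary claims: it shows that $\mathcal{Q}$ admits a \emph{unique} finite fixed point $\mathbf{p}^\star\in\mathbb{R}_+^d$, via the explicit construction of a feasible point $\bar{\mathbf p}$ with $\bar{\mathbf p}\ge\mathcal{Q}(\bar{\mathbf p})$ and an appeal to Theorem~1 of Yates~\cite{yates2002framework}. Since ``unique fixed point exists'' trivially implies ``at least one fixed point exists,'' the corollary follows directly. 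I would write one or two sentences: by Theorem~\ref{thm:feas} the operator $\mathcal{Q}$ has a (unique) fixed point $\mathbf{p}^\star\in\mathbb{R}_+^d$ satisfying $\mathbf{p}^\star=\mathcal{Q}(\mathbf{p}^\star)$; in particular at least one such fixed point exists, which is the claim.

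\medskip

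\noindent\textbf{Key steps in order.} (1) Recall that $\mathcal{Q}$ is an SIF by Theorem~\ref{thm:sif}. (2) Recall that under the standing conditions of Theorem~\ref{thm:feas} there exists $\bar{\mathbf p}\in\mathbb{R}_+^d$ with $\bar{\mathbf p}\ge\mathcal{Q}(\bar{\mathbf p})$ (the feasibility point constructed there). (3) Invoke the Yates fixed-point theorem to conclude existence (indeed uniqueness) of $\mathbf{p}^\star\in\mathbb{R}_+^d$. (4) Weaken ``unique'' to ``at least one'' to match the corollary statement. Nothing beyond bookkeeping is required.

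\medskip

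\noindent\textbf{Main obstacle.} There is essentially no obstacle: all the substance has already been discharged in Theorems~\ref{thm:sif} and~\ref{thm:feas}. The only thing to be careful about is not to re-derive the feasibility construction — the corollary should transparently defer to Theorem~\ref{thm:feas} rather than repeat its argument, so the proof is a one-liner. If one wanted a self-contained restatement of why existence holds from scratch, the genuinely nontrivial input would be the Yates theorem itself (monotone iteration from a super-solution $\bar{\mathbf p}$ produces a decreasing sequence whose limit is a fixed point), but since that is cited, the corollary's proof is purely a matter of logical specialization.
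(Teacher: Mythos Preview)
Your proposal is correct and matches the paper's own treatment: the paper states that Corollaries~\ref{thm:exist}--\ref{thm:unique} ``follow directly from Theorem~1 in~\cite{yates2002framework}, since our operator $\mathcal{Q}$ satisfies the three defining properties of a SIF and the feasibility condition of Theorem~\ref{thm:feas}.'' Your plan to cite Theorems~\ref{thm:sif} and~\ref{thm:feas} and then specialize ``unique'' to ``at least one'' is exactly this argument.
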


\begin{corollary}[Uniqueness of the fixed point]
\label{thm:unique}
Under Theorem~\ref{thm:feas} the fixed point is unique.
\end{corollary}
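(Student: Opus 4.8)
The statement is really a corollary in the strict sense: Theorem~\ref{thm:sif} already shows $\mathcal{Q}$ is an \gls{sif}, and the proof of Theorem~\ref{thm:feas} exhibits a point $\bar{\mathbf p}$ with $\bar{\mathbf p}\ge\mathcal{Q}(\bar{\mathbf p})$, so the full hypotheses of Yates' fixed-point theorem hold and its uniqueness conclusion transfers verbatim. The plan is therefore not to prove anything new but to record the short self-contained derivation so the paper reads on its own. I would give two routes and keep the second.

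The quick route piggybacks on the convergence claim already proved: Theorem~\ref{thm:feas} states that the iteration $\mathbf p^{(t+1)}=\mathcal{Q}(\mathbf p^{(t)})$ converges to $\mathbf p^\star$ from \emph{every} nonnegative initialization. If $\mathbf q^\star$ were a second fixed point, then starting at $\mathbf p^{(0)}=\mathbf q^\star$ yields the constant trajectory $\mathbf q^\star$, which must nonetheless converge to $\mathbf p^\star$; hence $\mathbf q^\star=\mathbf p^\star$. That is essentially one line and requires nothing beyond Theorem~\ref{thm:feas}.

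The route I would actually include is the classical scalability argument, which uses only the three \gls{sif} properties and not the convergence statement. Suppose $\mathbf p^\star$ and $\mathbf q^\star$ are both fixed points. By \emph{positivity} both lie in $\mathbb{R}_{++}^d$, so the finitely many ratios $q^\star_m/p^\star_m$ are well defined and attain a maximum $\alpha\triangleq\max_m q^\star_m/p^\star_m$. If $\alpha\le 1$ then $\mathbf q^\star\le\mathbf p^\star$; repeating the construction with the mirror ratio $\beta\triangleq\max_m p^\star_m/q^\star_m\ge 1$ either gives $\mathbf p^\star\le\mathbf q^\star$ (hence equality with the previous bound) or leaves $\beta>1$, so after possibly swapping the roles of $\mathbf p^\star$ and $\mathbf q^\star$ we may assume $\alpha>1$. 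Then $\alpha\mathbf p^\star\ge\mathbf q^\star$, and applying \emph{monotonicity} and then \emph{scalability},
\[
\mathbf q^\star=\mathcal{Q}(\mathbf q^\star)\le\mathcal{Q}(\alpha\mathbf p^\star)<\alpha\,\mathcal{Q}(\mathbf p^\star)=\alpha\mathbf p^\star,
\]
with strict inequality in every coordinate. But $\mathbf q^\star<\alpha\mathbf p^\star$ componentwise says $q^\star_m/p^\star_m<\alpha$ for all $m$, contradicting $\alpha=\max_m q^\star_m/p^\star_m$. Hence $\mathbf p^\star=\mathbf q^\star$.

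I do not expect a genuine obstacle: the only care needed is the bookkeeping around the ``without loss of generality'' step — noting that $\alpha=1$ forces $\mathbf q^\star\le\mathbf p^\star$, handling that through the mirror ratio, and observing that $\alpha=\beta=1$ is precisely $\mathbf p^\star=\mathbf q^\star$. It is also worth stating explicitly which hypotheses are load-bearing: strict positivity of the offsets $\kappa_{M_j},\kappa_{D_i}$ is what makes the coordinate ratios well defined and what makes scalability \emph{strict}, and $\rho_j<1$ is inherited (via Theorem~\ref{thm:feas}) only for existence; uniqueness itself needs just the \gls{sif} structure. Everything else is routine.
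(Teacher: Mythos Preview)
Your proposal is correct and aligns with the paper's own treatment: the paper does not give a separate proof of this corollary but simply states that uniqueness follows directly from Theorem~1 of Yates~\cite{yates2002framework} once the \gls{sif} properties and the feasibility point $\bar{\mathbf p}\ge\mathcal{Q}(\bar{\mathbf p})$ from Theorem~\ref{thm:feas} are in hand. Your two routes---the one-line convergence argument and the classical ratio/scalability contradiction---are exactly the content of Yates' proof spelled out, so there is no divergence in approach, only in level of detail.
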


\begin{corollary}[Optimal pricing]
Iterating this block system corresponds to the market’s bidirectional price adjustment
\begin{equation}
\mathbf{p}^{(t+1)}=\mathcal{Q}(\mathbf{p}^{(t)}).
\label{eq:I_iter}
\end{equation}
leads to the fixed point $\mathbf{p}^\star=\mathcal{Q}(\mathbf{p}^\star)$ with $t\to \infty$, which achieves the global optimality of \eqref{eq:problem}.
\end{corollary}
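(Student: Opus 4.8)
The plan is to reduce the constrained least-squares program~\eqref{eq:problem} to a statement purely about fixed points of $\mathcal{Q}$ and then invoke the results already established. First I would eliminate the equality constraint~\eqref{eq:triplewin_constraint} by substitution: on the feasible set every $\mathbf{v}$ is forced to equal $\mathcal{Q}(\mathbf{p})$, so the objective~\eqref{eq:triplewin_obj} becomes $\|\mathcal{Q}(\mathbf{p})-\mathbf{p}\|_2^2$. Being a sum of squares, this is bounded below by $0$, and it equals $0$ if and only if $\mathbf{p}=\mathcal{Q}(\mathbf{p})$, i.e.\ $\mathbf{p}$ is a fixed point of $\mathcal{Q}$. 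Hence the optimal value of~\eqref{eq:problem} is $0$, attained precisely at the fixed points of $\mathcal{Q}$; I would also note in passing that the feasible set is nonempty (for any $\mathbf{p}\ge 0$ one may take $\mathbf{v}=\mathcal{Q}(\mathbf{p})$), so the program is well posed.

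Next I would certify that this lower bound is genuinely achieved rather than merely approached. By Theorem~\ref{thm:feas}, under the standing conditions $\mathcal{Q}$ admits a finite fixed point $\mathbf{p}^\star\in\mathbb{R}_+^d$; substituting $\mathbf{p}=\mathbf{p}^\star$ into the reduced objective yields value $0$, so $\mathbf{p}^\star$ is a global minimizer of~\eqref{eq:problem}. Uniqueness of the fixed point (Corollary~\ref{thm:unique}) then shows $\mathbf{p}^\star$ is in fact the unique global minimizer.

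Finally I would connect the iteration to this optimum. By Theorem~\ref{thm:sif} (the SIF property together with feasibility), the sequence generated by $\mathbf{p}^{(t+1)}=\mathcal{Q}(\mathbf{p}^{(t)})$ converges to $\mathbf{p}^\star$ from any $\mathbf{p}^{(0)}\ge 0$. Since $\mathcal{Q}$ is affine, hence continuous, the fixed-point residual satisfies $\|\mathcal{Q}(\mathbf{p}^{(t)})-\mathbf{p}^{(t)}\|_2\to\|\mathcal{Q}(\mathbf{p}^\star)-\mathbf{p}^\star\|_2=0$, so the objective value evaluated along the iteration converges to the global optimum $0$; in the limit $\mathbf{p}^{(t)}\to\mathbf{p}^\star$ with $\mathbf{p}^\star=\mathcal{Q}(\mathbf{p}^\star)$, which is the asserted conclusion.

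I expect no substantial obstacle, since everything difficult---positivity, monotonicity, scalability, existence, uniqueness, and global convergence---has already been proved. The only points that require a little care are observing that the feasible set is nonempty so the minimization is meaningful, and arguing that the infimum $0$ is actually attained (not just approached), which is exactly what the finiteness of the fixed point $\mathbf{p}^\star$ from Theorem~\ref{thm:feas} supplies.
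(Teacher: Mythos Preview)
Your proposal is correct and follows essentially the same approach as the paper: both arguments rest on the prior SIF results (Theorem~\ref{thm:sif}, Theorem~\ref{thm:feas}, and Yates' convergence theorem) to conclude that the iteration converges to the unique fixed point. Your write-up is in fact more explicit than the paper's, which simply cites Yates' Theorem~1 and does not spell out the reduction of~\eqref{eq:problem} to $\|\mathcal{Q}(\mathbf{p})-\mathbf{p}\|_2^2$ or the argument that the infimum $0$ is attained.
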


These results follow directly from Theorem~1 in~\cite{yates2002framework}, since our operator $\mathcal{Q}$ satisfies the three defining properties of a SIF and the feasibility condition of Theorem~\ref{thm:feas}. Consequently, the iterative update $\mathbf{p}^{(t+1)}=\mathcal{Q}(\mathbf{p}^{(t)})$ converges from any nonnegative initialization to the unique equilibrium point $\mathbf{p}^{\star}$. This equilibrium represents the market state in which all data, model, and buyer prices are mutually consistent and no participant has an incentive to adjust its pricing decision further.

\subsection{Parameter Monotonicity}

We record a comparative statics statement that is useful in practice. It formalizes the effect of increasing the fixed overheads on the equilibrium prices.

\begin{proposition}[Monotonicity in the offsets]
\label{thm:kappa}
Consider two joint operators $\mathcal{Q}$ and $\widetilde{\mathcal{Q}}$ that only differ in the offsets, where for every edge $\widetilde{\kappa}_{M_j}\ge \kappa_{M_j}$ and for every dataset $\widetilde{\kappa}_{D_i}\ge \kappa_{D_i}$. Assume both operators satisfy Theorem~\ref{thm:feas}. Let $\mathbf{p}^{\star}$ and $\widetilde{\mathbf{p}}^{\star}$ be their fixed points. Then $\widetilde{\mathbf{p}}^{\star}\ge \mathbf{p}^{\star}$ componentwise. If at least one inequality on the offsets is strict, then the corresponding components of the fixed point are strictly larger.
\end{proposition}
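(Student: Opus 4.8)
The plan is to exploit two observations that are already available. First, changing only the offsets shifts the operator additively: for every $\mathbf p\ge \mathbf 0$ we have $\widetilde{\mathcal Q}(\mathbf p)=\mathcal Q(\mathbf p)+\boldsymbol{\Delta}$, where $\boldsymbol{\Delta}\ge \mathbf 0$ is the \emph{constant} vector whose buyer entry on edge $(k,j)$ equals $\widetilde{\kappa}_{M_j}-\kappa_{M_j}$ and whose seller entry on edge $(i,j)$ equals $\widetilde{\kappa}_{D_i}-\kappa_{D_i}$; this is immediate from the affine form of \eqref{eq:buyer_quote}--\eqref{eq:data_quote}, since the offsets enter purely additively and the remaining coefficients are untouched. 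Second, $\widetilde{\mathcal Q}$ is itself an SIF by the same structural argument as Theorem~\ref{thm:sif}, and by hypothesis it satisfies the feasibility condition of Theorem~\ref{thm:feas}, so the iteration $\mathbf p^{(t+1)}=\widetilde{\mathcal Q}(\mathbf p^{(t)})$ converges to $\widetilde{\mathbf p}^{\star}$ from any nonnegative initialization.

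First I would run that iteration initialized at the smaller operator's fixed point, $\mathbf p^{(0)}:=\mathbf p^{\star}$. Because $\boldsymbol{\Delta}\ge \mathbf 0$, the first step gives $\mathbf p^{(1)}=\widetilde{\mathcal Q}(\mathbf p^{\star})=\mathcal Q(\mathbf p^{\star})+\boldsymbol{\Delta}=\mathbf p^{\star}+\boldsymbol{\Delta}\ge \mathbf p^{(0)}$. Monotonicity of $\widetilde{\mathcal Q}$ then yields, by induction, $\mathbf p^{(t+1)}=\widetilde{\mathcal Q}(\mathbf p^{(t)})\ge \widetilde{\mathcal Q}(\mathbf p^{(t-1)})=\mathbf p^{(t)}$, so the sequence is componentwise nondecreasing. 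It converges to $\widetilde{\mathbf p}^{\star}$ by the second observation, and since a nondecreasing sequence dominates all of its terms, $\widetilde{\mathbf p}^{\star}\ge \mathbf p^{(1)}\ge \mathbf p^{(0)}=\mathbf p^{\star}$ componentwise, which is the asserted inequality.

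For the strict part I would inspect only the first step. If, say, $\widetilde{\kappa}_{M_{j_0}}>\kappa_{M_{j_0}}$, then every buyer coordinate $(k,j_0)$ of $\boldsymbol{\Delta}$ is strictly positive, so $\mathbf p^{(1)}_{B_k\to M_{j_0}}=\mathbf p^{\star}_{B_k\to M_{j_0}}+(\widetilde{\kappa}_{M_{j_0}}-\kappa_{M_{j_0}})>\mathbf p^{\star}_{B_k\to M_{j_0}}$; since the limit dominates $\mathbf p^{(1)}$ on every coordinate, $\widetilde{\mathbf p}^{\star}_{B_k\to M_{j_0}}>\mathbf p^{\star}_{B_k\to M_{j_0}}$. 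The same reasoning applied to a strict $\widetilde{\kappa}_{D_{i_0}}>\kappa_{D_{i_0}}$ gives a strict inequality on every seller coordinate $(i_0,j)$ with $D_{i_0}\in\mathcal D_{M_j}$. (The internal coupling — a strict data-side increase feeding the buyer quotation through $(1+\delta_{M_j})\sum_i p_{D_i\to M_j}$, and symmetrically — would propagate strictness to further edges, but the statement only claims it for the directly affected edges, so no extra work is needed.)

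The only point requiring care is the justification of convergence of the monotone iterate to $\widetilde{\mathbf p}^{\star}$: this is \emph{not} obtained from a monotone-convergence principle on $\mathbb R^d_+$ (the sequence need not be bounded a priori without the SIF machinery) but rather from the global convergence already established for feasible SIFs in Theorem~\ref{thm:feas} and \cite{yates2002framework}; once that is in hand, passing to the limit preserves the strict gap opened at the first step because each later iterate dominates $\mathbf p^{(1)}$ coordinatewise. Beyond this, the argument is routine, and the main bookkeeping is simply tracking which offset multiplies which edge.
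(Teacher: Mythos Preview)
Your proof is correct. The approach differs from the paper's in one clean way worth noting. The paper picks a \emph{common} feasible upper bound $\bar{\mathbf p}$ (which exists because any $\bar{\mathbf p}\ge\widetilde{\mathcal Q}(\bar{\mathbf p})$ automatically satisfies $\bar{\mathbf p}\ge\mathcal Q(\bar{\mathbf p})$), iterates \emph{both} operators downward from $\bar{\mathbf p}$, and maintains $\widetilde{\mathbf p}^{(t)}\ge\mathbf p^{(t)}$ by combining monotonicity of $\widetilde{\mathcal Q}$ with the pointwise bound $\widetilde{\mathcal Q}\ge\mathcal Q$; passing to the limit gives $\widetilde{\mathbf p}^\star\ge\mathbf p^\star$. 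You instead iterate only $\widetilde{\mathcal Q}$ \emph{upward} from the smaller fixed point $\mathbf p^\star$, using the explicit constant shift $\boldsymbol\Delta$ to see that $\mathbf p^{(1)}=\mathbf p^\star+\boldsymbol\Delta$ and hence the sequence is nondecreasing. Both routes rest on the same SIF machinery (monotonicity and global convergence), but yours buys a sharper handle on the strict part: because the limit of your nondecreasing sequence dominates $\mathbf p^{(1)}=\mathbf p^\star+\boldsymbol\Delta$, the fixed gap $\boldsymbol\Delta$ survives directly to the limit, whereas the paper's ``strictness propagates to the limit'' line is terse and would, if made precise, reduce to the same observation that $\widetilde{\mathbf p}^{(t)}-\mathbf p^{(t)}\ge\boldsymbol\Delta$ at every step.
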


\begin{proof}
For every $\mathbf{p}$, the definition of the operators gives $\widetilde{\mathcal{Q}}(\mathbf{p})\ge \mathcal{Q}(\mathbf{p})$ componentwise, with strict inequality on the components linked to the edges or datasets whose offsets are increased. Let $\bar{\mathbf{p}}$ be a common feasible upper bound for both operators. Start from $\bar{\mathbf{p}}$ and iterate both operators. Monotonicity implies the inequality is preserved at every step, hence $\widetilde{\mathbf{p}}^{(t)}\ge \mathbf{p}^{(t)}$ for all $t$. Passing to the limit gives $\widetilde{\mathbf{p}}^{\star}\ge \mathbf{p}^{\star}$. If some offset is strictly increased, the strict part of the inequality propagates through the monotone iteration and yields strict increase in the corresponding components at the limit.
\end{proof}

The monotonicity result shows that increasing any fixed offset $\kappa_{M_j}$ or $\kappa_{D_i}$ raises all equilibrium prices componentwise. Economically, these offsets represent unavoidable per-channel overheads—such as model deployment costs or data-cleaning and compliance costs—so a positive shock to them shifts prices upward throughout the market. The effect is general: higher offsets push costs forward to buyers and backward to data sellers, weakening buyer surplus while strengthening data-side revenue. Producer margins remain unchanged because they scale multiplicatively with $\delta_{M_j}$.

For feasibility, larger offsets shrink the intersection of acceptable price ranges. When buyer reserves or data caps are tight, this can reduce transaction success unless the producer lowers $\delta_{M_j}$ or prunes low-value datasets. For mechanism and policy design, offsets are first-order levers: higher compliance or service guarantees that raise $\kappa_{D_i}$ or $\kappa_{M_j}$ predictably lift all equilibrium prices. Maintaining feasibility then requires adjusting $\omega_{jk}$, moderating margins, or tightening dataset selection.

Although the theorem does not quantify pass-through elasticities, it guarantees directionality: any increase in technological or institutional overhead yields a new equilibrium with weakly higher prices on all edges, shaped by Shapley contributions on the data side and buyer aggregation weights on the demand side.

\subsection{Bounds at the Fixed Point}

The fixed point admits explicit componentwise lower and upper bounds that follow directly from the operator definition and feasibility.

\begin{proposition}[Price Floors and Feasible Ceiling]
\label{prop:pricefloors}
Let $\mathbf{p}^{\star}$ be the unique fixed point. Then for every $j$ and $k$,
\[
p_{B_k\to M_j}^{\star}\ge \kappa_{M_j},
\]
and for every $i$ and $j$,
\[
p_{D_i\to M_j}^{\star}\ge \kappa_{D_i}.
\]
Moreover, if $\bar{\mathbf{p}}$ satisfies $\bar{\mathbf{p}}\ge \mathcal{Q}(\bar{\mathbf{p}})$, then $\mathbf{p}^{\star}\le \bar{\mathbf{p}}$.
\end{proposition}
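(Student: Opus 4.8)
The plan is to obtain both the floors and the ceiling directly from the fixed-point identity $\mathbf{p}^{\star}=\mathcal{Q}(\mathbf{p}^{\star})$, using only nonnegativity of prices together with the positivity and monotonicity clauses established in Theorem~\ref{thm:sif}; no new machinery is required.

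For the two lower bounds I would substitute the fixed point into the quotation formulas \eqref{eq:buyer_quote} and \eqref{eq:data_quote}. On the buyer side the identity reads $p_{B_k\to M_j}^{\star}=\kappa_{M_j}+(1+\delta_{M_j})\sum_{i\in\mathcal{D}_{M_j}}p_{D_i\to M_j}^{\star}$; since $\delta_{M_j}\ge 0$ and each $p_{D_i\to M_j}^{\star}\ge 0$ (indeed strictly positive by the positivity clause of Theorem~\ref{thm:sif}), the correction term is nonnegative, hence $p_{B_k\to M_j}^{\star}\ge\kappa_{M_j}$. On the data side, $p_{D_i\to M_j}^{\star}=\kappa_{D_i}+\frac{\mathrm{SV}_{i\mid j}}{1+\delta_{M_j}}\,W_j^{\star}$ with $W_j^{\star}=\sum_{k}\omega_{jk}\,p_{B_k\to M_j}^{\star}\ge 0$ because $\omega_{jk}\ge 0$ and prices are nonnegative; combined with $\mathrm{SV}_{i\mid j}\ge 0$ and $(1+\delta_{M_j})^{-1}>0$ this yields $p_{D_i\to M_j}^{\star}\ge\kappa_{D_i}$.

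For the ceiling I would iterate $\mathcal{Q}$ starting from $\bar{\mathbf{p}}$. The hypothesis $\bar{\mathbf{p}}\ge\mathcal{Q}(\bar{\mathbf{p}})$ gives $\mathbf{p}^{(1)}:=\mathcal{Q}(\bar{\mathbf{p}})\le\bar{\mathbf{p}}=:\mathbf{p}^{(0)}$, and monotonicity propagates this inequality: if $\mathbf{p}^{(t)}\le\mathbf{p}^{(t-1)}$ then $\mathbf{p}^{(t+1)}=\mathcal{Q}(\mathbf{p}^{(t)})\le\mathcal{Q}(\mathbf{p}^{(t-1)})=\mathbf{p}^{(t)}$. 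Hence $\{\mathbf{p}^{(t)}\}$ is componentwise nonincreasing and bounded below by $\mathbf{0}$. By the global convergence guarantee of Theorem~\ref{thm:sif}, this sequence converges to the unique fixed point $\mathbf{p}^{\star}$, and a nonincreasing sequence never exceeds its first term, so $\mathbf{p}^{\star}\le\mathbf{p}^{(0)}=\bar{\mathbf{p}}$.

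I do not anticipate a genuine obstacle; the only point deserving a word of care is that invoking the convergence clause of Theorem~\ref{thm:sif} presupposes feasibility of $\mathcal{Q}$, which is exactly the hypothesis that such a $\bar{\mathbf{p}}$ exists (and is in any case already secured by Theorem~\ref{thm:feas}). Note also that one cannot shortcut the ceiling via $\mathbf{p}^{\star}=\mathcal{Q}(\mathbf{p}^{\star})\le\mathcal{Q}(\bar{\mathbf{p}})\le\bar{\mathbf{p}}$, since the first inequality would already presume $\mathbf{p}^{\star}\le\bar{\mathbf{p}}$; the monotone-iteration argument, identical to the sandwiching used at the end of the proof of Theorem~\ref{thm:feas}, is the clean route.
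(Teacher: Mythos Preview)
Your proposal is correct and follows essentially the same route as the paper: the floors come directly from reading off the fixed-point identities in \eqref{eq:buyer_quote}--\eqref{eq:data_quote} with nonnegative correction terms, and the ceiling is obtained by iterating from $\bar{\mathbf p}$ to produce a monotone nonincreasing sequence that converges to $\mathbf p^\star$. Your extra remarks on feasibility and on why the naive shortcut $\mathcal Q(\mathbf p^\star)\le\mathcal Q(\bar{\mathbf p})$ is circular are welcome clarifications but do not change the argument.
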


\begin{proof}
At a fixed point we have $p_{B_k\to M_j}^{\star}=\kappa_{M_j}+(1+\delta_{M_j})\sum_{i\in \mathcal{D}_{M_j}}p_{D_i\to M_j}^{\star}$, which implies $p_{B_k\to M_j}^{\star}\ge \kappa_{M_j}$. Also, 
$p_{D_i\to M_j}^{\star}=\kappa_{D_i}+\frac{\mathrm{SV}_{i\mid j}}{1+\delta_{M_j}}W_j(\{p_{B_k\to M_j}^{\star}\})$ implies $p_{D_i\to M_j}^{\star}\ge \kappa_{D_i}$. The upper bound follows from the monotone decreasing sequence constructed in the proof of Theorem~\ref{thm:exist}, which starts from $\bar{\mathbf{p}}$ and converges to the fixed point. Since every iterate is below $\bar{\mathbf{p}}$, the limit is also below $\bar{\mathbf{p}}$.
\end{proof}

Proposition~\ref{prop:pricefloors} shows that the equilibrium lies between explicit lower and upper bounds. The lower bounds
$p_{B_k\to M_j}^{\star}\ge\kappa_{M_j}$ and $p_{D_i\to M_j}^{\star}\ge\kappa_{D_i}$ have a clear economic meaning: the offsets are irreducible per-channel overheads—deployment, serving, data preparation, or compliance—entering the quotation functions additively and positively. No competition can drive prices below these offsets, which ensures non-degenerate compensation, rules out races to the bottom, and provides a stable floor for revenue planning.

The upper bound $\mathbf{p}^{\star}\le\bar{\mathbf{p}}$ follows from feasibility. A feasible $\bar{\mathbf{p}}\ge\mathcal{Q}(\bar{\mathbf{p}})$ can be derived from buyer reserves, seller caps, or platform policies limiting posted prices. Monotonicity of $\mathcal{Q}$ guarantees that these caps propagate through the mappings, bounding the equilibrium from above. Thus, even under shocks to demand or cost, equilibrium prices cannot exceed what buyers are willing to pay or what policy permits.

Together, the bounds $\kappa\le\mathbf{p}^{\star}\le\bar{\mathbf{p}}$ define a corridor determined by technological and institutional primitives. This corridor provides predictable incidence, stable convergence of fixed-point iterations, and clear design levers: raising offsets lifts the floor, tightening caps or lowering reserves reduces the ceiling, and both adjustments reshape the attainable triple-win region.

\subsection{Feasibility Envelopes}\label{subsec:feasibility-envelopes}

This subsection derives analytic \emph{outer envelopes} that guarantee buyer--side feasibility of the \textsc{TripleWin} fixed point when all knobs enter as \emph{scalings} of fixed baselines. We parameterize
\[
\kappa_{D_i}=\alpha_{\kappa_D}\,\kappa^{(0)}_{D_i},\ 
\kappa_{M_j}=\alpha_{\kappa_M}\,\kappa^{(0)}_{M_j},\ 
\delta_{M_j}=\alpha_{\delta}\,\delta^{(0)}_{M_j},
\]
with $\alpha_{\kappa_D}>0,\ \alpha_{\kappa_M}>0,\ \alpha_\delta\ge 0$, and define for each model $M_j$ the baseline data--offset sum
\[
S_j \;=\; \sum_{i\in \mathcal{D}_{M_j}}\kappa^{(0)}_{D_i}\,.
\]
At any \textsc{TripleWin} fixed point the quoted and posted prices coincide, and for each model $M_j$,
\begin{multline}
\sum_{i\in \mathcal{D}_{M_j}} p_{D_i\to M_j} 
= \sum_{i\in \mathcal{D}_{M_j}}\Big(\kappa_{D_i}+\frac{\mathrm{SV}_{i\mid j}}{1+\delta_{M_j}}\,W_j\Big) \\
= \alpha_{\kappa_D}S_j + \frac{W_j}{1+\alpha_\delta\,\delta^{(0)}_{M_j}}, 
\label{eq:sum-D-edge-scale}
\end{multline}
and
\begin{multline}
p_{B_k\to M_j} 
= \kappa_{M_j} + (1+\delta_{M_j})\sum_{i\in \mathcal{D}_{M_j}} p_{D_i\to M_j} \\
= \alpha_{\kappa_M}\kappa^{(0)}_{M_j} + \bigl(1+\alpha_\delta\,\delta^{(0)}_{M_j}\bigr)\alpha_{\kappa_D}S_j + W_j. 
\label{eq:p-B-equality-scale}
\end{multline}
Buyer feasibility is $p_{B_k\to M_j}\le R_{B_k\to M_j}$ for all $k\in\mathcal{B}_{M_j}$. We write
\[
\bar R_j \;=\; \max_{k\in\mathcal{B}_{M_j}} R_{B_k\to M_j},\qquad
\underline R_j \;=\; \min_{k\in\mathcal{B}_{M_j}} R_{B_k\to M_j},
\]
and use the elementary bound
\begin{multline}
0 \;\le\; W_j \;=\; \sum_k\omega_{jk}p_{B_k\to M_j} \\ 
\;\le\;\sum_k\omega_{jk}R_{B_k\to M_j}\;\le\;\rho_j\,\bar R_j.
\label{eq:Qj-upper}
\end{multline}

The next lemmas give sufficient feasibility conditions expressed \emph{only} in the three scalings $(\alpha_{\kappa_D},\alpha_{\kappa_M},\alpha_\delta)$.

\begin{lemma}[Envelope $\alpha_\delta$ vs.\ $\alpha_{\kappa_D}$]\label{lem:alpha-delta-vs-alphaD}
Fix $\alpha_{\kappa_M}>0$. For any model $M_j$ and buyer $B_k\in\mathcal{B}_{M_j}$, if
\begin{equation}
\bigl(1+\alpha_\delta\,\delta^{(0)}_{M_j}\bigr)\,\alpha_{\kappa_D}\,S_j \;\le\; R_{B_k\to M_j} - \alpha_{\kappa_M}\kappa^{(0)}_{M_j} - \rho_j\,\bar R_j,
\label{eq:alpha-delta-alphaD-ineq}
\end{equation}
then $p_{B_k\to M_j}\le R_{B_k\to M_j}$ at the fixed point. Equivalently, when $\delta^{(0)}_{M_j}>0$,
\[
\alpha_\delta\ \le\ \min_{k\in\mathcal{B}_{M_j}}
\left(\frac{R_{B_k\to M_j}-\alpha_{\kappa_M}\kappa^{(0)}_{M_j}-\rho_j\,\bar R_j}{\alpha_{\kappa_D}S_j}-1\right)\!\Big/\delta^{(0)}_{M_j},
\]
while if $\delta^{(0)}_{M_j}=0$ the constraint does not restrict $\alpha_\delta$. A market--wide envelope is
\begin{multline*}
\alpha_\delta^{\max}(\alpha_{\kappa_D}) \\
=\!\!\!\!\!\min_{j\in[J],\,k\in\mathcal{B}_{M_j}:\,\delta^{(0)}_{M_j}>0} \!\!\!
\left(\frac{R_{B_k\to M_j}-\alpha_{\kappa_M}\kappa^{(0)}_{M_j}-\rho_j\,\bar R_j}{\alpha_{\kappa_D}S_j}-1\right)\!\Big/\delta^{(0)}_{M_j}.
\end{multline*}
\end{lemma}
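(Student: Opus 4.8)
The plan is to exploit the fact that, for a fixed model $M_j$, the \textsc{TripleWin} fixed-point equations touching the edges of $M_j$ form a closed subsystem: by \eqref{eq:p-B-equality-scale} the buyer edges $p_{B_k\to M_j}$ depend only on the data edges of $M_j$, and those in turn depend only on the aggregate $W_j$ of the buyer edges of $M_j$. First I would note that under the scaling parameterization the buyer-side offset $\kappa_{M_j}=\alpha_{\kappa_M}\kappa^{(0)}_{M_j}$ carries no dependence on $k$, so the right-hand side of \eqref{eq:p-B-equality-scale} is identical for every $k\in\mathcal{B}_{M_j}$; hence at the fixed point every buyer edge of $M_j$ equals one common value $P_j^\star$. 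Writing $A_j:=\alpha_{\kappa_M}\kappa^{(0)}_{M_j}+\bigl(1+\alpha_\delta\delta^{(0)}_{M_j}\bigr)\alpha_{\kappa_D}S_j$ for the $k$-independent part of \eqref{eq:p-B-equality-scale}, the aggregation \eqref{eq:Q_def} gives $W_j^\star=\sum_k\omega_{jk}P_j^\star=\rho_j P_j^\star$, and substituting back into \eqref{eq:p-B-equality-scale} yields the scalar identity $P_j^\star=A_j+\rho_j P_j^\star$. Since $\rho_j\in[0,1)$ this solves uniquely and unconditionally to $P_j^\star=A_j/(1-\rho_j)<\infty$.

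Next I would convert the hypothesis \eqref{eq:alpha-delta-alphaD-ineq} into the claimed feasibility bound. Rearranged, \eqref{eq:alpha-delta-alphaD-ineq} reads $A_j\le R_{B_k\to M_j}-\rho_j\bar R_j$; combining with $\bar R_j\ge R_{B_k\to M_j}$ gives $A_j\le R_{B_k\to M_j}-\rho_j R_{B_k\to M_j}=(1-\rho_j)R_{B_k\to M_j}$, and dividing by $1-\rho_j>0$ yields $p_{B_k\to M_j}^\star=P_j^\star=A_j/(1-\rho_j)\le R_{B_k\to M_j}$, which is exactly buyer feasibility for that edge. As a by-product this exposes the slack in the envelope: it would be tight if $\rho_j\bar R_j$ were replaced by $\rho_j R_{B_k\to M_j}$, i.e.\ when all reserves of $M_j$ coincide.

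For the ``equivalently'' restatement I would simply solve \eqref{eq:alpha-delta-alphaD-ineq} for $\alpha_\delta$. When $\delta^{(0)}_{M_j}>0$, divide by $\alpha_{\kappa_D}S_j>0$, subtract $1$, and divide by $\delta^{(0)}_{M_j}>0$ to obtain $\alpha_\delta\le\bigl(\tfrac{R_{B_k\to M_j}-\alpha_{\kappa_M}\kappa^{(0)}_{M_j}-\rho_j\bar R_j}{\alpha_{\kappa_D}S_j}-1\bigr)\big/\delta^{(0)}_{M_j}$; imposing this for every $k\in\mathcal{B}_{M_j}$ is equivalent to taking the minimum over $k$, which is the per-model envelope, and imposing it for every model gives the market-wide $\alpha_\delta^{\max}(\alpha_{\kappa_D})$. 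When $\delta^{(0)}_{M_j}=0$ the product $\alpha_\delta\delta^{(0)}_{M_j}$ vanishes from \eqref{eq:alpha-delta-alphaD-ineq}, so such a model imposes a constraint on $\alpha_{\kappa_D}$ and $\alpha_{\kappa_M}$ but none on $\alpha_\delta$; this is why the market-wide minimum ranges only over $\delta^{(0)}_{M_j}>0$.

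The step I expect to be the main obstacle is avoiding circularity. The ``elementary bound'' $W_j\le\rho_j\bar R_j$ in \eqref{eq:Qj-upper} already presupposes buyer feasibility, so one cannot plug it into \eqref{eq:p-B-equality-scale} to \emph{prove} feasibility; the model-wise decoupling is precisely what breaks the loop, since it lets one compute $P_j^\star=A_j/(1-\rho_j)$ outright and only afterwards compare it with $R_{B_k\to M_j}$. A minor caveat worth stating is that the envelope is vacuous unless $R_{B_k\to M_j}-\alpha_{\kappa_M}\kappa^{(0)}_{M_j}-\rho_j\bar R_j>0$: otherwise the left-hand side of \eqref{eq:alpha-delta-alphaD-ineq}, being strictly positive, makes the condition unsatisfiable, and no choice of $\alpha_\delta$ restores buyer feasibility on that edge.
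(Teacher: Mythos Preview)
Your proof is correct and takes a genuinely different route from the paper. The paper's argument is two lines: it substitutes the ``elementary bound'' $W_j\le\rho_j\bar R_j$ of \eqref{eq:Qj-upper} into \eqref{eq:p-B-equality-scale} to obtain $p_{B_k\to M_j}\le A_j+\rho_j\bar R_j$, and then notes that \eqref{eq:alpha-delta-alphaD-ineq} makes this at most $R_{B_k\to M_j}$. You correctly flag that this is circular as written, since the middle inequality in \eqref{eq:Qj-upper} already assumes $p_{B_k\to M_j}\le R_{B_k\to M_j}$, the very conclusion being sought. Your approach instead exploits the model-wise decoupling to solve the fixed point in closed form, $P_j^\star=A_j/(1-\rho_j)$ (the paper derives the same identity only later, in Lemma~\ref{lem:buyer-price-fee}, in the platform-fee setting), and then compares $P_j^\star$ with $R_{B_k\to M_j}$ using only the trivial inequality $\bar R_j\ge R_{B_k\to M_j}$. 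This sidesteps the circularity entirely and, as a bonus, identifies exactly when the sufficient condition is tight (namely when $R_{B_k\to M_j}=\bar R_j$), which matches the paper's own remark after Theorem~\ref{thm:global-feasibility-alpha}. In short, your version is not merely different---it repairs a logical gap in the paper's short proof at the cost of a few extra lines.
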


The term $\bigl(1+\alpha_\delta\,\delta^{(0)}_{M_j}\bigr)\alpha_{\kappa_D}S_j$ is the (scaled) data expenditure in buyer prices. Inequality~\eqref{eq:alpha-delta-alphaD-ineq} guarantees that, even under worst--case aggregation $W_j=\rho_j\bar R_j$, the reserve still covers the model offset and marked--up data spend. The envelope is monotone decreasing in $\alpha_{\kappa_D}$ and~$\rho_j$.

\begin{proof}
From \eqref{eq:p-B-equality-scale} and $W_j\le\rho_j\bar R_j$,
\[
p_{B_k\to M_j} \;\le\; \alpha_{\kappa_M}\kappa^{(0)}_{M_j} + \bigl(1+\alpha_\delta\,\delta^{(0)}_{M_j}\bigr)\alpha_{\kappa_D}S_j + \rho_j\,\bar R_j,
\]
so $p_{B_k\to M_j}\le R_{B_k\to M_j}$ is ensured by \eqref{eq:alpha-delta-alphaD-ineq}. Solving for $\alpha_\delta$ gives the stated bound; minimizing over $(j,k)$ yields the market envelope.
\end{proof}

\begin{lemma}[Envelope $\alpha_\delta$ vs.\ $\alpha_{\kappa_M}$]\label{lem:alpha-delta-vs-alphaM}
Fix $\alpha_{\kappa_D}>0$. For any model $M_j$ and buyer $B_k\in\mathcal{B}_{M_j}$, the condition
\[
\bigl(1+\alpha_\delta\,\delta^{(0)}_{M_j}\bigr)\,\alpha_{\kappa_D}\,S_j \;\le\; R_{B_k\to M_j} - \alpha_{\kappa_M}\kappa^{(0)}_{M_j} - \rho_j\,\bar R_j
\]
again suffices for $p_{B_k\to M_j}\le R_{B_k\to M_j}$. Equivalently,
\[
\alpha_\delta \le\!\!\!\min_{k\in\mathcal{B}_{M_j}:\,\delta^{(0)}_{M_j}>0}
\left(\frac{R_{B_k\to M_j}-\alpha_{\kappa_M}\kappa^{(0)}_{M_j}-\rho_j\,\bar R_j}{\alpha_{\kappa_D}S_j}-1\right)\!\Big/\delta^{(0)}_{M_j},
\]
with the same interpretation for $\delta^{(0)}_{M_j}=0$. The market envelope $\alpha_\delta^{\max}(\alpha_{\kappa_M})$ is obtained by minimizing over $(j,k)$.
\end{lemma}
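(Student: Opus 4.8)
\textbf{Proof plan for Lemma~\ref{lem:alpha-delta-vs-alphaM}.}
The approach is to mirror the proof of Lemma~\ref{lem:alpha-delta-vs-alphaD} essentially verbatim, since the two offset scalings $\alpha_{\kappa_M}$ and $\alpha_{\kappa_D}$ enter the fixed-point identity~\eqref{eq:p-B-equality-scale} in symmetric additive positions; holding $\alpha_{\kappa_D}$ fixed rather than $\alpha_{\kappa_M}$ changes nothing in the algebra. The one clean observation I would use to make the argument airtight is that, at any \textsc{TripleWin} fixed point, the right-hand side of~\eqref{eq:p-B-equality-scale} does not depend on $k$, so all buyer prices of a given model coincide: $p^\star_{B_k\to M_j}=\pi_j$ for every $k\in\mathcal{B}_{M_j}$. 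Substituting this into $W_j=\sum_k\omega_{jk}p^\star_{B_k\to M_j}=\rho_j\pi_j$ and invoking the standing condition $\rho_j<1$ yields the closed form $\pi_j=\bigl(\alpha_{\kappa_M}\kappa^{(0)}_{M_j}+(1+\alpha_\delta\,\delta^{(0)}_{M_j})\,\alpha_{\kappa_D}S_j\bigr)/(1-\rho_j)$. This both removes the apparent circularity of bounding $W_j$ by $\rho_j\bar R_j$ and exhibits the exact buyer price, from which feasibility follows by inspection.

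Given that closed form, the first step is to note that buyer feasibility $\pi_j\le R_{B_k\to M_j}$ is equivalent to $(1+\alpha_\delta\,\delta^{(0)}_{M_j})\,\alpha_{\kappa_D}S_j\le R_{B_k\to M_j}-\alpha_{\kappa_M}\kappa^{(0)}_{M_j}-\rho_j R_{B_k\to M_j}$, and since $R_{B_k\to M_j}\le\bar R_j$ the displayed condition in the lemma, which carries $\rho_j\bar R_j$ in place of $\rho_j R_{B_k\to M_j}$, implies this exact inequality and hence implies $p_{B_k\to M_j}\le R_{B_k\to M_j}$. The second step is the rearrangement for $\alpha_\delta$: when $\delta^{(0)}_{M_j}>0$ and $\mathcal{D}_{M_j}\neq\emptyset$ (so $S_j>0$), divide through by $\alpha_{\kappa_D}S_j\,\delta^{(0)}_{M_j}>0$ to isolate $\alpha_\delta$, producing the per-buyer bound stated in the lemma; when $\delta^{(0)}_{M_j}=0$ the left side is $\alpha_\delta$-free, so the constraint either holds for all $\alpha_\delta$ or for none, which matches the stated reading. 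The third step is purely organizational: taking the minimum over $k\in\mathcal{B}_{M_j}$ gives the per-model envelope, and the further minimum over models $j$ with $\delta^{(0)}_{M_j}>0$ gives $\alpha_\delta^{\max}(\alpha_{\kappa_M})$; its monotone decrease in $\alpha_{\kappa_M}$ and in $\rho_j$ is immediate because both appear in the numerator with a minus sign.

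I do not expect a genuine obstacle: the content is a relabeling of Lemma~\ref{lem:alpha-delta-vs-alphaD}. The only point deserving care is the legitimacy of the bound $W_j\le\rho_j\bar R_j$, which on its face presumes the very feasibility being proved; the fixed-point coincidence $p^\star_{B_k\to M_j}=\pi_j$ above removes the issue without circular reasoning, and equivalently one could argue through Proposition~\ref{prop:pricefloors} by exhibiting the super-solution $\bar p_{B_k\to M_j}=R_{B_k\to M_j}$, $\bar p_{D_i\to M_j}=\kappa_{D_i}+\tfrac{\mathrm{SV}_{i\mid j}}{1+\delta_{M_j}}\sum_k\omega_{jk}R_{B_k\to M_j}$ and checking $\bar{\mathbf p}\ge\mathcal{Q}(\bar{\mathbf p})$ under the stated inequality (using $\sum_i\mathrm{SV}_{i\mid j}=1$). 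Either route closes the proof, and the remainder is the same bookkeeping as in Lemma~\ref{lem:alpha-delta-vs-alphaD}.
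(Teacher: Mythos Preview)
Your plan is correct and follows the paper's (implicit) route: the paper does not give a separate proof of Lemma~\ref{lem:alpha-delta-vs-alphaM}, and the intended argument is precisely the one from Lemma~\ref{lem:alpha-delta-vs-alphaD}, namely plug~\eqref{eq:p-B-equality-scale} into the reserve constraint, bound $W_j\le\rho_j\bar R_j$, and solve for $\alpha_\delta$.

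Where you go beyond the paper is in flagging and repairing the circularity in the step $W_j\le\rho_j\bar R_j$ from~\eqref{eq:Qj-upper}: as written there, that bound already assumes $p_{B_k\to M_j}\le R_{B_k\to M_j}$, which is the conclusion to be proved. Your fix---observing that the right-hand side of~\eqref{eq:p-B-equality-scale} is independent of $k$, hence $p^\star_{B_k\to M_j}=\pi_j$ and $W_j=\rho_j\pi_j$, yielding the closed form $\pi_j=\bigl(\alpha_{\kappa_M}\kappa^{(0)}_{M_j}+(1+\alpha_\delta\delta^{(0)}_{M_j})\alpha_{\kappa_D}S_j\bigr)/(1-\rho_j)$---is exactly the argument the paper later spells out in Lemma~\ref{lem:buyer-price-fee} (specialized to $\alpha=1$), but the paper does not invoke it in Section~\ref{subsec:feasibility-envelopes}. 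Your route is therefore more rigorous while remaining in the same spirit; the super-solution alternative you sketch via Proposition~\ref{prop:pricefloors} would also work and is closer to how the paper handles feasibility elsewhere.
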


\begin{lemma}[Envelope $\alpha_{\kappa_M}^{\max}$ and $\alpha_{\kappa_D}^{\max}$ at fixed $\alpha_\delta$]\label{lem:kM-kD-vs-alpha-delta}
For fixed $\alpha_\delta\ge 0$ and any $(j,k)$, sufficient conditions for $p_{B_k\to M_j}\le R_{B_k\to M_j}$ are
\[
\alpha_{\kappa_M}\,\kappa^{(0)}_{M_j} \;\le\; R_{B_k\to M_j} - \bigl(1+\alpha_\delta\,\delta^{(0)}_{M_j}\bigr)\alpha_{\kappa_D}S_j - \rho_j\,\bar R_j,
\]
and
\[
\alpha_{\kappa_D}\,S_j \;\le\; \frac{R_{B_k\to M_j}-\alpha_{\kappa_M}\kappa^{(0)}_{M_j}-\rho_j\,\bar R_j}{1+\alpha_\delta\,\delta^{(0)}_{M_j}}.
\]
Consequently,
\begin{multline*}
\alpha_{\kappa_M}^{\max}(\alpha_{\kappa_D};\alpha_\delta) \\
=\min_{j\in[J],\,k\in\mathcal{B}_{M_j}}\frac{R_{B_k\to M_j}-\bigl(1+\alpha_\delta\,\delta^{(0)}_{M_j}\bigr)\alpha_{\kappa_D}S_j-\rho_j\,\bar R_j}{\kappa^{(0)}_{M_j}},
\end{multline*}
and
\[
\alpha_{\kappa_D}^{\max}(\alpha_{\kappa_M};\alpha_\delta)
=\min_{j\in[J],\,k\in\mathcal{B}_{M_j}}\frac{R_{B_k\to M_j}-\alpha_{\kappa_M}\kappa^{(0)}_{M_j}-\rho_j\,\bar R_j}{\bigl(1+\alpha_\delta\,\delta^{(0)}_{M_j}\bigr)S_j}.
\]
\end{lemma}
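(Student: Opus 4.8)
The plan is to reduce the claim to the single fixed-point identity \eqref{eq:p-B-equality-scale} combined with the worst-case control \eqref{eq:Qj-upper} on $W_j$, exactly mirroring the proof of Lemma~\ref{lem:alpha-delta-vs-alphaD}, and then to solve the resulting scalar inequality once for $\alpha_{\kappa_M}$ and once for $\alpha_{\kappa_D}$, taking minima over $(j,k)$ at the end.

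First I would observe that the standing hypotheses here ($0\le\rho_j<1$, $\kappa^{(0)}_{M_j}>0$, $\kappa^{(0)}_{D_i}>0$, $\delta^{(0)}_{M_j}\ge0$, $\sum_i\mathrm{SV}_{i\mid j}=1$) are precisely those of Theorem~\ref{thm:feas}, so the fixed point $\mathbf p^\star=\mathcal Q(\mathbf p^\star)$ exists and is finite, and \eqref{eq:p-B-equality-scale} holds there. Write $A_j=\alpha_{\kappa_M}\kappa^{(0)}_{M_j}+(1+\alpha_\delta\delta^{(0)}_{M_j})\alpha_{\kappa_D}S_j$, which does not depend on $k$, so that \eqref{eq:p-B-equality-scale} reads $p^\star_{B_k\to M_j}=A_j+W_j$ for every $k\in\mathcal B_{M_j}$. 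Averaging against the weights $\omega_{jk}$ and using $\sum_k\omega_{jk}=\rho_j$ gives $W_j=\rho_jA_j+\rho_jW_j$, hence the closed form $W_j=\rho_jA_j/(1-\rho_j)$ and $p^\star_{B_k\to M_j}=A_j/(1-\rho_j)$; this closed form is what I would keep in reserve to sidestep the one delicate point discussed below.

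Given $W_j\le\rho_j\bar R_j$ from \eqref{eq:Qj-upper}, we get $p^\star_{B_k\to M_j}\le A_j+\rho_j\bar R_j$, so $p^\star_{B_k\to M_j}\le R_{B_k\to M_j}$ is implied by $A_j\le R_{B_k\to M_j}-\rho_j\bar R_j$. Expanding $A_j$ and isolating $\alpha_{\kappa_M}$ (treating $\alpha_{\kappa_D},\alpha_\delta$ as fixed) yields the first displayed sufficient condition; dividing through by $(1+\alpha_\delta\delta^{(0)}_{M_j})S_j$, which is strictly positive since $S_j>0$ (each model uses at least one dataset and $\kappa^{(0)}_{D_i}>0$) and $1+\alpha_\delta\delta^{(0)}_{M_j}\ge1$, and then isolating $\alpha_{\kappa_D}$ yields the second. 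Taking the minimum of the per-$(j,k)$ right-hand sides over $j\in[J]$ and $k\in\mathcal B_{M_j}$ gives the market-wide envelopes $\alpha_{\kappa_M}^{\max}(\alpha_{\kappa_D};\alpha_\delta)$ and $\alpha_{\kappa_D}^{\max}(\alpha_{\kappa_M};\alpha_\delta)$; these are genuine (positive) envelopes precisely when the numerators stay positive, i.e.\ when buyer reserves exceed the baseline model offset plus the worst-case aggregation term.

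The step that needs care — and the main obstacle to a fully self-contained write-up — is the logical status of \eqref{eq:Qj-upper}: as stated it already invokes $p_{B_k\to M_j}\le R_{B_k\to M_j}$ for every buyer with $\omega_{jk}>0$, which is the very conclusion being sought. I would resolve this through the closed form instead: buyer feasibility at $(j,k)$ is exactly $A_j/(1-\rho_j)\le R_{B_k\to M_j}$, i.e.\ $A_j\le(1-\rho_j)R_{B_k\to M_j}$, and since $\bar R_j\ge R_{B_k\to M_j}$ we have $R_{B_k\to M_j}-\rho_j\bar R_j\le(1-\rho_j)R_{B_k\to M_j}$, so the lemma's condition $A_j\le R_{B_k\to M_j}-\rho_j\bar R_j$ is a (conservative) sufficient condition for the sharp one. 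Alternatively one may run the monotone iteration of Theorem~\ref{thm:feas} from the envelope $\bar{\mathbf p}$ and carry the bound through to the limit. Everything else is routine algebraic rearrangement.
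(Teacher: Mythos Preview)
Your proposal is correct and follows essentially the same route as the paper, which leaves this lemma unproved and relies on the reader repeating the argument of Lemma~\ref{lem:alpha-delta-vs-alphaD}: combine \eqref{eq:p-B-equality-scale} with the bound $W_j\le\rho_j\bar R_j$ from \eqref{eq:Qj-upper}, then rearrange for $\alpha_{\kappa_M}$ or $\alpha_{\kappa_D}$ and minimize over $(j,k)$.

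You go further than the paper in one useful respect: you notice that \eqref{eq:Qj-upper}, as written, already presupposes $p_{B_k\to M_j}\le R_{B_k\to M_j}$, which is the conclusion being sought, and you close this gap by deriving the closed form $p^\star_{B_k\to M_j}=A_j/(1-\rho_j)$ and showing that the lemma's condition $A_j\le R_{B_k\to M_j}-\rho_j\bar R_j$ implies the sharp condition $A_j\le(1-\rho_j)R_{B_k\to M_j}$. The paper (here and in the proof of Lemma~\ref{lem:alpha-delta-vs-alphaD}) simply invokes \eqref{eq:Qj-upper} without addressing this circularity, so your write-up is strictly more careful on this point.
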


\begin{theorem}[Global feasibility region and comparative statics]\label{thm:global-feasibility-alpha}
Consider any instance with the scalings above, buyer weights $\sum_k\omega_{jk}=\rho_j\in[0,1)$, and Shapley columns $\sum_{i\in \mathcal{D}_{M_j}}\mathrm{SV}_{i\mid j}=1$. Let $\mathbf p^\star$ denote the unique \textsc{TripleWin} fixed point. If, for all $j\in[J]$ and $k\in\mathcal{B}_{M_j}$,
\begin{equation}
\bigl(1+\alpha_\delta\,\delta^{(0)}_{M_j}\bigr)\,\alpha_{\kappa_D}S_j \;\le\; R_{B_k\to M_j}-\alpha_{\kappa_M}\kappa^{(0)}_{M_j}-\rho_j\,\bar R_j,
\label{eq:global-feasibility-condition-alpha}
\end{equation}
then $\mathbf p^\star$ is buyer--feasible: $p_{B_k\to M_j}^\star\le R_{B_k\to M_j}$ for all $(j,k)$. Moreover, the feasible set in $(\alpha_{\kappa_D},\alpha_{\kappa_M},\alpha_\delta)$ is \emph{downward--closed} (orthant--monotone): if a triple is feasible, any triple with componentwise smaller scalings is also feasible. In single--parameter slices (holding two scalings fixed) the frontier is \emph{piecewise affine} in that parameter; in the $(\alpha_{\kappa_D},\alpha_\delta)$ plane it traces decreasing hyperbolas of the form $\alpha_{\kappa_D}(1+\alpha_\delta\,\delta^{(0)}_{M_j})=\text{const}$ and remains downward--closed.
\end{theorem}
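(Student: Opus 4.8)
The plan is to read everything off the fixed-point identities \eqref{eq:sum-D-edge-scale}--\eqref{eq:p-B-equality-scale}, to combine them with the feasible-ceiling half of Proposition~\ref{prop:pricefloors}, and to exploit that every per-$(j,k)$ inequality in \eqref{eq:global-feasibility-condition-alpha} is monotone in the three scalings. The only genuine obstacle is that the aggregation bound \eqref{eq:Qj-upper} presupposes the conclusion: $W_j^\star\le\rho_j\bar R_j$ is available only once one already knows $p^\star_{B_k\to M_j}\le R_{B_k\to M_j}$ for every $k$. I would resolve this with a ``maximum over $k$'' bootstrap that closes the loop without assuming feasibility.

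\textbf{Step 1 (buyer feasibility).} Put $\bar p_j^\star=\max_{k\in\mathcal B_{M_j}}p^\star_{B_k\to M_j}$. Since $\sum_k\omega_{jk}=\rho_j$, one has $W_j^\star=\sum_k\omega_{jk}p^\star_{B_k\to M_j}\le\rho_j\bar p_j^\star$ \emph{unconditionally}. Substituting into \eqref{eq:p-B-equality-scale} at the maximizing buyer gives $\bar p_j^\star\le \alpha_{\kappa_M}\kappa^{(0)}_{M_j}+(1+\alpha_\delta\delta^{(0)}_{M_j})\alpha_{\kappa_D}S_j+\rho_j\bar p_j^\star$, hence $\bar p_j^\star\le \Lambda_j:=[\alpha_{\kappa_M}\kappa^{(0)}_{M_j}+(1+\alpha_\delta\delta^{(0)}_{M_j})\alpha_{\kappa_D}S_j]/(1-\rho_j)<\infty$. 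I would then note that \eqref{eq:global-feasibility-condition-alpha}, taken at the buyer attaining $\underline R_j$ and using $\bar R_j\ge\underline R_j$, $\rho_j\ge0$, yields $\alpha_{\kappa_M}\kappa^{(0)}_{M_j}+(1+\alpha_\delta\delta^{(0)}_{M_j})\alpha_{\kappa_D}S_j\le(1-\rho_j)\underline R_j$, i.e.\ $\Lambda_j\le\underline R_j$; therefore $p^\star_{B_k\to M_j}\le\bar p_j^\star\le\Lambda_j\le\underline R_j\le R_{B_k\to M_j}$ for all $(j,k)$. An equivalent route builds the super-solution $\bar{\mathbf p}$ with $\bar p_{B_k\to M_j}=\Lambda_j$ and $\bar p_{D_i\to M_j}=\kappa_{D_i}+\frac{\mathrm{SV}_{i\mid j}}{1+\delta_{M_j}}\rho_j\Lambda_j$, checks $\bar{\mathbf p}\ge\mathcal Q(\bar{\mathbf p})$ exactly as in the proof of Theorem~\ref{thm:feas}, and applies Proposition~\ref{prop:pricefloors} to get $\mathbf p^\star\le\bar{\mathbf p}$.

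\textbf{Step 2 (downward-closedness).} For the region cut out by \eqref{eq:global-feasibility-condition-alpha} this is immediate: its left side $(1+\alpha_\delta\delta^{(0)}_{M_j})\alpha_{\kappa_D}S_j$ is nondecreasing in $\alpha_{\kappa_D}$ and in $\alpha_\delta$ (as $S_j,\delta^{(0)}_{M_j}\ge0$), while its right side $R_{B_k\to M_j}-\alpha_{\kappa_M}\kappa^{(0)}_{M_j}-\rho_j\bar R_j$ is nonincreasing in $\alpha_{\kappa_M}$ and constant in the other two, so lowering any scaling preserves every inequality. For the \emph{exact} buyer-feasible set I would use that, when $\kappa_{M_j}$ does not depend on $k$, identity \eqref{eq:p-B-equality-scale} forces the common value $p^\star_{B_k\to M_j}\equiv\Lambda_j$ (solve $\beta_j=\alpha_{\kappa_M}\kappa^{(0)}_{M_j}+(1+\alpha_\delta\delta^{(0)}_{M_j})\alpha_{\kappa_D}S_j+\rho_j\beta_j$), so buyer feasibility is \emph{equivalent} to $\Lambda_j\le\underline R_j$ for all $j$ — a condition \eqref{eq:global-feasibility-condition-alpha} implies but which is strictly weaker unless all within-model reserves coincide; since $\Lambda_j$ is nondecreasing in each of $\alpha_{\kappa_D},\alpha_{\kappa_M},\alpha_\delta$, that set is downward-closed too (the general $k$-dependent case is identical after eliminating $W_j^\star$, again giving a buyer price nondecreasing in all three scalings).

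\textbf{Step 3 (piecewise-affine slices and hyperbolas).} Reading the envelopes of Lemmas~\ref{lem:alpha-delta-vs-alphaD}--\ref{lem:kM-kD-vs-alpha-delta}: $\alpha_{\kappa_M}^{\max}(\alpha_{\kappa_D};\alpha_\delta)$, $\alpha_{\kappa_D}^{\max}(\alpha_{\kappa_M};\alpha_\delta)$, and $\alpha_\delta^{\max}(\alpha_{\kappa_M})$ (at fixed $\alpha_{\kappa_D}$) are each a pointwise minimum over the finite index set $\{(j,k):k\in\mathcal B_{M_j}\}$ of functions that are \emph{affine} and decreasing in the free parameter precisely when that parameter enters the per-$(j,k)$ constraint linearly; a minimum of finitely many affine functions is concave and piecewise affine, which is the claimed frontier, with each linear piece identified by a binding buyer reserve. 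Finally, in the $(\alpha_{\kappa_D},\alpha_\delta)$ plane the two variables enter \eqref{eq:global-feasibility-condition-alpha} only through the product $\alpha_{\kappa_D}(1+\alpha_\delta\delta^{(0)}_{M_j})$, so the $j$-th constraint reduces exactly to $\alpha_{\kappa_D}(1+\alpha_\delta\delta^{(0)}_{M_j})\le c_j$ with $c_j=(\underline R_j-\alpha_{\kappa_M}\kappa^{(0)}_{M_j}-\rho_j\bar R_j)/S_j$; this is the region under the decreasing hyperbola $\alpha_{\kappa_D}(1+\alpha_\delta\delta^{(0)}_{M_j})=c_j$, each such region is visibly downward-closed, and the feasible region is their finite intersection. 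The main work, and the only subtle point, is the self-consistent elimination of $W_j^\star$ in Step~1; the rest is monotonicity and elementary convex-geometry bookkeeping.
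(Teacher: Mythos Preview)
Your argument is correct and in fact repairs a gap in the paper's own proof. The paper simply plugs the bound \eqref{eq:Qj-upper}, namely $W_j\le\rho_j\bar R_j$, into \eqref{eq:p-B-equality-scale} and reads off $p^\star_{B_k\to M_j}\le R_{B_k\to M_j}$; but \eqref{eq:Qj-upper} itself was derived from $p_{B_k\to M_j}\le R_{B_k\to M_j}$, which is the conclusion being sought, so the paper's feasibility step is circular as written. Your ``max over $k$'' bootstrap---replacing $\rho_j\bar R_j$ by the unconditional bound $\rho_j\bar p_j^\star$ and solving the resulting self-referential inequality for $\bar p_j^\star$---is exactly the right fix; it yields $\bar p_j^\star\le\Lambda_j$ without assuming feasibility, and then uses the hypothesis \eqref{eq:global-feasibility-condition-alpha} only to bound $\Lambda_j\le\underline R_j$. (Your alternative route via the explicit super-solution $\bar{\mathbf p}$ and Proposition~\ref{prop:pricefloors} works equally well and is in the spirit of Theorem~\ref{thm:feas}.) Steps~2 and~3 match the paper's monotonicity-and-inspection argument, just written out more fully: each per-$(j,k)$ constraint in \eqref{eq:global-feasibility-condition-alpha} is affine in any one scaling with the other two frozen, so the single-parameter frontier is a finite minimum of affine pieces, while in the $(\alpha_{\kappa_D},\alpha_\delta)$ plane the two variables enter only through the product $\alpha_{\kappa_D}(1+\alpha_\delta\delta^{(0)}_{M_j})$, giving the hyperbolic level sets.
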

\begin{proof}
From \eqref{eq:p-B-equality-scale} and \eqref{eq:Qj-upper},
\[
p_{B_k\to M_j}^\star \;\le\; \alpha_{\kappa_M}\kappa^{(0)}_{M_j}
+\bigl(1+\alpha_\delta\,\delta^{(0)}_{M_j}\bigr)\alpha_{\kappa_D}S_j + \rho_j\,\bar R_j.
\]
If \eqref{eq:global-feasibility-condition-alpha} holds for all $(j,k)$, then $p_{B_k\to M_j}^\star\le R_{B_k\to M_j}$, proving feasibility. Monotonicity follows because the left--hand side of \eqref{eq:global-feasibility-condition-alpha} is increasing in each of $\alpha_{\kappa_D}$ and $\alpha_\delta$, and the subtracted term $\alpha_{\kappa_M}\kappa^{(0)}_{M_j}$ is increasing in $\alpha_{\kappa_M}$. The geometric claims follow from inspecting \eqref{eq:global-feasibility-condition-alpha}.
\end{proof}

\noindent 
\textbf{Remarks.}
If $\delta^{(0)}_{M_j}=0$ (no baseline margin for model $j$), then $\alpha_\delta$ does not affect the feasibility for that model.  If for some $(j,k)$ the right--hand side of \eqref{eq:global-feasibility-condition-alpha} is nonpositive, no choice of scalings makes that buyer--model edge feasible; the instance must be relaxed (larger reserves or smaller offsets/weights).  The envelopes in Lemmas~\ref{lem:alpha-delta-vs-alphaD}--\ref{lem:kM-kD-vs-alpha-delta} are sufficient and become tight when the binding buyer for a model is also the one with the highest reserve, i.e., when $\bar R_j=R_{B_k\to M_j}$ at the boundary. In our experiments, numerical frontiers traced by \textsc{TripleWin} closely track these analytic envelopes.

\subsection{Platform Commission as a Uniform Ad–Valorem Fee}
\label{subsec:platform-commission}

We consider a central platform that takes a uniform ad–valorem fee \(\tau\in[0,1)\) from all monetary transfers on both layers of the market. A buyer payment \(p_{B_k\to M_j}\) delivers \((1-\tau)\,p_{B_k\to M_j}\) to the producer, and a producer payment \(p_{D_i\to M_j}\) delivers \((1-\tau)\,p_{D_i\to M_j}\) to the data seller. Let \(\alpha = 1/(1-\tau)\ge 1\) denote the induced grossing factor. With this fee the buyer-side quotation must gross up the model-side offset \(\kappa_{M_j}\), while the pass-through of data expenditure to buyers remains untaxed at the quoting stage because the producer already pays data sellers the gross amount. Symmetrically, the data-side quotation must gross up the dataset-side offset \(\kappa_{D_i}\). The correct quotations under a uniform platform fee are therefore
\begin{align}
v_{B_k\to M_j} \;&=\; \alpha\,\kappa_{M_j} + (1+\delta_{M_j}) \sum_{i\in \mathcal{D}_{M_j}} p_{D_i\to M_j}, 
\label{eq:fee-buyer-quote}\\
v_{D_i\to M_j} \;&=\; \alpha\,\kappa_{D_i} + \frac{\mathrm{SV}_{i\mid j}}{1+\delta_{M_j}}\,W_j, 
\label{eq:fee-data-quote}
\end{align}
with $W_j=\sum_{k\in\mathcal{B}_{M_j}}\omega_{jk}\,p_{B_k\to M_j}$ and \(\sum_{k\in\mathcal{B}_{M_j}}\omega_{jk}=\rho_j\in[0,1)\).

\begin{lemma}[Buyer price at the fixed point under a uniform fee]
\label{lem:buyer-price-fee}
Fix a model \(M_j\) and define \(S_j=\sum_{i\in \mathcal{D}_{M_j}}\kappa_{D_i}\). At any \textsc{TripleWin} fixed point under the uniform fee \(\tau\) with \(\alpha=1/(1-\tau)\), the buyer price is independent of \(k\) and satisfies
\begin{equation}
p_{B_k\to M_j} \;=\; p_{M_j} 
\;=\; \frac{\alpha\,\big(\kappa_{M_j} + (1+\delta_{M_j})S_j\big)}{1-\rho_j},
\ k\in\mathcal{B}_{M_j}.
\label{eq:pmj-fee}
\end{equation}
\end{lemma}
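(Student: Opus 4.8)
The plan is to use the fixed-point identity $\mathbf p=\mathcal Q(\mathbf p)$, which under the fee-modified quotations \eqref{eq:fee-buyer-quote}--\eqref{eq:fee-data-quote} says that every posted price equals its own quotation. The first observation I would make is that these mappings are block-decoupled across models: the buyer-side quote for an edge $(k,j)$ depends only on the data edges $\{p_{D_i\to M_j}\}_{i\in\mathcal D_{M_j}}$, and the data-side quote for $(i,j)$ depends only on the buyer edges $\{p_{B_k\to M_j}\}_{k\in\mathcal B_{M_j}}$ through $W_j$. Hence the fixed-point equations split by model, and I may fix one model $M_j$ and solve its block in isolation.

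Next I would sum the data-side fixed-point equations over $i\in\mathcal D_{M_j}$. Using the standing condition $\sum_{i\in\mathcal D_{M_j}}\mathrm{SV}_{i\mid j}=1$ this collapses the Shapley weights and gives $\sum_{i\in\mathcal D_{M_j}}p_{D_i\to M_j}=\alpha S_j+W_j/(1+\delta_{M_j})$. Substituting this into the buyer-side fixed-point equation for an arbitrary $(k,j)$ cancels the factor $1+\delta_{M_j}$ against its reciprocal and yields $p_{B_k\to M_j}=\alpha\kappa_{M_j}+\alpha(1+\delta_{M_j})S_j+W_j$. The right-hand side carries no dependence on $k$, which immediately establishes the first claim of the lemma — the buyer price is common across $k\in\mathcal B_{M_j}$; call this common value $p_{M_j}$.

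To close the system I would then evaluate $W_j$ at the fixed point. Since $p_{B_k\to M_j}=p_{M_j}$ for all $k$ and $\sum_{k\in\mathcal B_{M_j}}\omega_{jk}=\rho_j$, we get $W_j=\rho_j\,p_{M_j}$. Plugging this back gives the scalar linear equation $p_{M_j}=\alpha\kappa_{M_j}+\alpha(1+\delta_{M_j})S_j+\rho_j\,p_{M_j}$, and because $\rho_j\in[0,1)$ the coefficient $1-\rho_j$ is strictly positive, so dividing through yields \eqref{eq:pmj-fee}. To make ``at any \textsc{TripleWin} fixed point'' non-vacuous I would also note that replacing $\kappa_{M_j},\kappa_{D_i}$ by $\alpha\kappa_{M_j},\alpha\kappa_{D_i}$ with $\alpha=1/(1-\tau)\ge 1$ keeps all offsets strictly positive and all price coefficients nonnegative, so the fee-modified operator is again an SIF by the argument of Theorem~\ref{thm:sif}, and the feasibility construction of Theorem~\ref{thm:feas} goes through verbatim with $\kappa$ replaced by $\alpha\kappa$; hence a unique finite fixed point exists.

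There is no genuinely hard step here. The only point that requires care is the bookkeeping of which quantities are grossed up by $\alpha$ and which pass-through terms are not — precisely the content encoded in \eqref{eq:fee-buyer-quote}--\eqref{eq:fee-data-quote} — and once those modified quotations are granted, the derivation is a two-line linear solve. If anything, the ``obstacle'' is purely expository: making explicit in the write-up why the producer's pass-through of data expenditure to buyers is not re-taxed at the quoting stage, so that the reader accepts the asymmetric appearance of $\alpha$ in the two quotation functions.
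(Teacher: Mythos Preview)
Your proposal is correct and follows essentially the same approach as the paper: sum the data-side fixed-point equations using $\sum_i \mathrm{SV}_{i\mid j}=1$, substitute into the buyer-side equation to see the right-hand side is independent of $k$, then use $W_j=\rho_j\,p_{M_j}$ and solve the resulting scalar linear equation. Your additional remarks on per-model block decoupling and on why the fee-modified operator remains an SIF with a unique fixed point are fine embellishments but not needed for the lemma as stated.
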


\begin{proof}
At a fixed point we have \(p_{D_i\to M_j}=v_{D_i\to M_j}\) and \(p_{B_k\to M_j}=v_{B_k\to M_j}\). Summing the data-side prices over \(i\in \mathcal{D}_{M_j}\) and using \(\sum_{i\in \mathcal{D}_{M_j}}\mathrm{SV}_{i\mid j}=1\) gives
\begin{multline}
\sum_{i\in \mathcal{D}_{M_j}} p_{D_i\to M_j}
\;=\; \sum_{i\in \mathcal{D}_{M_j}}\left(\alpha\,\kappa_{D_i} + \frac{\mathrm{SV}_{i\mid j}}{1+\delta_{M_j}}\,W_j\right) \\
\;=\; \alpha\,S_j + \frac{W_j}{1+\delta_{M_j}}.
\label{eq:sum-D-fee}
\end{multline}
Substituting \eqref{eq:sum-D-fee} into the buyer-side quotation \eqref{eq:fee-buyer-quote} yields
\begin{align}
p_{B_k\to M_j}
&= \alpha\,\kappa_{M_j} + (1+\delta_{M_j})\left(\alpha\,S_j + \frac{W_j}{1+\delta_{M_j}}\right) \nonumber\\
&= \alpha\,\kappa_{M_j} + \alpha\,(1+\delta_{M_j})\,S_j + W_j.
\label{eq:pbk-linear}
\end{align}
The right-hand side of \eqref{eq:pbk-linear} does not depend on \(k\), hence all buyers of \(M_j\) face the same price which we denote by \(p_{M_j}\). Aggregating with the buyer weights gives
\begin{equation}
W_j \;=\; \sum_{k\in\mathcal{B}_{M_j}}\omega_{jk}\,p_{B_k\to M_j}
\;=\; \rho_j\,p_{M_j}.
\label{eq:Q-equals-rho-p}
\end{equation}
Combining \eqref{eq:pbk-linear} and \eqref{eq:Q-equals-rho-p} gives
\begin{equation}
p_{M_j} \;=\; \alpha\,\kappa_{M_j} + \alpha\,(1+\delta_{M_j})\,S_j + \rho_j\,p_{M_j}.
\label{eq:pmj-linear-eq}
\end{equation}
Rearranging \eqref{eq:pmj-linear-eq} by moving the last term to the left, and noting \(1-\rho_j>0\), we obtain
\begin{equation}
(1-\rho_j)\,p_{M_j} \;=\; \alpha\,\big(\kappa_{M_j} + (1+\delta_{M_j})\,S_j\big).
\end{equation}
Dividing both sides by \(1-\rho_j\) proves \eqref{eq:pmj-fee}.
\end{proof}

The buyer-side feasibility constraint for model \(j\) requires \(p_{M_j}\le R_{j}^{\min}\), where \(R_{j}^{\min}=\min_{k\in\mathcal{B}_{M_j}} R_{B_k\to M_j}\). Substituting \eqref{eq:pmj-fee} and multiplying both sides by \(1-\rho_j>0\) gives the linear inequality
\begin{equation}
\alpha\,\big(\kappa_{M_j} + (1+\delta_{M_j})\,S_j\big) \;\le\; (1-\rho_j)\,R_{j}^{\min}.
\label{eq:alpha-ineq-linear}
\end{equation}
Solving \eqref{eq:alpha-ineq-linear} for \(\alpha\) yields the model-wise upper bound
\begin{equation}
\alpha \;\le\; \alpha_j^{\max}
\;=\; \frac{(1-\rho_j)\,R_{j}^{\min}}{\kappa_{M_j} + (1+\delta_{M_j})\,S_j}\,,
\quad \forall j
\label{eq:alpha-max-j}
\end{equation}
The platform can charge a uniform fee only if \(\alpha\) satisfies \eqref{eq:alpha-max-j} for all \(j\).

\begin{theorem}[Maximal uniform platform fee]
\label{thm:max-uniform-fee}
Assume \(\kappa_{M_j}> 0\), \(S_j> 0\), \(\delta_{M_j}> 0\), \(R_{j}^{\min}>0\), and \(\rho_j\in[0,1)\) for all \(j\). Define
\begin{equation}
\alpha^\star = \min_{1\le j\le J} \alpha_j^{\max}
= \min_{1\le j\le J} \frac{(1-\rho_j)\,R_{j}^{\min}}{\kappa_{M_j} + (1+\delta_{M_j})\,S_j}\,.
\label{eq:alpha-star}
\end{equation}
Then \(\alpha^\star\) is the largest grossing factor for which there exists a buyer-feasible \textsc{TripleWin} fixed point under the uniform fee, and the corresponding maximal fee is
\begin{equation}
\tau^\star \;=\; \max\Big\{\,0,\;1-\frac{1}{\alpha^\star}\,\Big\}.
\label{eq:tau-star}
\end{equation}
\end{theorem}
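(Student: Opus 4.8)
The plan is to reduce the theorem to the closed-form price in Lemma~\ref{lem:buyer-price-fee}, combined with a uniqueness argument for the fee-modified operator, and then to invert the monotone relation between the grossing factor $\alpha$ and the fee $\tau$. The key observation is that, because the fixed point under the fee is unique, the statement ``there exists a buyer-feasible \textsc{TripleWin} fixed point'' collapses to ``the unique fixed point is buyer-feasible,'' and the latter is a single scalar inequality on $\alpha$.

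First I would verify that, for every fixed $\tau\in[0,1)$ and hence every $\alpha=1/(1-\tau)\ge 1$, the fee-modified operator $\mathcal{Q}^{(\alpha)}$ defined by \eqref{eq:fee-buyer-quote}--\eqref{eq:fee-data-quote} is again an SIF. Its offsets are $\alpha\,\kappa_{M_j}>0$ and $\alpha\,\kappa_{D_i}>0$, the coefficients multiplying the prices are unchanged and nonnegative, and the map is affine, so positivity, monotonicity, and scalability follow verbatim from the proof of Theorem~\ref{thm:sif}. Repeating the construction in the proof of Theorem~\ref{thm:feas} with the offsets and $S_j$ rescaled by $\alpha$ produces a point $\bar{\mathbf p}$ with $\bar{\mathbf p}\ge\mathcal{Q}^{(\alpha)}(\bar{\mathbf p})$, so by Theorem~1 of Yates~\cite{yates2002framework} the operator $\mathcal{Q}^{(\alpha)}$ has a unique finite fixed point. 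This licenses working with the explicit buyer price $p_{M_j}$ of \eqref{eq:pmj-fee}.

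Second I would translate buyer feasibility into a condition on $\alpha$. Buyer feasibility means $p_{B_k\to M_j}^\star\le R_{B_k\to M_j}$ for all $(j,k)$; since by Lemma~\ref{lem:buyer-price-fee} the fixed-point buyer price is the $k$-independent value $p_{M_j}$, this is equivalent to $p_{M_j}\le R_j^{\min}$ for every $j$, i.e.\ to \eqref{eq:alpha-ineq-linear}, i.e.\ to $\alpha\le\alpha_j^{\max}$ for every $j$ by \eqref{eq:alpha-max-j}. Intersecting these constraints yields the single condition $\alpha\le\alpha^\star=\min_j\alpha_j^{\max}$. For sharpness I would argue the converse: if $\alpha>\alpha^\star$, pick $j^\star\in\arg\min_{j}\alpha_j^{\max}$; then $p_{M_{j^\star}}>R_{j^\star}^{\min}=\min_{k}R_{B_k\to M_{j^\star}}$, so the reserve of the binding buyer is violated, and since the fixed point is unique, no buyer-feasible fixed point exists. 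Hence $\alpha^\star$ is exactly the largest admissible grossing factor.

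Finally I would push the bound back to the fee. On $[1,\infty)$ the map $\alpha\mapsto\tau=1-1/\alpha$ is strictly increasing, so when $\alpha^\star\ge 1$ the largest admissible fee is $\tau^\star=1-1/\alpha^\star$; when $\alpha^\star<1$ no admissible $\alpha\ge 1$ exists (even $\tau=0$ is infeasible), and the stated formula returns the conventional boundary value $\tau^\star=\max\{0,1-1/\alpha^\star\}=0$. The hard part is not the algebra linking \eqref{eq:pmj-fee}, \eqref{eq:alpha-ineq-linear}, and \eqref{eq:alpha-max-j}, which is routine once Lemma~\ref{lem:buyer-price-fee} is available, but rather being careful with two points: establishing uniqueness of the fee-modified fixed point so that ``a feasible fixed point exists'' and ``the fixed point is feasible'' coincide, and handling the degenerate regime $\alpha^\star\le 1$ correctly in the final formula.
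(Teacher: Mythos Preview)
Your proposal is correct and follows essentially the same route as the paper: use the closed-form buyer price from Lemma~\ref{lem:buyer-price-fee} to convert buyer feasibility into the per-model linear bound $\alpha\le\alpha_j^{\max}$, intersect over $j$ to obtain $\alpha^\star$, argue sharpness by picking a binding model, and then push back to $\tau$ via the strictly increasing map $\alpha\mapsto 1-1/\alpha$. Your explicit verification that $\mathcal{Q}^{(\alpha)}$ is an SIF with a unique fixed point (so that ``a feasible fixed point exists'' coincides with ``the fixed point is feasible'') and your more careful reading of the degenerate case $\alpha^\star<1$ are both welcome additions that the paper's proof leaves implicit.
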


\begin{proof}
We first show that any \(\alpha\) with \(\alpha\le \alpha^\star\) is buyer-feasible. For each \(j\), the definition \eqref{eq:alpha-star} implies \(\alpha\le \alpha_j^{\max}\). Substituting \(\alpha\le \alpha_j^{\max}\) into \eqref{eq:alpha-ineq-linear} gives
\begin{equation}
\alpha\,\big(\kappa_{M_j} + (1+\delta_{M_j})\,S_j\big) \;\le\; (1-\rho_j)\,R_{j}^{\min}.
\end{equation}
Dividing both sides by \(1-\rho_j>0\) yields
\begin{equation}
p_{M_j}(\alpha) \;=\; \frac{\alpha\,\big(\kappa_{M_j} + (1+\delta_{M_j})\,S_j\big)}{1-\rho_j}
\;\le\; R_{j}^{\min}.
\end{equation}
This proves buyer feasibility for all \(j\).

We next show that no \(\alpha>\alpha^\star\) is buyer-feasible. By the definition \eqref{eq:alpha-star}, there exists at least one index \(j^\star\) such that \(\alpha^\star=\alpha_{j^\star}^{\max}\) and for every \(\epsilon>0\), \(\alpha^\star+\epsilon>\alpha_{j^\star}^{\max}\). Taking any \(\epsilon>0\) and substituting \(\alpha=\alpha^\star+\epsilon\) into \eqref{eq:alpha-ineq-linear} with \(j=j^\star\) gives
\begin{equation}
(\alpha^\star+\epsilon)\,\big(\kappa_{M_{j^\star}} + (1+\delta_{M_{j^\star}})\,S_{j^\star}\big) \;>\; (1-\rho_{j^\star})\,R_{j^\star}^{\min}.
\end{equation}
Dividing by \(1-\rho_{j^\star}>0\) yields \(p_{M_{j^\star}}(\alpha^\star+\epsilon) > R_{j^\star}^{\min}\), hence buyer feasibility fails for model \(j^\star\). This proves maximality of \(\alpha^\star\).

Finally, \(\tau=1-1/\alpha\) is a strictly increasing transformation of \(\alpha\) on \([1,\infty)\). If \(\alpha^\star<1\) the nonnegativity constraint \(\tau\ge 0\) binds and the maximal admissible fee is \(\tau^\star=0\); otherwise \(\tau^\star=1-1/\alpha^\star\). This establishes \eqref{eq:tau-star}.
\end{proof}

The dependence of equilibrium prices on the platform fee is monotone and linear. Differentiating \eqref{eq:pmj-fee} with respect to \(\alpha\) gives
\begin{equation}
\frac{\partial p_{M_j}}{\partial \alpha}
\;=\; \frac{\kappa_{M_j} + (1+\delta_{M_j})\,S_j}{1-\rho_j}
\;\ge\; 0,
\end{equation}
so higher fees uniformly increase buyer prices, with pass-through amplified as \(\rho_j\) approaches one. The feasibility region therefore shrinks linearly in \(\alpha\), and it collapses when offsets and margins are large relative to buyer reserves or when the effective aggregation weight \(\rho_j\) is close to one.

Computation of the maximal fee is closed form. For each model \(j\) compute \(\alpha_j^{\max}\) by \eqref{eq:alpha-max-j} and set \(\alpha^\star=\min_j \alpha_j^{\max}\); the fee is \(\tau^\star\) from \eqref{eq:tau-star}. As a diagnostic one may also verify tightness by evaluating \(p_{M_j}(\alpha^\star)\) for all \(j\). At least one model must satisfy \(p_{M_j}(\alpha^\star)=R_{j}^{\min}\), and nudging \(\alpha\) upward by any \(\epsilon>0\) makes that model violate its reserve.

\section{Experiments}\label{sec:experiments}

\begin{table}[t]
\centering
\caption{Simulation configuration used across experiments.}
\label{tab:exp-setup-usd}
\small
\begin{tabular}{@{}ll@{}}
\toprule
Quantity & Value / distribution (USD) \\
\midrule
Buyer weights & $\omega_{jk}\ge 0$, $\sum_{k}\omega_{jk}=\rho_j\in[0,1)$ \\
Dataset offsets (per-use) & $\kappa_{D_i}\sim\mathrm{Unif}[0.10,\,0.40]$ \\
Model offsets (per-sale) & $\kappa_{M_j}\sim\mathrm{Unif}[1.0,\,5.0]$ \\
Producer margins & $\delta_{M_j}=0.10$ for all $j$ (10\% markup) \\
Buyer reserves & $R_{B_k\to M_j}\sim\mathrm{Unif}[25,\,100]$ \\
Buyers per model & $|\mathcal{B}_{M_j}|\sim\mathrm{Unif}\{1,2,3,4,5\}$ \\
List caps & $C_{D_i\to M_j}\sim\mathrm{Unif}[1.5,\,4.0]$ \\
Initialization & $p^{(0)}_{D\to M}=C_{D\to M}$,\quad $p^{(0)}_{B\to M}=\kappa_{M}$ \\
Stopping rule & $\|\mathcal Q(\mathbf p)-\mathbf p\|_2/\sqrt{d}\le 10^{-10}$ \\
\bottomrule
\end{tabular}
\end{table}

\begin{table*}[htbp]
    \centering
    \caption{Performance Comparison of Different Methods}
    \label{tab:performance_comparison_no_cat}
    \resizebox{\textwidth}{!}{%
\begin{tabular}{llllllllll}
\toprule
Method & Social Welfare & Buyer Surplus & Seller Surplus & Platform Profit & E2E Succ. Rate & Avg. Buyer Price & Avg. Seller Price & Spearman ($\rho$) & Noise Resil. \\
\midrule
SupplyFirst & 317.1$\pm$109.8 & \cellcolor{BuyerBlue!10}71.4$\pm$31.0 & \cellcolor{SellerOrange!17}107.4$\pm$20.7 & \cellcolor{PlatformGreen!38}138.3$\pm$73.1 & 47.27\%$\pm$12.28\% & 33.03$\pm$1.13 & 2.77$\pm$0.09 & -0.02$\pm$0.16 & 49.62\%$\pm$5.35\% \\
DemandFirst & \textbf{519.7$\pm$92.5} & \cellcolor{BuyerBlue!39}532.1$\pm$92.7 & \cellcolor{SellerOrange!10}8.3$\pm$0.8 & \cellcolor{NegRed!22}-20.7$\pm$1.7 & 80.86\%$\pm$26.32\% & 3.06$\pm$0.26 & 0.42$\pm$0.03 & 0.45$\pm$0.11 & 70.69\%$\pm$5.38\% \\
BrokerCentric & 345.6$\pm$82.7 & \cellcolor{BuyerBlue!10}71.4$\pm$20.2 & \cellcolor{SellerOrange!18}125.1$\pm$6.2 & \cellcolor{PlatformGreen!39}149.1$\pm$70.7 & 54.05\%$\pm$7.66\% & 32.32$\pm$2.26 & 2.77$\pm$0.09 & -0.00$\pm$0.14 & 47.57\%$\pm$4.83\% \\
VAP & \textbf{521.2$\pm$92.5} & \cellcolor{BuyerBlue!35}457.6$\pm$84.3 & \cellcolor{SellerOrange!10}8.0$\pm$1.4 & \cellcolor{PlatformGreen!25}55.5$\pm$10.0 & 35.54\%$\pm$36.24\% & 7.22$\pm$0.31 & 0.43$\pm$0.02 & -0.46$\pm$0.09 & 48.62\%$\pm$0.28\% \\
CMAB-HS & \textbf{525.9$\pm$92.6} & \cellcolor{BuyerBlue!39}521.7$\pm$91.3 & \cellcolor{SellerOrange!10}1.1$\pm$0.1 & \cellcolor{PlatformGreen!17}3.1$\pm$1.8 & 9.54\%$\pm$14.91\% & 3.63$\pm$0.27 & 0.28$\pm$0.04 & -0.03$\pm$0.13 & 50.47\%$\pm$3.03\% \\
Dealer & \textbf{483.7$\pm$80.1} & \cellcolor{BuyerBlue!12}111.2$\pm$30.6 & \cellcolor{SellerOrange!39}420.6$\pm$71.7 & \cellcolor{NegRed!39}-48.1$\pm$7.3 & 88.53\%$\pm$7.97\% & 27.42$\pm$3.53 & 9.62$\pm$1.37 & 0.65$\pm$0.11 & 61.38\%$\pm$6.14\% \\
TripleWin & \textbf{516.9$\pm$91.5} & \cellcolor{BuyerBlue!26}327.7$\pm$67.5 & \cellcolor{SellerOrange!12}39.4$\pm$3.0 & \cellcolor{PlatformGreen!39}149.9$\pm$29.5 & \textbf{98.26\%$\pm$4.23\%} & 14.46$\pm$1.00 & 1.04$\pm$0.08 & \textbf{0.90$\pm$0.04} & \textbf{92.06\%$\pm$1.89\%} \\
\bottomrule
\end{tabular}
    }
\end{table*}

\begin{figure*}[t]
    \centering
    \includegraphics[width=0.32\textwidth]{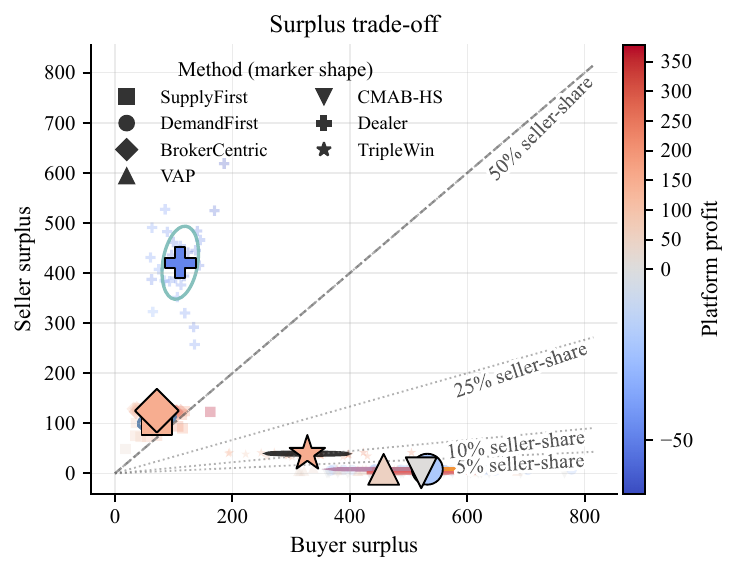} \hfill
    \includegraphics[width=0.32\textwidth]{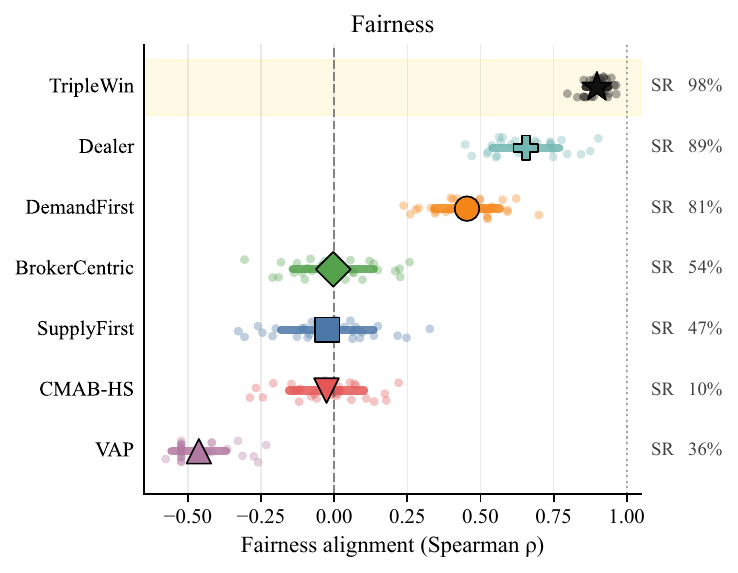} \hfill
    \includegraphics[width=0.32\textwidth]{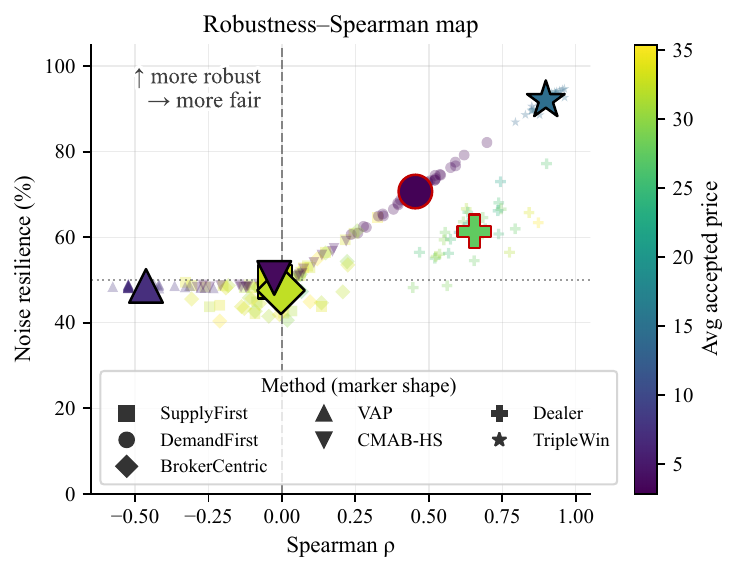}
    \caption{
    Benchmarking pricing mechanisms in a coupled data--model market.
    (left) \emph{Surplus incidence map}: buyer surplus versus seller surplus across random market instances; color encodes platform profit. Grey rays indicate iso seller-share contours \(S/(B+S)\) (10\%, 25\%, 50\%). 
    (middle) \emph{Fairness alignment}: Spearman \( \rho \) between Shapley contributions and realized data-revenue shares; annotations report transaction success rate (SR). 
    (right) \emph{Robustness--fairness map}: noise resilience (\%) under a downward demand shock versus fairness, colored by average accepted price. 
    Across all views, \textsc{TripleWin} attains the most balanced surplus split while delivering the strongest fairness and robustness with near-unit transaction success.
    }
    \label{fig:baseline_comparison}
\end{figure*}

\begin{figure*}[t]
    \centering
    \includegraphics[width=\linewidth]{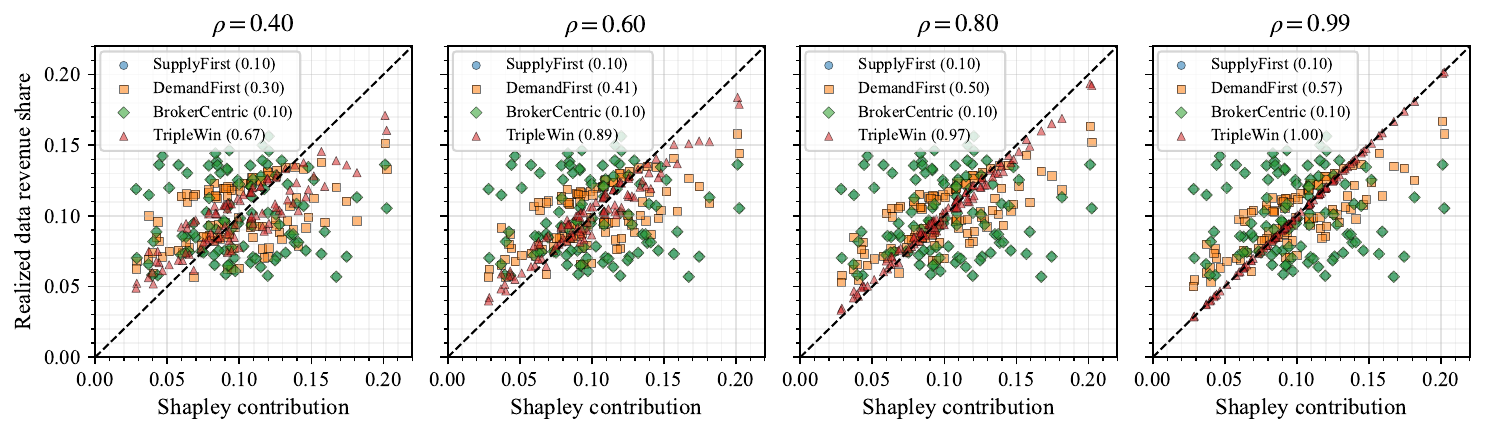}
\caption{Fairness alignment between Shapley contributions and realized data–revenue shares
under different total buyer weights~$\rho\in\{0.4,0.6,0.8,0.99\}$.
Each panel plots the realized revenue share $p_{D_i\to M_j}/\sum_i p_{D_i\to M_j}$
against the corresponding Shapley value $\mathrm{SV}_{i\mid j}$ for all datasets and models.
Legends report mean per–model Spearman correlations.
\textsc{TripleWin} (red triangles) aligns almost perfectly with Shapley contributions
as $\rho$ increases, while one–sided baselines remain weakly correlated.}
    \label{fig:fairness}
\end{figure*}

\begin{figure*}[t]
    \centering
    \includegraphics[width=\linewidth]{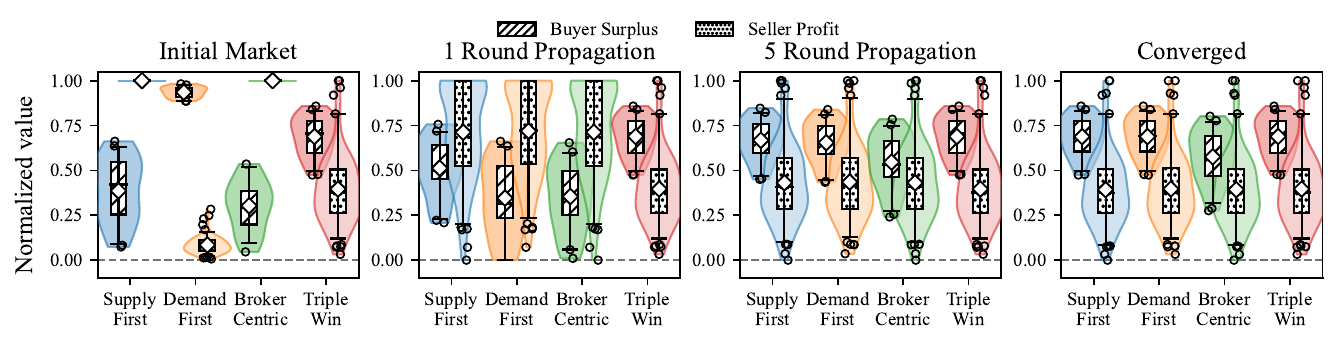}
\caption{Evolution of buyer surplus and seller profit across propagation stages.
Each subpanel shows normalized distributions for the four methods at four stages:
initial market, one market price propagation, five market price propagation, and convergence.
Grey boxes correspond to buyer surplus and hatched boxes to seller profit.
Only \textsc{TripleWin} achieves balanced and stable incidence for both sides at convergence,
consistent with a unique tri–sided fixed point.}
    \label{fig:propagation}
\end{figure*}

We evaluate TripleWin in synthetic coupled markets with multiple datasets, models, and buyers.
Each dataset corresponds to a data seller who posts a per-use license price $p_{D_i\to M_j}$ for model $M_j$.
Each model producer sells the trained model to multiple buyers at buyer-specific prices $p_{B_k\to M_j}$.
Given current posted prices, the market computes \emph{buyer-side quotations} and \emph{data-side quotations} using
Eqs.~\eqref{eq:buyer_quote}--\eqref{eq:data_quote}, and TripleWin iterates these coupled quotation updates until convergence (Algorithm~\ref{alg:triplewin}).
For every experiment we draw market primitives from the common configuration in Table~\ref{tab:exp-setup-usd},
run each pricing mechanism, and then compute downstream and upstream acceptance as in Section~\ref{subsec:pricing_mappings}.

To ground data valuation, we compute dataset-to-model approximate Shapley shares on eight canonical ML datasets spanning vision (\texttt{cifar}, \texttt{digits}), medicine (\texttt{breast\_cancer}), social economics (\texttt{adult}), survival analysis (\texttt{titanic}), biology (\texttt{iris}), food science (\texttt{wine}), and graphs (\texttt{cora}). For each task, we train a library of learners: we deploy the high-capacity ResNet specifically for vision-centric tasks, complemented by seven standard models across all datasets (linear/logistic regression, SVM, random forest, decision tree, gradient boosting, kNN, and MLP). We use predictive performance as the utility in the Shapley definition. Given the computational complexity, these approximate Shapley shares are used to capture the relative marginal contribution of each dataset. For each model $M_j$, we normalize the resulting Shapley column so that $\sum_{i\in D_{M_j}} \mathrm{SV}_{i|j}=1$, ensuring that these approximate shares are comparable across tasks and are interpretable as fair revenue-splitting weights.


We benchmark TripleWin against representative pricing pipelines and learning-based mechanisms:
\begin{itemize}
  \item \textbf{SupplyFirst (SF):} a one-pass \emph{supply-driven} pipeline that sets data-side prices first (high list prices)
  and propagates training cost to buyers once via Eq.~\eqref{eq:buyer_quote}, without closing the loop.
  \item \textbf{DemandFirst (DF):} a one-pass \emph{demand-driven} pipeline that fixes low buyer-side bids first and propagates
  them upstream once via Eq.~\eqref{eq:data_quote}, without downstream reconciliation.
  \item \textbf{BrokerCentric (BC):} a broker-led heuristic that chooses markups from buyer reserves and then allocates
  implied training revenue back to datasets, capturing a platform-centric clearing perspective.
  \item \textbf{VAP\cite{DBLP:journals/tmc/XuZLWC24}:} an online-learning style baseline that adapts data prices via value-aware updates and then quotes model prices downstream.
  \item \textbf{CMAB-HS\cite{DBLP:conf/icde/AnX0X021}:} a combinatorial bandit + Stackelberg-style baseline that explores/selects dataset subsets and sets payments accordingly.
  \item \textbf{Dealer\cite{DBLP:journals/pvldb/LiuLL0PS21}:} a privacy-aware marketplace baseline that prices models via a platform-led menu and compensates datasets with contribution-based sharing.
\end{itemize}
These baselines cover one-sided propagation, broker-centric clearing, online learning, bandit exploration, and privacy-aware marketplaces.
All methods are evaluated under the \emph{same} primitives (Table~\ref{tab:exp-setup-usd}) and the same metric definitions below.

We report the following metrics (Table~\ref{tab:performance_comparison_no_cat}), averaged over 30 random market instances (mean $\pm$ std):
\begin{itemize}
  \item \textbf{End-to-end success rate (E2E SR):} the fraction of buyer--model edges that are simultaneously feasible,
  i.e., buyer acceptance ($p_{B\to M}\le R$) and data-side acceptance ($p_{D\to M}$ meets sellers' acceptance thresholds) hold for deliverable trades.
  \item \textbf{Buyer surplus} $\sum (R-p_{B\to M})$ over accepted buyer edges; \textbf{seller surplus} $\sum (p_{D\to M}-\kappa_D)$ over paid data edges.
  \item \textbf{Platform profit} as model revenue minus serving overheads and data expenditures.
  \item \textbf{Social welfare} as the sum of the three surpluses above.
  \item \textbf{Avg.~buyer price} as the mean accepted $p_{B\to M}$ (per buyer transaction), and \textbf{Avg.~seller price} as the mean accepted $p_{D\to M}$ (per dataset--model license).
  \item \textbf{Fairness alignment (Spearman $\rho$):} per-model rank correlation between Shapley contributions and realized data-revenue shares.
  \item \textbf{Noise resilience:} a robustness score under a downward demand shock (we perturb buyer reserves multiplicatively and measure welfare retention, weighted by fairness; higher is better).
\end{itemize}

Table~\ref{tab:exp-setup-usd} is chosen to mimic two empirical regularities of data/model marketplaces.
First, \emph{data-side payments are micro-payments} while \emph{model-side prices are orders of magnitude larger}.
We therefore draw per-use dataset offsets $\kappa_{D_i}\sim \mathrm{Unif}[0.10,0.40]$ to represent recurring governance/compliance,
access control, and preparation costs per training episode, and draw per-sale model offsets $\kappa_{M_j}\sim \mathrm{Unif}[1,5]$ to represent
deployment and serving overheads amortized per sale.
Second, buyer willingness to pay spans a wide band; we draw reserves $R_{B_k\to M_j}\sim \mathrm{Unif}[25,100]$
so that some instances are tight (non-trivial feasibility) while markets do not collapse to all-accept/all-reject regimes.
We set a uniform producer margin $\delta_{M_j}=0.10$ (10\% markup) as a conservative baseline, and use $\rho_j=0.6$
so that the aggregated demand signal $W_j=\sum_k \omega_{jk}p_{B_k\to M_j}$ remains on the scale of a \emph{single} sale
(avoiding double counting across multiple buyers).
Finally, we initialize data-side prices using sellers' list prices $C_{D\to M}\sim \mathrm{Unif}[1.5,4.0]$ and set initial buyer prices to offsets $\kappa_M$,
then run TripleWin to a tight fixed-point tolerance ($10^{-10}$), ensuring that reported outcomes reflect the equilibrium of the coupled operator.

\begin{figure}
    \centering
    \includegraphics[width=1.0\linewidth]{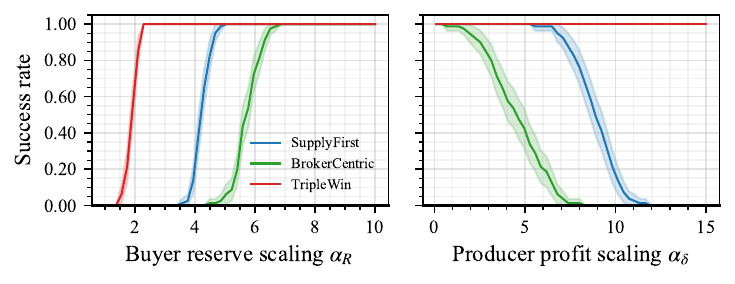}
\caption{Market success rate under symmetric stress tests.
Left: buyer–reserve scaling~$\alpha_R$; right: producer–profit scaling~$\alpha_\delta$.
Curves show the fraction of transactions that remain within acceptance sets
($p_{B_k\to M_j}\!\le\!R_{B_k\to M_j}$ and $p_{D_i\to M_j}\!\ge\!\kappa_{D_i}$)
as the stress parameters vary.
\textsc{TripleWin} (red) sustains near–unit success under tighter reserves
and higher margins, outperforming the SupplyFirst (blue) and BrokerCentric (green) baselines.}
    \label{fig:success}
\end{figure}

\begin{figure}
    \centering
    \includegraphics[width=\linewidth]{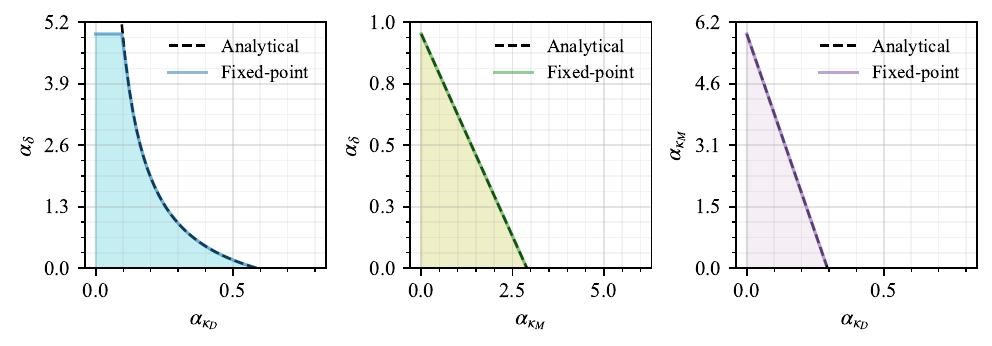}
\caption{Feasible–region envelopes comparing analytic frontiers (dashed)
with numerical \textsc{TripleWin} fixed–point traces (solid).
Panels from left to right show: $(\kappa_D,\delta)$,
$(\kappa_M,\delta)$, and $(\kappa_D,\kappa_M)$ at fixed~$\delta$.
Shaded regions indicate feasible combinations of offsets and margins.
The strong overlap between dashed and solid curves confirms that the numerical equilibria
reach the theoretical feasibility boundary predicted by the envelope formulas.}
    \label{fig:feasibility}
\end{figure}

\subsection{Main benchmark: multi-objective performance and market maps}
\label{sec:exp-main}

\paragraph{Overall comparison (Table~\ref{tab:performance_comparison_no_cat}).}
Table~\ref{tab:performance_comparison_no_cat} summarizes the end-to-end performance of all methods.
TripleWin achieves the \emph{highest} end-to-end transaction success ($98.26\%\pm 4.23\%$) while maintaining strong
platform profit ($149.9\pm 29.5$) and high social welfare ($516.9\pm 91.5$).
Importantly, TripleWin delivers the best fairness alignment (Spearman $\rho=0.90\pm 0.04$), indicating that the realized
data-side revenue shares closely track Shapley contributions.
This alignment translates to superior robustness: TripleWin attains the highest noise resilience ($92.06\%\pm 1.89\%$),
substantially exceeding broker-centric and one-sided pipelines.

In contrast, one-sided pipelines exhibit systematic bias.
DemandFirst pushes prices toward buyers (very low Avg.~buyer price), yielding large buyer surplus but under-compensating sellers
and even negative platform profit, which signals an unsustainable cross-subsidy.
SupplyFirst and BrokerCentric pass high training costs to buyers, resulting in low success rates and near-zero fairness alignment.
Learning-based baselines (VAP and CMAB-HS) attain high buyer surplus on accepted trades but suffer from low end-to-end success,
reflecting that partial/unstable upstream compensation can prevent deliverability in a coupled market.
Dealer strongly favors data suppliers (high seller surplus and high Avg.~seller price) but is less contribution-aligned and less robust than TripleWin. Dealer can report negative platform profit because, consistent with the paper's neutral-broker assumption that passes model-sale revenue through to data owners, the platform retains no margin to cover our serving/operational overheads, yielding a net deficit.

Figure~\ref{fig:baseline_comparison} provides a multi-objective visualization of Table~\ref{tab:performance_comparison_no_cat}.
\emph{Left (surplus incidence):} TripleWin sits near the middle seller-share contours, indicating a balanced incidence between buyer and seller surplus,
while still achieving high platform profit (color).
DemandFirst concentrates nearly all gains on buyers; Dealer shifts gains heavily toward sellers via high data compensation;
SupplyFirst and BrokerCentric remain skewed and clear fewer trades.
\emph{Middle (fairness vs.\ success):} TripleWin is closest to the ideal upper-right region (high $\rho$ and high success rate),
while VAP exhibits negative $\rho$ (misaligned revenue shares) and CMAB-HS suffers very low end-to-end success.
\emph{Right (robustness vs.\ fairness):} under a downward demand shock, TripleWin dominates the robustness--fairness frontier,
showing that closing the bidirectional loop improves both contribution alignment and stability under demand uncertainty.

\subsection{Ablation studies: what drives TripleWin's gains?}
\label{sec:exp-ablation}

For clarity, the following ablations focus on the three canonical competitors that isolate \emph{where the loop is broken}:
SupplyFirst (supply-driven), DemandFirst (demand-driven), and BrokerCentric (platform-led), compared against TripleWin.
Learning- and privacy-based baselines are evaluated in the main benchmark (Table~\ref{tab:performance_comparison_no_cat}, Figure~\ref{fig:baseline_comparison}).

\subsubsection{Ablation A: effect of demand aggregation strength $\rho$}
\label{sec:abl-rho}
Figure~\ref{fig:fairness} varies the total buyer weight $\rho\in\{0.4,0.6,0.8,0.99\}$ and plots, for all dataset--model edges,
the realized data-revenue share $p_{D_i\to M_j}/\sum_i p_{D_i\to M_j}$ against the corresponding Shapley contribution $\mathrm{SV}_{i|j}$.
As $\rho$ increases, TripleWin's point cloud tightens around the $45^\circ$ line and the mean per-model Spearman correlation rises toward one.
This confirms the mechanism's intended behavior: stronger aggregated willingness-to-pay is transmitted upstream and split
\emph{proportionally} to Shapley contributions.
By contrast, one-sided baselines remain weakly correlated because the missing feedback loop attenuates demand signals before they reach data suppliers.

\subsubsection{Ablation B: convergence dynamics of price propagation}
\label{sec:abl-prop}
Figure~\ref{fig:propagation} illustrates how buyer-side surplus and seller-side profit evolve across propagation stages:
(initial market) $\rightarrow$ (one downstream/upstream pass) $\rightarrow$ (five rounds) $\rightarrow$ (convergence).
Remark that each subpanel reports \emph{normalized} values to emphasize the \emph{shape and stability} of incidence,
so absolute levels are intentionally compressed (Table~\ref{tab:performance_comparison_no_cat} and Figure~\ref{fig:baseline_comparison} capture magnitude differences).
Under single-pass mechanisms, incidence remains skewed toward one side and does not stabilize under repeated propagation.
In contrast, TripleWin converges to co-located and stable distributions for both payoffs, which is the empirical footprint of a unique
tri-sided fixed point (guaranteed by the SIF property).
Practically, this means that running the coupled update to convergence yields predictable and repeatable incidence,
rather than depending on the arbitrary ordering of one-shot downstream/upstream passes.

\subsubsection{Ablation C: stress tests on reserves and margins}
\label{sec:abl-stress}
Figure~\ref{fig:success} applies symmetric stress tests by scaling (i) buyer reserves via $\alpha_R$ and (ii) producer margins via $\alpha_\delta$,
and reports the market success rate (fraction of trades inside both acceptance sets).
TripleWin maintains near-unit success under substantially tighter reserves and larger margins than SupplyFirst and BrokerCentric.
Mechanistically, closing the loop distributes shocks across both layers:
tightening reserves contracts demand, which reduces upstream data payments via Shapley allocation and keeps models purchasable;
increasing margins raises buyer prices but also increases the effective training-time signal transmitted to datasets,
helping preserve deliverability.

\subsection{Feasibility envelopes}
\label{sec:exp-envelope}
Finally, Figure~\ref{fig:feasibility} compares the analytic feasibility envelopes derived in Section~IV-D with numerical fixed-point traces.
Across all parameter slices, the numerical frontier closely tracks the analytic boundary, confirming that the fixed-point computation
reaches the theoretical feasibility limit and that the feasible region is downward-closed as predicted.

\section{Conclusion}
\label{sec:conclusion}
We introduced a data–model coupled market in which per–use payments from data sellers to model producers and per–sale bids from buyers to model producers are formed simultaneously through two quotations that close the loop between the supply and demand layers. The resulting operator satisfies the SIF properties, which gives a unique market equilibrium and global convergence of the \textsc{TripleWin} iteration from any nonnegative initialization. The analysis provides transparent comparative statics in model–side and data–side offsets and yields closed-form feasible-region envelopes. These guarantees establish a simple and symmetric clearing mechanism that operates directly at the dataset–model edge and at buyer-specific model prices.

Several extensions are natural and left for future work. Strategic behavior and dynamic participation could be modeled explicitly, Shapley shares and demand weights could be learned from noisy signals at scale, and additional constraints such as privacy costs, platform fees, or fairness targets could be incorporated without breaking the fixed-point structure. 


\bibliographystyle{ieeetr}
\bibliography{ref}

@misc{modelplace,
  author       = {Modelplace Inc},
  title = {Modelplace: The Marketplace for AI Models},
  howpublished = {\url{https://modelplace.ai/models}},
  year         = {2025}
}

@misc{datarade,
  author       = {Datarade Inc},
  title = {Datarade: Find the right data, effortlessly},
  howpublished = {\url{https://datarade.ai/}},
  year         = {2025}
}

@inproceedings{DBLP:conf/ec/AgarwalDS19,
  author       = {Anish Agarwal and
                  Munther A. Dahleh and
                  Tuhin Sarkar},
  title        = {A Marketplace for Data: An Algorithmic Solution},
  booktitle    = {Proceedings of the 2019 {ACM} Conference on Economics and Computation~(EC)},
  pages        = {701--726},
  publisher    = {{ACM}},
  year         = {2019}
}

@inproceedings{DBLP:conf/ant/BatainehMBB16,
  author       = {Ahmed Saleh Bataineh and
                  Rabeb Mizouni and
                  May El Barachi and
                  Jamal Bentahar},
  title        = {Monetizing Personal Data: {A} Two-Sided Market Approach},
  booktitle    = {Proceedings of the 7th International Conference on Ambient Systems, Networks and Technologies~(ANT)},
  pages        = {472--479},
  publisher    = {Elsevier},
  year         = {2016}
}

@inproceedings{DBLP:conf/icml/ChenLX22,
  author       = {Junjie Chen and
                  Minming Li and
                  Haifeng Xu},
  title        = {Selling Data To a Machine Learner: Pricing via Costly Signaling},
  booktitle    = {Proceedings of International Conference on Machine Learning~(ICML)},
  pages        = {3336--3359},
  publisher    = {{PMLR}},
  year         = {2022}
}

@inproceedings{DBLP:conf/sigmod/ChenK019,
  author       = {Lingjiao Chen and
                  Paraschos Koutris and
                  Arun Kumar},
  title        = {Towards Model-based Pricing for Machine Learning in a Data Marketplace},
  booktitle    = {Proceedings of the 2019 International Conference on Management of Data~(SIGMOD)},
  pages        = {1535--1552},
  publisher    = {{ACM}},
  year         = {2019}
}

@article{DBLP:journals/iet-com/HaddadiG16,
  author       = {Shima Haddadi and
                  Abdorasoul Ghasemi},
  title        = {Pricing-based Stackelberg game for spectrum trading in self-organised
                  heterogeneous networks},
  journal      = {IET Communications},
  volume       = {10},
  number       = {11},
  pages        = {1374--1383},
  year         = {2016}
}

@article{DBLP:journals/pvldb/LiuLL0PS21,
  author       = {Jinfei Liu and
                  Jian Lou and
                  Junxu Liu and
                  Li Xiong and
                  Jian Pei and
                  Jimeng Sun},
  title        = {Dealer: An End-to-End Model Marketplace with Differential Privacy},
  journal      = {Proceedings of the VLDB Endowment},
  volume       = {14},
  number       = {6},
  pages        = {957--969},
  year         = {2021}
}

@article{DBLP:journals/tkde/Pei22,
  author       = {Jian Pei},
  title        = {A Survey on Data Pricing: From Economics to Data Science},
  journal      = {IEEE Transactions on Knowledge and Data Engineering},
  volume       = {34},
  number       = {10},
  pages        = {4586--4608},
  year         = {2022}
}

@inproceedings{DBLP:conf/aciids/StahlV16,
  author       = {Florian Stahl and
                  Gottfried Vossen},
  title        = {Data Quality Scores for Pricing on Data Marketplaces},
  booktitle    = {Proceedings of 8th Asian Conference on Intelligent Information and Database Systems},
  pages        = {215--224},
  publisher    = {Springer},
  year         = {2016}
}

@inproceedings{DBLP:conf/infocom/SunCLH22,
  author       = {Peng Sun and
                  Xu Chen and
                  Guocheng Liao and
                  Jianwei Huang},
  title        = {A Profit-Maximizing Model Marketplace with Differentially Private
                  Federated Learning},
  booktitle    = {Proceedings of 2022 {IEEE} Conference on Computer Communications~(INFOCOM)},
  pages        = {1439--1448},
  publisher    = {{IEEE}},
  year         = {2022}
}

@article{DBLP:journals/iotj/XiaoHD21,
  author       = {Zheng Xiao and
                  Dan He and
                  Jiayi Du},
  title        = {A Stackelberg Game Pricing Through Balancing Trilateral Profits in
                  Big Data Market},
  journal      = {IEEE Internet of Things Journal},
  volume       = {8},
  number       = {16},
  pages        = {12658--12668},
  year         = {2021}
}

@inproceedings{DBLP:conf/infocom/XuZWC22,
  author       = {Anran Xu and
                  Zhenzhe Zheng and
                  Fan Wu and
                  Guihai Chen},
  title        = {Online Data Valuation and Pricing for Machine Learning Tasks in Mobile
                  Health},
  booktitle    = {Proceedings of 2022 {IEEE} Conference on Computer Communications~(INFOCOM)},
  pages        = {850--859},
  publisher    = {{IEEE}},
  year         = {2022}
}

@article{DBLP:journals/kais/CongLPZZ22,
  author       = {Zicun Cong and
                  Xuan Luo and
                  Jian Pei and
                  Feida Zhu and
                  Yong Zhang},
  title        = {Data pricing in machine learning pipelines},
  journal      = {Knowledge and Information Systems},
  volume       = {64},
  number       = {6},
  pages        = {1417--1455},
  year         = {2022}
}

@article{DBLP:journals/dga/HelmesS15,
  author       = {Kurt Helmes and
                  Rainer Schlosser},
  title        = {Oligopoly Pricing and Advertising in Isoelastic Adoption Models},
  journal      = {Dynamic Games and Applications },
  volume       = {5},
  number       = {3},
  pages        = {334--360},
  year         = {2015}
}

@inproceedings{DBLP:conf/aistats/JiaDWHHGLZSS19,
  author       = {Ruoxi Jia and
                  David Dao and
                  Boxin Wang and
                  Frances Ann Hubis and
                  Nick Hynes and
                  Nezihe Merve G{\"{u}}rel and
                  Bo Li and
                  Ce Zhang and
                  Dawn Song and
                  Costas J. Spanos},
  title        = {Towards Efficient Data Valuation Based on the Shapley Value},
  booktitle    = {Proceedings of the 22nd International Conference on Artificial Intelligence and Statistics~(AISTATS)},
  pages        = {1167--1176},
  publisher    = {{PMLR}},
  year         = {2019}
}

@article{DBLP:journals/tifs/ChristensenPP23,
  author       = {Ren{\'{e}} B{\o}dker Christensen and
                  Shashi Raj Pandey and
                  Petar Popovski},
  title        = {Semi-Private Computation of Data Similarity With Applications to Data
                  Valuation and Pricing},
  journal      = {{IEEE} Trans. Inf. Forensics Secur.},
  volume       = {18},
  pages        = {1978--1988},
  year         = {2023}
}

@article{DBLP:journals/tkde/MiaoGCPYL22,
  author       = {Xiaoye Miao and
                  Yunjun Gao and
                  Lu Chen and
                  Huanhuan Peng and
                  Jianwei Yin and
                  Qing Li},
  title        = {Towards Query Pricing on Incomplete Data},
  journal      = {{IEEE} Trans. Knowl. Data Eng.},
  volume       = {34},
  number       = {8},
  pages        = {4024--4036},
  year         = {2022}
}

@article{DBLP:journals/compsec/ShenGSDDZZJ22,
  author       = {Yuncheng Shen and
                  Bing Guo and
                  Yan Shen and
                  Xuliang Duan and
                  Xiangqian Dong and
                  Hong Zhang and
                  Chuanwu Zhang and
                  Yuming Jiang},
  title        = {Personal big data pricing method based on differential privacy},
  journal      = {Comput. Secur.},
  volume       = {113},
  pages        = {102529},
  year         = {2022}
}

@article{yates2002framework,
  title={A framework for uplink power control in cellular radio systems},
  author={Yates, Roy D.},
  journal={IEEE Journal on selected areas in communications},
  volume={13},
  number={7},
  pages={1341--1347},
  year={2002},
  publisher={IEEE}
}

@inproceedings{DBLP:conf/kdd/WenFWHX0HWJ25,
  author       = {Zhenyu Wen and
                  Wanglei Feng and
                  Di Wu and
                  Haozhen Hu and
                  Chang Xu and
                  Bin Qian and
                  Zhen Hong and
                  Cong Wang and
                  Shouling Ji},
  title        = {FLMarket: Enabling Privacy-preserved Pre-training Data Pricing for
                  Federated Learning},
  booktitle    = {Proceedings of the 31st {ACM} {SIGKDD} Conference on Knowledge Discovery
                  and Data Mining, V.1, {KDD} 2025, Toronto, ON, Canada, August 3-7,
                  2025},
  pages        = {1587--1598},
  publisher    = {{ACM}},
  year         = {2025},
  url          = {https://doi.org/10.1145/3690624.3709346},
  doi          = {10.1145/3690624.3709346},
  timestamp    = {Thu, 09 Oct 2025 10:49:56 +0200},
  biburl       = {https://dblp.org/rec/conf/kdd/WenFWHX0HWJ25.bib},
  bibsource    = {dblp computer science bibliography, https://dblp.org}
}

@inproceedings{DBLP:conf/icde/BiLZZ0024,
  author       = {Yuran Bi and
                  Jinfei Liu and
                  Chen Zhao and
                  Junyi Zhao and
                  Kui Ren and
                  Li Xiong},
  title        = {Share: Stackelberg-Nash based Data Markets},
  booktitle    = {40th {IEEE} International Conference on Data Engineering, {ICDE} 2024,
                  Utrecht, The Netherlands, May 13-16, 2024},
  pages        = {3573--3586},
  publisher    = {{IEEE}},
  year         = {2024},
  url          = {https://doi.org/10.1109/ICDE60146.2024.00275},
  doi          = {10.1109/ICDE60146.2024.00275},
  timestamp    = {Mon, 29 Jul 2024 16:48:43 +0200},
  biburl       = {https://dblp.org/rec/conf/icde/BiLZZ0024.bib},
  bibsource    = {dblp computer science bibliography, https://dblp.org}
}

@article{DBLP:journals/tifs/LuHCYJL25,
  author       = {Jianfeng Lu and
                  Tao Huang and
                  Shuqin Cao and
                  Shujun Yu and
                  Riheng Jia and
                  Minglu Li},
  title        = {{PIECE:} Incentivizing Personalized Privacy-Preserving for Multi-Version
                  Model Marketplace in Federated Learning},
  journal      = {{IEEE} Trans. Inf. Forensics Secur.},
  volume       = {20},
  pages        = {9387--9398},
  year         = {2025},
  url          = {https://doi.org/10.1109/TIFS.2025.3602314},
  doi          = {10.1109/TIFS.2025.3602314},
  timestamp    = {Wed, 15 Oct 2025 19:22:17 +0200},
  biburl       = {https://dblp.org/rec/journals/tifs/LuHCYJL25.bib},
  bibsource    = {dblp computer science bibliography, https://dblp.org}
}

@article{DBLP:journals/jsac/PanSWLLB25,
  author       = {Yanghe Pan and
                  Zhou Su and
                  Yuntao Wang and
                  Han Liu and
                  Ruidong Li and
                  Abderrahim Benslimane},
  title        = {Knowledge-Aware Privacy-Preserving Model Customization in Zero-Trust
                  Federated Learning Model Marketplaces},
  journal      = {{IEEE} J. Sel. Areas Commun.},
  volume       = {43},
  number       = {6},
  pages        = {1923--1937},
  year         = {2025},
  url          = {https://doi.org/10.1109/JSAC.2025.3560010},
  doi          = {10.1109/JSAC.2025.3560010},
  timestamp    = {Sun, 06 Jul 2025 13:22:38 +0200},
  biburl       = {https://dblp.org/rec/journals/jsac/PanSWLLB25.bib},
  bibsource    = {dblp computer science bibliography, https://dblp.org}
}

@inproceedings{DBLP:conf/icde/PengMFZDY25,
  author       = {Huanhuan Peng and
                  Xiaoye Miao and
                  Yicheng Fu and
                  Jinshan Zhang and
                  Shuiguang Deng and
                  Jianwei Yin},
  title        = {On Scalable Query Pricing in Data Marketplaces},
  booktitle    = {41st {IEEE} International Conference on Data Engineering, {ICDE} 2025,
                  Hong Kong, May 19-23, 2025},
  pages        = {3140--3152},
  publisher    = {{IEEE}},
  year         = {2025},
  url          = {https://doi.org/10.1109/ICDE65448.2025.00235},
  doi          = {10.1109/ICDE65448.2025.00235},
  timestamp    = {Fri, 05 Sep 2025 21:24:36 +0200},
  biburl       = {https://dblp.org/rec/conf/icde/PengMFZDY25.bib},
  bibsource    = {dblp computer science bibliography, https://dblp.org}
}

@inproceedings{DBLP:conf/icde/FuMPNDY24,
  author       = {Yicheng Fu and
                  Xiaoye Miao and
                  Huanhuan Peng and
                  Chongning Na and
                  Shuiguang Deng and
                  Jianwei Yin},
  title        = {Online Query-Based Data Pricing with Time-Discounting Valuations},
  booktitle    = {40th {IEEE} International Conference on Data Engineering, {ICDE} 2024,
                  Utrecht, The Netherlands, May 13-16, 2024},
  pages        = {3449--3461},
  publisher    = {{IEEE}},
  year         = {2024},
  url          = {https://doi.org/10.1109/ICDE60146.2024.00266},
  doi          = {10.1109/ICDE60146.2024.00266},
  timestamp    = {Tue, 01 Apr 2025 19:07:32 +0200},
  biburl       = {https://dblp.org/rec/conf/icde/FuMPNDY24.bib},
  bibsource    = {dblp computer science bibliography, https://dblp.org}
}

@article{DBLP:journals/pvldb/ZhuZZLR24,
  author       = {Yiding Zhu and
                  Hongwei Zhang and
                  Jiayao Zhang and
                  Jinfei Liu and
                  Kui Ren},
  title        = {DataPrice: An Interactive System for Pricing Datasets in Data Marketplaces},
  journal      = {Proc. {VLDB} Endow.},
  volume       = {17},
  number       = {12},
  pages        = {4433--4436},
  year         = {2024},
  url          = {https://www.vldb.org/pvldb/vol17/p4433-liu.pdf},
  doi          = {10.14778/3685800.3685893},
  timestamp    = {Thu, 19 Dec 2024 16:40:20 +0100},
  biburl       = {https://dblp.org/rec/journals/pvldb/ZhuZZLR24.bib},
  bibsource    = {dblp computer science bibliography, https://dblp.org}
}

@article{DBLP:journals/tdsc/LuZR24,
  author       = {Tianpei Lu and
                  Bingsheng Zhang and
                  Kui Ren},
  title        = {PrivData Network: {A} Privacy-Preserving On-Chain Data Factory and
                  Trading Market},
  journal      = {{IEEE} Trans. Dependable Secur. Comput.},
  volume       = {21},
  number       = {3},
  pages        = {1424--1436},
  year         = {2024},
  url          = {https://doi.org/10.1109/TDSC.2023.3284565},
  doi          = {10.1109/TDSC.2023.3284565},
  timestamp    = {Fri, 31 May 2024 21:06:04 +0200},
  biburl       = {https://dblp.org/rec/journals/tdsc/LuZR24.bib},
  bibsource    = {dblp computer science bibliography, https://dblp.org}
}

@article{DBLP:journals/tmc/ZhengMXW24,
  author       = {Zhenzhe Zheng and
                  Weichao Mao and
                  Yidan Xing and
                  Fan Wu},
  title        = {On Designing Market Model and Pricing Mechanisms for IoT Data Exchange},
  journal      = {{IEEE} Trans. Mob. Comput.},
  volume       = {23},
  number       = {11},
  pages        = {10202--10218},
  year         = {2024},
  url          = {https://doi.org/10.1109/TMC.2024.3373811},
  doi          = {10.1109/TMC.2024.3373811},
  timestamp    = {Thu, 28 Aug 2025 21:04:06 +0200},
  biburl       = {https://dblp.org/rec/journals/tmc/ZhengMXW24.bib},
  bibsource    = {dblp computer science bibliography, https://dblp.org}
}

@article{DBLP:journals/ton/SunLCH24,
  author       = {Peng Sun and
                  Guocheng Liao and
                  Xu Chen and
                  Jianwei Huang},
  title        = {A Socially Optimal Data Marketplace With Differentially Private Federated
                  Learning},
  journal      = {{IEEE/ACM} Trans. Netw.},
  volume       = {32},
  number       = {3},
  pages        = {2221--2236},
  year         = {2024},
  url          = {https://doi.org/10.1109/TNET.2024.3351864},
  doi          = {10.1109/TNET.2024.3351864},
  timestamp    = {Sun, 19 Jan 2025 13:55:28 +0100},
  biburl       = {https://dblp.org/rec/journals/ton/SunLCH24.bib},
  bibsource    = {dblp computer science bibliography, https://dblp.org}
}

@inproceedings{DBLP:conf/apsipa/HuangCSWCYCL23,
  author       = {Chung{-}I Huang and
                  Jih{-}Sheng Chang and
                  Chen{-}Kai Sun and
                  Taichi Wang and
                  Wei{-}Yu Chen and
                  Hui Hung Yu and
                  Wen{-}Yi Chang and
                  Fang{-}Pang Lin},
  title        = {A Framework for Reusing Earth Science Data on Data and Model Marketplaces},
  booktitle    = {Asia Pacific Signal and Information Processing Association Annual
                  Summit and Conference, {APSIPA} {ASC} 2023, Taipei, Taiwan, October
                  31 - Nov. 3, 2023},
  pages        = {2256--2260},
  publisher    = {{IEEE}},
  year         = {2023},
  url          = {https://doi.org/10.1109/APSIPAASC58517.2023.10317333},
  doi          = {10.1109/APSIPAASC58517.2023.10317333},
  timestamp    = {Sat, 02 Dec 2023 14:05:45 +0100},
  biburl       = {https://dblp.org/rec/conf/apsipa/HuangCSWCYCL23.bib},
  bibsource    = {dblp computer science bibliography, https://dblp.org}
}

@article{DBLP:journals/tdsc/WengWCHW22,
  author       = {Jiasi Weng and
                  Jian Weng and
                  Chengjun Cai and
                  Hongwei Huang and
                  Cong Wang},
  title        = {Golden Grain: Building a Secure and Decentralized Model Marketplace
                  for MLaaS},
  journal      = {{IEEE} Trans. Dependable Secur. Comput.},
  volume       = {19},
  number       = {5},
  pages        = {3149--3167},
  year         = {2022},
  url          = {https://doi.org/10.1109/TDSC.2021.3085988},
  doi          = {10.1109/TDSC.2021.3085988},
  timestamp    = {Sun, 10 Dec 2023 17:00:38 +0100},
  biburl       = {https://dblp.org/rec/journals/tdsc/WengWCHW22.bib},
  bibsource    = {dblp computer science bibliography, https://dblp.org}
}

@article{DBLP:journals/fi/QianMBGLY25,
  author       = {Mian Qian and
                  Abubakar Ahmad Musa and
                  Milon Biswas and
                  Yifan Guo and
                  Weixian Liao and
                  Wei Yu},
  title        = {Survey of Artificial Intelligence Model Marketplace},
  journal      = {Future Internet},
  volume       = {17},
  number       = {1},
  pages        = {35},
  year         = {2025},
  url          = {https://doi.org/10.3390/fi17010035},
  doi          = {10.3390/FI17010035},
  timestamp    = {Thu, 07 Aug 2025 23:01:45 +0200},
  biburl       = {https://dblp.org/rec/journals/fi/QianMBGLY25.bib},
  bibsource    = {dblp computer science bibliography, https://dblp.org}
}

@article{DBLP:journals/corr/abs-2504-04794,
  author       = {Nishant Jagannath and
                  Christopher Wong and
                  Braden McGrath and
                  Md. Farhad Hossain and
                  Asuquo A. Okon and
                  Abbas Jamalipour and
                  Kumudu S. Munasinghe},
  title        = {Enhancing Trust in {AI} Marketplaces: Evaluating On-Chain Verification
                  of Personalized {AI} models using zk-SNARKs},
  journal      = {CoRR},
  volume       = {abs/2504.04794},
  year         = {2025},
  url          = {https://doi.org/10.48550/arXiv.2504.04794},
  doi          = {10.48550/ARXIV.2504.04794},
  eprinttype    = {arXiv},
  eprint       = {2504.04794},
  timestamp    = {Mon, 19 May 2025 13:58:51 +0200},
  biburl       = {https://dblp.org/rec/journals/corr/abs-2504-04794.bib},
  bibsource    = {dblp computer science bibliography, https://dblp.org}
}

@article{DBLP:journals/tmc/TianWZDLZ25,
  author       = {Fengsen Tian and
                  Mingzi Wang and
                  Yu Zhang and
                  Guoqiang Deng and
                  Lingyu Liang and
                  Xinglin Zhang},
  title        = {A Pricing Game for Federated Learning Supporting Lightweight Local
                  Model Training},
  journal      = {{IEEE} Trans. Mob. Comput.},
  volume       = {24},
  number       = {11},
  pages        = {12264--12281},
  year         = {2025},
  url          = {https://doi.org/10.1109/TMC.2025.3583013},
  doi          = {10.1109/TMC.2025.3583013},
  timestamp    = {Wed, 15 Oct 2025 19:22:24 +0200},
  biburl       = {https://dblp.org/rec/journals/tmc/TianWZDLZ25.bib},
  bibsource    = {dblp computer science bibliography, https://dblp.org}
}

@inproceedings{DBLP:conf/cscwd/GaoWGZJYY25,
  author       = {Tianle Gao and
                  Shihua Wang and
                  Xueliang Geng and
                  Li Zhang and
                  Ming Jing and
                  Tiangui Yu and
                  Jiguo Yu},
  editor       = {Weiming Shen and
                  Marie{-}H{\'{e}}l{\`{e}}ne Abel and
                  Nada Matta and
                  Jean{-}Paul A. Barth{\`{e}}s and
                  Junzhou Luo and
                  Jinghui Zhang and
                  Haibin Zhu and
                  Kunkun Peng},
  title        = {A Stackelberg Game Pricing for Blockchain-Based Industrial Internet
                  of Things Data Market},
  booktitle    = {28th International Conference on Computer Supported Cooperative Work
                  in Design, {CSCWD} 2025, Compiegne, France, May 5-7, 2025},
  pages        = {1776--1781},
  publisher    = {{IEEE}},
  year         = {2025},
  url          = {https://doi.org/10.1109/CSCWD64889.2025.11033563},
  doi          = {10.1109/CSCWD64889.2025.11033563},
  timestamp    = {Fri, 11 Jul 2025 07:46:05 +0200},
  biburl       = {https://dblp.org/rec/conf/cscwd/GaoWGZJYY25.bib},
  bibsource    = {dblp computer science bibliography, https://dblp.org}
}

@article{DBLP:journals/iotj/WangT24,
  author       = {Su Wang and
                  Danny H. K. Tsang},
  title        = {A Dynamic Data Trading Marketplace With Externalities},
  journal      = {{IEEE} Internet Things J.},
  volume       = {11},
  number       = {7},
  pages        = {12745--12754},
  year         = {2024},
  url          = {https://doi.org/10.1109/JIOT.2023.3338270},
  doi          = {10.1109/JIOT.2023.3338270},
  timestamp    = {Mon, 15 Apr 2024 08:25:58 +0200},
  biburl       = {https://dblp.org/rec/journals/iotj/WangT24.bib},
  bibsource    = {dblp computer science bibliography, https://dblp.org}
}

@article{DBLP:journals/jcst/LiuZWC24,
  author       = {Yangsu Liu and
                  Zhenzhe Zheng and
                  Fan Wu and
                  Guihai Chen},
  title        = {When Crowdsourcing Meets Data Markets: {A} Fair Data Value Metric
                  for Data Trading},
  journal      = {J. Comput. Sci. Technol.},
  volume       = {39},
  number       = {3},
  pages        = {671--690},
  year         = {2024},
  url          = {https://doi.org/10.1007/s11390-023-2519-0},
  doi          = {10.1007/S11390-023-2519-0},
  timestamp    = {Mon, 29 Jul 2024 22:39:45 +0200},
  biburl       = {https://dblp.org/rec/journals/jcst/LiuZWC24.bib},
  bibsource    = {dblp computer science bibliography, https://dblp.org}
}

@article{DBLP:journals/pvldb/PeiF023,
  author       = {Jian Pei and
                  Raul Castro Fernandez and
                  Xiaohui Yu},
  title        = {Data and {AI} Model Markets: Opportunities for Data and Model Sharing,
                  Discovery, and Integration},
  journal      = {Proc. {VLDB} Endow.},
  volume       = {16},
  number       = {12},
  pages        = {3872--3873},
  year         = {2023},
  url          = {https://www.vldb.org/pvldb/vol16/p3872-pei.pdf},
  doi          = {10.14778/3611540.3611573},
  timestamp    = {Fri, 28 Feb 2025 17:23:39 +0100},
  biburl       = {https://dblp.org/rec/journals/pvldb/PeiF023.bib},
  bibsource    = {dblp computer science bibliography, https://dblp.org}
}

@article{DBLP:journals/tmc/XuZLWC24,
  author       = {Anran Xu and
                  Zhenzhe Zheng and
                  Qinya Li and
                  Fan Wu and
                  Guihai Chen},
  title        = {{VAP:} Online Data Valuation and Pricing for Machine Learning Models
                  in Mobile Health},
  journal      = {{IEEE} Trans. Mob. Comput.},
  volume       = {23},
  number       = {5},
  pages        = {5966--5983},
  year         = {2024},
  url          = {https://doi.org/10.1109/TMC.2023.3316145},
  doi          = {10.1109/TMC.2023.3316145},
  timestamp    = {Mon, 15 Apr 2024 08:25:54 +0200},
  biburl       = {https://dblp.org/rec/journals/tmc/XuZLWC24.bib},
  bibsource    = {dblp computer science bibliography, https://dblp.org}
}

@inproceedings{DBLP:conf/icde/AnX0X021,
  author       = {Baoyi An and
                  Mingjun Xiao and
                  An Liu and
                  Xike Xie and
                  Xiaofang Zhou},
  title        = {Crowdsensing Data Trading based on Combinatorial Multi-Armed Bandit
                  and Stackelberg Game},
  booktitle    = {37th {IEEE} International Conference on Data Engineering, {ICDE} 2021,
                  Chania, Greece, April 19-22, 2021},
  pages        = {253--264},
  publisher    = {{IEEE}},
  year         = {2021},
  url          = {https://doi.org/10.1109/ICDE51399.2021.00029},
  doi          = {10.1109/ICDE51399.2021.00029},
  timestamp    = {Fri, 25 Jun 2021 11:31:22 +0200},
  biburl       = {https://dblp.org/rec/conf/icde/AnX0X021.bib},
  bibsource    = {dblp computer science bibliography, https://dblp.org}
}

\end{document}